\setlist[enumerate]{noitemsep, topsep=0.5\topsep}
\setlist[description]{noitemsep, topsep=0.5\topsep}
\theoremstyle{plain}
\newtheorem{theorem}{Theorem}[section]
\newtheorem{proposition}[theorem]{Proposition}
\newtheorem{lemma}[theorem]{Lemma}
\newtheorem{corollary}[theorem]{Corollary}
\theoremstyle{definition}
\newtheorem{remark}[theorem]{Remark}
\newtheorem{definition}[theorem]{Definition}
\let\oldnl\nl
\newcommand{\nonl}{\renewcommand{\nl}{\let\nl\oldnl}}
\newcommand{\Procedure}[2]{\BlankLine\nonl\customProcedure{#1}{#2}}
\newcommand{\ra}[1]{\renewcommand{\arraystretch}{#1}}
\newcommand{\tp}[0]{\top}%
\newcommand{\norm}[1]{\left\lVert#1\right\rVert}
\newcommand{\setST}{\ \middle|\ } 
\newcommand{\sizeV}[1]{|V(#1)|}
\newcommand{\sizeE}[1]{|E(#1)|}
\newcommand{\lab}[0]{\tau}
\newcommand{\wdpg}[0]{\times_{\hspace{-.15em}w}}
\newcommand{\bbR}[0]{\ensuremath{\mathbb{R}}\xspace}
\newcommand{\bbRnn}[0]{\ensuremath{\mathbb{R}_{\geq0}}\xspace}
\newcommand{\bbN}[0]{\ensuremath{\mathbb{N}}\xspace}
\newcommand{\bigO}[0]{\ensuremath{\mathcal{O}}}
\newcommand{\EdgeSet}[1]{\ensuremath{[#1]^2}}
\newcommand{\wlength}[0]{\ensuremath{\ell}}
\newcommand{\hilb}[0]{\ensuremath{\mathcal{H}}\xspace}
\newcommand{\cG}{\ensuremath{{\mathcal G}}\xspace}
\newcommand{\X}{\ensuremath{\mathcal{X}}\xspace}
\DeclarePairedDelimiter\multiset{\lbrace\!\!\lbrace}{\rbrace\!\!\rbrace}%
\DeclareMathOperator{\N}{N}
\DeclareMathOperator{\REF}{ref}
\begin{document}

\title{A Unifying View of Explicit and Implicit Feature Maps of Graph Kernels\thanks{%
A preliminary version of this paper appeared in the proceedings of the IEEE International 
Conference on Data Mining (ICDM) in 2014~\citep{Kri+2014}. \newline
This work has been supported by the Deutsche Forschungsgemeinschaft (DFG) within 
the Collaborative Research Center SFB 876 ``Providing Information by 
Resource-Constrained Data Analysis'', project A6 ``Resource-efficient Graph Mining''.}}

\author[1]{Nils M.~Kriege}
\author[2]{Marion Neumann}
\author[1]{Christopher Morris}
\author[3]{Kristian Kersting}
\author[1]{Petra Mutzel}

\affil[1]{Department of Computer Science \authorcr 
TU Dortmund University, Germany \authorcr
\texttt{\{nils.kriege,christopher.morris,petra.mutzel\}@tu-dortmund.de}}

\affil[2]{Department of Computer Science and Engineering \authorcr
Washington University in St.~Louis, USA \authorcr
\texttt{m.neumann@wustl.edu}}

\affil[3]{Computer Science Department and Centre for Cognitive Science \authorcr
TU Darmstadt, Germany \authorcr
\texttt{kersting@cs.tu-darmstadt.de}}

\date{\vspace{-1cm}}

\maketitle

\begin{abstract}
Non-linear kernel methods can be approximated by fast linear ones using suitable 
explicit feature maps allowing their application to large scale problems. 
We investigate how convolution kernels for structured data are composed from 
base kernels and construct corresponding feature maps.
On this basis we propose exact and approximative feature maps for widely used
graph kernels based on the kernel trick.
We analyze for which kernels and graph properties computation by explicit feature 
maps is feasible and actually more efficient. 
In particular, we derive approximative, explicit feature maps for state-of-the-art 
kernels supporting real-valued attributes including the GraphHopper and graph 
invariant kernels.
In extensive experiments we show that our approaches often achieve a classification 
accuracy close to the exact methods based on the kernel trick, but require
only a fraction of their running time.
Moreover, we propose and analyze algorithms for computing random walk, shortest-path
and subgraph matching kernels by explicit and implicit feature maps.
Our theoretical results are confirmed experimentally by observing a 
phase transition when comparing running time with respect to label diversity, 
walk lengths and subgraph size, respectively.
\end{abstract}

\section{Introduction}

Analyzing complex data is becoming more and more important. In numerous application 
domains, e.g., chem- and bioinformatics, neuroscience, or image and social network analysis,
the data is structured and hence can naturally be represented as graphs. 
To achieve successful learning we need to exploit the rich information inherent 
in the graph structure and the annotations of vertices and edges. 
A popular approach to mining structured data is to design graph kernels 
measuring the similarity between pairs of graphs. 
The graph kernel can then be plugged into a kernel machine, such as support 
vector machine or Gaussian process, for efficient learning and prediction.  

The kernel-based approach to predictive graph mining requires a positive semidefinite 
(p.s.d.) kernel function between graphs. Graphs, composed of labeled vertices and edges, 
possibly enriched with continuous attributes, however,  are not fixed-length vectors but 
rather complicated data structures, and thus standard kernels cannot be used. 
Instead, the general strategy to design graph kernels is to decompose graphs
into small substructures among which kernels are defined following the concept 
of convolution kernels due to \citet{Haussler1999}.
The graph kernel itself is then a combination of the kernels between the possibly 
overlapping parts. 
Hence the various graph kernels proposed in the literature mainly differ in the 
way the parts are constructed and in the similarity measure used to compare them.
Moreover, existing graph kernels differ in their ability to exploit annotations,
which may be categorical labels or real-valued attributes on the vertices and edges.

We recall basic facts on kernels, which have decisive implications on 
computational aspects.
A \emph{kernel} on a non-empty set $\X$ is a positive semidefinite function 
$k \colon \X \times \X \to \bbR$.
Equivalently, a function $k$ is a kernel if there is a \emph{feature map} 
$\phi \colon \X \to \hilb$ to a real Hilbert space \hilb with inner product 
$\langle \cdot, \cdot \rangle$, such that 
$k(x,y) = \langle \phi(x),\phi(y) \rangle$ for all $x$ and $y$ in $\X$.
This equivalence yields two algorithmic strategies to compute kernels on graphs:
\begin{enumerate}[label=(\roman*)]
 \item One way is functional computation, e.g., from closed-form expressions.
   In this case the feature map is not necessarily known and the feature space 
   may be of infinite dimension. Therefore, we refer to this approach based on 
   the famous kernel trick as \emph{implicit} computation.
 \item The other strategy is to compute the feature map $\phi(G)$ for each graph 
   $G$ \emph{explicitly} to obtain the kernel values from the dot product 
   between pairs of feature vectors. These feature vectors commonly count how often 
   certain substructures occur in a graph.
\end{enumerate}
The used strategy has a crucial effect on the running time of the kernel 
method at the higher level.
Kernel methods supporting implicit kernel computation are often slower than 
linear ones based on explicit feature maps assuming that the feature vectors 
are of a manageable size.
The running time for training support vector machines, for example, is linear in 
the training set size when assuming that the feature vectors have a 
constant number of non-zero components~\citep{Joachims2006}.
For this reason, approximative explicit feature maps of various popular kernels 
for vectorial data have been studied extensively~\citep{Rahimi2008,Vedaldi2012}.
This, however, is not the case for graph kernels, which are typically proposed using one 
method of computation, either implicit or explicit. 
Graph kernels using explicit feature maps essentially transform graphs into 
vectorial data in a preprocessing step. These kernels are scalable, but are
often restricted to graphs with discrete labels.
Unique advantages of the implicit computation are that
\begin{inparaenum}[(i)]
 \item kernels for composed objects can be obtained by combining established kernels on 
  their parts exploiting well-known closure properties of kernels;
 \item the number of possible features may be high---in theory infinite---while 
  the function remains polynomial-time computable.
\end{inparaenum}
Previously proposed graph kernels that are computed implicitly typically support 
specifying arbitrary kernels for vertex annotations, but do not scale to large 
graphs and data sets.
Even when approximative explicit feature maps of the kernel on vertex annotations
are known, it is not clear how to obtain (approximative) feature maps for the 
graph kernel.

\paragraph{Our contribution.}
We study under which conditions the computation of an explicit mapping from graphs 
to a finite-dimensional feature spaces is feasible and efficient. To achieve our 
goal, we discuss feature maps corresponding to closure properties of kernels and
general convolution kernels with a focus on the size and sparsity of their feature 
vectors.
Our theoretical analysis identifies a trade-off between running time and 
flexibility.

Building on the systematic construction of feature maps we obtain new algorithms
for explicit graph kernel computation, which allow to incorporate (approximative)
explicit feature maps of kernels on vertex annotations. Thereby known 
approximation results for kernels on continuous data are lifted to kernels for 
graphs with continuous annotations.
More precisely, we introduce the class of weighted vertex kernels and show that it generalizes
state-of-the-art kernels for graphs with continuous attributes, namely the 
GraphHopper kernel~\citep{Feragen2013} and an instance of the graph invariant 
kernels~\citep{Ors+2015}. We derive explicit feature maps with approximation 
guarantees based on approximative feature maps of the base kernels to compare annotations.
Then, we propose and analyze algorithms for computing fixed length walk kernels
by explicit and implicit feature maps. 
We investigate shortest-path kernels~\citep{Borgwardt2005} and subgraph 
matching kernels~\citep{Kriege2012} and put the related work into the context 
of our systematic study.
Given this, we are finally able to experimentally compare the running times of 
both computation strategies systematically with respect to the label diversity, 
data set size, and substructure size, i.e., walk length and subgraph size. 
As it turns out, there exists a computational phase transition for walk and 
subgraph kernels.
Our experimental results for weighted vertex kernels show that their computation 
by explicit feature maps is feasible and provides a viable alternative even when
comparing graphs with continuous attributes.

\paragraph{Extension of the conference paper.}
The present paper is a significant extension of a previously published conference 
paper~\citep{Kri+2014}. In the following we list the main contributions that 
were not included in the conference version.

\begin{itemize}
	\item \textit{Feature maps of composed kernels.}
We review closure properties of kernels, the corresponding feature maps and the 
size and sparsity of the feature vectors.
Based on this, we obtain explicit feature maps for convolution kernels with 
arbitrary base kernels. 
This generalizes the result of the conference paper, where binary base kernel
were considered. 
	\item \textit{Weighted vertex kernels.}
We introduce weighted vertex kernels for attributed graphs, which generalize
the GraphHopper kernel~\citep{Feragen2013} and graph invariant kernels~\citep{Ors+2015}.
Weighted vertex kernels were not considered in the conference paper.
	\item \textit{Construction of explicit feature maps.}
We derive explicit feature maps for weighted vertex kernels and the shortest-path 
kernel~\citep{Borgwardt2005} supporting base kernels with explicit feature maps 
for the comparison of attributes. We prove approximation guarantees in case of
approximative feature maps of base kernels.
This contribution is not contained in the conference paper, where only the explicit 
computation of the shortest-path kernel for graphs with discrete labels was discussed.
	\item \textit{Fixed length walk kernels.}
We generalize the explicit computation scheme to support arbitrary vertex and 
edge kernels with explicit feature maps for the comparison of attributes.
In the conference paper only binary kernels were considered.
Moreover, we have significantly expanded the section on walk kernels by spelling out 
all proofs, adding illustrative figures and clarifying the relation to the $k$-step 
random walk kernel as defined by \citet{Sug+2015}.
	\item \textit{Experimental evaluation.}
We largely extended our evaluation, which now includes experiments for the novel 
computation schemes of graph kernels as well as a comparison between a graphlet 
kernel and the subgraph matching kernel~\citep{Kriege2012}.
\end{itemize}

\paragraph{Outline.}
In Section~\ref{sec:related-work} we discuss the related work and proceed by 
fixing the notation in Section~\ref{sec:preliminary}. 
In Section~\ref{sec:maps} we review closure properties of kernels and the 
corresponding feature maps. Moreover, we derive feature maps for general 
convolution kernels.
In Section~\ref{sec:graph_maps} we propose algorithms for computing graph kernels 
and systematically construct explicit feature maps building on these results.
We introduce weighted vertex kernels and derive (approximative) explicit feature 
maps. 
We discuss the fixed length walk kernel and propose algorithms for explicit and 
implicit computation. 
Moreover, we discuss the shortest-path graph kernel as well as the graphlet and
subgraph matching kernel regarding explicit and implicit computation. 
Section~\ref{sec:evaluation} presents the results of our experimental evaluation.

\section{Related work}\label{sec:related-work}
In the following we review existing kernels based on explicit or implicit 
computation and discuss embedding techniques for attributed graphs.
We focus on the approaches most relevant for our work and refer the reader
to the survey articles~\citep{Vishwanathan2010,Gho+2018,Kriege2019} for a more 
comprehensive overview.

\subsection{Graph kernels}
Most graph kernels decompose graphs into substructures and count their occurrences 
to obtain a feature vector. The kernel function then counts the co-occurrences of 
features in two graphs by taking the dot product between their feature vectors.
The \emph{graphlet kernel}, for example, counts induced subgraphs of size $k \in \{3,4,5\}$ of unlabeled graphs according to
  $K(G,H) = \mathbf{f}_{G}^\top \mathbf{f}_{H}$,
where $\mathbf{f}_{G}$ and $\mathbf{f}_{H}$ are the subgraph feature vectors of $G$ and $H$,
respectively~\citep{Shervashidze2009}. 
The \emph{cyclic pattern kernel} is based on cycles and trees and maps the graphs 
to substructure indicator features, which are independent of the substructure
frequency~\citep{Horv'ath2004}. The \emph{Weisfeiler-Lehman subtree kernel} counts 
label-based subtree patterns according to
 $K_d(G, H) = \sum_{i = 1}^h K(G_i, H_i)$, 
where $K(G_i, H_i) =  \langle \mathbf{f}^{(i)}_{G}\mathbf{f}^{(i)}_{H} \rangle$ 
and $\mathbf{f}^{(i)}_{G}$ is a feature vector counting subtree-patterns in $G$ of depth 
$i$~\citep{Shervashidze2009a,Shervashidze2011}. 
A subtree-pattern is a tree rooted at a particular vertex where each level contains the 
neighbors of its parent vertex; the same vertices can appear repeatedly. 
Other graph kernels on subtree-patterns have been proposed in the literature, e.g.,~\citep{Ramon2003,Harchaoui2007,Bai+2015,Hido+2009}. 
In a similar spirit, the \emph{propagation kernel}  iteratively counts similar label or 
attribute distributions to create an explicit feature map for efficient kernel
computation~\citep{Neu+2016}. 
\citet{Martino2012} proposed to decompose graphs into multisets of ordered 
directed acyclic graphs, which are compared by extended tree kernels.
While convolution kernels decompose graphs into their parts and sum over all 
pairs, assignment kernels are obtained from an optimal bijection between 
parts~\citep{Frohlich2005}. Since this does not lead to valid kernels in 
general~\citep{Vert2008,Vishwanathan2010}, various approaches to overcome this 
obstacle have been developed~\citep{Johansson2015,Schiavinato2015,Kriege2016b,Nikolentzos2017}.
Several kernels have been proposed with the goal to take graph structure at 
different scales into account, e.g., using $k$-core 
decomposition~\citep{Nikolentzos2018} or spectral properties~\citep{Kondor2016}.
\citet{Yan+2015} combine neural techniques from language modeling with state-of-the-art 
graph kernels in order to incorporate similarities between the individual substructures.
Such similarities were specifically designed for the substrutures used by the 
graphlet and the Weisfeiler-Lehman subtree kernel, among others.
\citet{Narayanan2016} discuss several problems of the proposed approach to obtain 
substructure similarities and introduce \emph{subgraph2vec} to overcome these issues.

Many real-world graphs have continuous attributes such as real-valued vectors attached 
to their vertices and edges. For example, the vertices of a molecular graph may be annotated 
by the physical and chemical properties of the atoms they represent. 
The kernels based on counting co-occurrences described above, however, consider two substructures 
as identical if they match exactly, structure-wise as well as attribute-wise, 
and as completely different otherwise.
For attributed graphs it is desirable to compare annotations by more complex 
similarity measures such as the Gaussian RBF kernel.
The kernels discussed in the following allow user-defined kernels for the comparison of 
vertex and edge attributes. Moreover, they compare graphs in a way that takes 
the interplay between structure and attributes into account and are therefore 
suitable for graphs with continuous attributes. 

\emph{Random walk kernels} add up a score for all pairs of walks that two graphs
have in common, whereas vertex and edge attributes encountered on walks can be compared 
by user-specified kernels.
For random walk kernels implicit computation schemes based on product graphs have been 
proposed. The product graph $G_{\times}$ has a vertex for each pair of vertices in the 
original graphs. Two vertices in the product graph are neighbors if the corresponding 
vertices in the original graphs are both neighbors as well. Product graphs have some 
nice properties making them suitable for the computation of graph kernels. First, the 
adjacency matrix $A_{\times}$ of a product graph is the Kronecker product of the 
adjacency matrices $A$ and $A'$ of the original graphs, i.e.,
$A_{\times} = A \otimes A'$, same holds for the weight matrix $W_{\times}$ 
when employing an edge kernel.
Further, there is a one-to-one correspondence between walks on the product graph 
and simultaneous walks on the original graphs~\citep{Gaertner2003}. 
The \emph{random walk kernel} introduced by~\citet{Vishwanathan2010} is 
now given by
\begin{equation}\label{eq:rw}
  K(G,H) = \sum_{l=0}^{\infty} \mu_l q^\tp_\times W^l_\times p_\times,
\end{equation}
where $p_\times$ and $q_\times$ are starting and stopping probability 
distributions and $\mu_l$ coefficients such that the sum converges.
Several variations of the random walk kernel have been introduced 
in the literature. 
The \emph{geometric random walk kernel} originally introduced by \citet{Gaertner2003} 
counts walks with the same sequence of discrete labels and is a predecessor of
the general formulation presented above.
The description of the random walk kernel by \citet{Kashima2003} is motivated by 
a probabilistic view on kernels and based on the idea of so-called \emph{marginalized 
kernels}. The method was extended to avoid tottering and the efficiency was improved
by label refinement~\citep{Mah'e2004}.
Several methods for computing Eq.~\eqref{eq:rw} were proposed by 
\citet{Vishwanathan2010} achieving different running times
depending on a parameter $k$, which is the number of fixed-point iterations, power 
iterations and the effective rank of $W_\times$, respectively.
The running times to compare graphs with $n$ vertices also depend on the edge labels
of the input graphs and the desired edge kernel.
For unlabeled graphs the running time $\bigO(n^3)$ is achieved and $\bigO(dkn^3)$
for labeled graphs, where $d$ is the size of the label alphabet. The same running time 
is obtained for edge kernels with a $d$-dimensional feature space, while $\bigO(kn^4)$ 
time is required in the infinite case. For sparse graphs $\bigO(kn^2)$ is obtained in 
all cases.
Further improvements of the running time were subsequently obtained by non-exact
algorithms based on low rank approximations~\citep{Kang2012}.
These random walk kernels take all walks without a bound on length into account.
However, in several applications it has been reported that only walks up to a 
certain length have been considered, e.g., for the prediction of protein functions 
\citep{Borgwardt2005a} or image classification \citep{Harchaoui2007}.
This might suggest that it is not necessary or even not beneficial to consider the 
infinite number of possible walks to obtain a satisfying prediction accuracy.
Moreover, the phenomenon of \emph{halting} in random walk kernels has been
studied recently~\citep{Sug+2015}, which refers to the fact that long walks are 
down-weighted such that the kernel is in fact dominated by walks of length 1.

Another substructure used to measure the similarity among graphs are shortest paths.
\citet{Borgwardt2005} proposed the \emph{shortest-path kernel}, which compares
two graphs based on vertex pairs with similar shortest-path lengths.
The \emph{GraphHopper kernel} compares the vertices encountered while hopping 
along shortest paths by a user-specified kernel~\citep{Feragen2013}. 
Similar to the graphlet kernel, the \emph{subgraph matching kernel} compares 
subgraphs of small size, but allows to score mappings between them according to 
vertex and edge kernels~\citep{Kriege2012}. 
Further kernels designed specifically for graphs with continuous attributes 
exist~\citep{Ors+2015,Su+2016,Martino2018}.

\subsection{Embedding techniques for attributed graphs}
Kernels for attributed graphs often allow to specify arbitrary kernels for
comparing attributes and are computed using the kernel trick without generating 
feature vectors.
Moreover, several approaches for computing vector representations for attributed 
graphs have been proposed. These, however, do not allow specifying a function for comparing attributes. 
The similarity measure that is implicitly used to compare attributes is typically not known.
This is the case for recent deep learning approaches as well as for some kernels
proposed for attributed graphs.

\paragraph{Deep learning on graphs.}
Recently, a number of approaches to graph classification based upon neural networks have been proposed. Here a vectorial representation for each vertex is learned iteratively from the vertex annotations of its neighbors using a parameterized (differentiable) neighborhood aggregation function. Eventually, the vector representations for the individual vertices are combined to obtain a vector representation for the graph, e.g., by summation.

The parameters of the aggregation function are learned together with the parameters of the classification or regression algorithm, e.g., a neural network. 
More refined approaches use differential pooling operators based on sorting~\citep{Zhang2018} and soft assignments~\citep{Yin+2018}.
Most of these neural approaches fit into the framework proposed by~\citet{Gil+2017}. Notable instances of this model include \emph{neural fingerprints}~\citep{Duv+2015}, \emph{GraphSAGE}~\citep{Ham+2017}, and the spectral approaches proposed by~\citet{Bru+2014}, \citet{Def+2015} and \citet{Kip+2017}---all of which descend from early work, see, e.g.,~\citep{Mer+2005} and~\citep{Sca+2009}.

These methods show promising results on several graph classification benchmarks, see, e.g.,~\citep{Yin+2018}, as well as in applications such as protein-protein interaction prediction~\citep{Fou+2017}, recommender systems~\citep{Yin+2018a}, and the analysis of quantum interactions in molecules~\citep{Sch+2017}.
A survey of recent advancements can be found in \citep{Ham+2017a}.
With these approaches, the vertex attributes are aggregated for each graph 
and not directly compared between the graphs. Therefore, it is not obvious how the 
similarity of vertex attributes is measured.

\paragraph{Explicit feature maps of kernels for attributed graphs.}
Graph kernels supporting complex annotations typically use implicit computation 
schemes and do not scale well. Whereas graphs with discrete labels are efficiently
compared by graph kernels based on explicit feature maps. 
Kernels limited to graphs with categorical labels can be applied to attributed graphs
by discretization of the continuous attributes, see, e.g.,~\citep{Neu+2016}.
\citet{Morris2016} proposed the \emph{hash graph kernel framework} to obtain efficient 
kernels for graphs with continuous labels from those proposed for discrete ones.
The idea is to iteratively turn continuous attributes into discrete
labels using randomized hash functions. A drawback of the approach is that
so-called \emph{independent $k$-hash families} must be known to guarantee that 
the approach approximates attribute comparisons by the kernel $k$.
In practice locality-sensitive hashing is used, which does not provide this 
guarantee, but still achieves promising results. 
To the best of our knowledge no results on explicit feature maps of kernels for  
graphs with continuous attributes that are compared by a well-defined similarity
measure such as the Gaussian RBF kernel are known.

However, explicit feature maps of kernels for vectorial data have been studied extensively.
Starting with the seminal work by~\citet{Rahimi2008}, explicit feature maps of
various popular kernels have been proposed, cf.~\cite[][and references therein]{Vedaldi2012,Kar2012,Pham2013}.
In this paper, we build on this line of work to obtain kernels for graphs, where 
individual vertices and edges are annotated by vectorial data. In contrast to
the hash graph kernel framework our goal is to lift the known approximation results 
for kernels on vectorial data to kernels for graphs annotated with vectorial data.

\section{Preliminaries}\label{sec:preliminary}
An \emph{(undirected) graph} $G$ is a pair $(V,E)$ with a finite set of \emph{vertices} $V$ and a set of \emph{edges} $E \subseteq \{ \{u,v\} \subseteq V \mid u \neq v \}$. We denote the set of vertices and the set of edges of $G$ by $V(G)$ and $E(G)$, respectively. For ease of notation we denote the edge $\{u,v\}$ in $E(G)$ by $uv$ or $vu$ and the set of all graphs by \cG. 
A graph $G' = (V',E')$ is a \emph{subgraph} of a graph $G=(V,E)$ if 
$V' \subseteq V$ and $E' \subseteq E$. 
The subgraph $G'$ is said to be \emph{induced} if $E' = \{uv \in E \mid u,v \in V' \}$ 
and we write $G' \subseteq G$.
We denote the \emph{neighborhood} of a vertex $v$ in $V(G)$ by 
$\N(v) = \{ u \in V(G) \mid vu \in E(G) \}$. 

A \emph{labeled graph} is a graph $G$ endowed with an \emph{label function} 
$\lab \colon V(G) \to \Sigma$, where $\Sigma$ is a finite alphabet. 
We say that $\lab(v)$ is the \emph{label} of $v$ for $v$ in $V(G)$.
An \emph{attributed graph} is a graph $G$ endowed with a function 
$\lab \colon V(G) \to \bbR^d$, $d \in \bbN^+$, and we say that 
$\lab(v)$ is the \emph{attribute} of $v$. We denote the base kernel for comparing
vertex labels and attributes by $k_V$ and, for short, write $k_V(u,v)$ instead of
$k_V(\lab(u),\lab(v))$. The above definitions directly extend to graphs, where
edges have labels or attributes and we denote the base kernel by $k_E$.
We refer to $k_V$ and $k_E$ as \emph{vertex kernel} and \emph{edge kernel},
respectively, and assume both to take non-negative values only.

Let $\mathsf{T}_k$ be the running time for evaluating a kernel for a pair of 
graphs, $\mathsf{T}_\phi$ for computing a feature vector for a single graph and 
$\mathsf{T}_{\text{dot}}$ for computing the dot product between two feature vectors.
Computing an $n \times n$ matrix with all pairwise kernel values for $n$ graphs
requires 
\begin{inparaenum}[(i)]
  \item time $\bigO(n^2 \mathsf{T}_k)$ using implicit feature maps, and
  \item time $\bigO(n \mathsf{T}_{\phi} + n^2 \mathsf{T}_{\text{dot}})$ using
        explicit feature maps.
\end{inparaenum}
Clearly, explicit computation can only be competitive with implicit computation, 
when the time $\mathsf{T}_{\text{dot}}$ is smaller than $\mathsf{T}_k$.
In this case, however, even a time-consuming feature mapping $\mathsf{T}_\phi$ pays 
off with increasing data set size.
The running time $\mathsf{T}_{\text{dot}}$ depends on the data structure used to 
store feature vectors. 
Since feature vectors for graph kernels often contain many components that are 
zero, we consider sparse data structures, which expose running times depending 
on the number of non-zero components instead of the actual number of all components.
For a vector $v$ in $\bbR^d$, we denote by $\mathsf{nz}(v)$ the set of indices 
of the non-zero components of $v$ and let $\mathsf{nnz}(v) = |\mathsf{nz}(v)|$ 
the number of non-zero components.
Using hash tables the dot product between $\Phi_1$ and $\Phi_2$ can be realized 
in time $\mathsf{T}_{\text{dot}}=\bigO(\min\{\mathsf{nnz}(\Phi_1), \mathsf{nnz}(\Phi_2)\})$
in the average case.

\section{Basic kernels, composed kernels and their feature maps}\label{sec:maps}
Graph kernels, in particular those supporting user-specified kernels for annotations,
typically employ closure properties. This allows to decompose graphs into parts 
that are eventually the annotated vertices and edges. The graph kernel then is 
composed of base kernels applied to the annotations and annotated substructures, 
respectively.
We first consider the explicit feature maps of basic kernels and then review 
closure properties of kernels and discuss how to obtain their explicit feature 
maps.
The results are summarized in Table~\ref{tab:closure-map}.
This forms the basis for the systematic construction of explicit feature maps 
of graph kernels according to their composition of base kernels later in 
Section~\ref{sec:graph_maps}.

Some of the basic results on the construction of feature maps and their detailed 
proofs can be found in the text book by \citet{Shawe-Taylor2004}.
Going beyond that, we discuss the sparsity of the obtained feature vectors in
detail.
This has an essential impact on the efficiency in practice, when sparse data 
structures are used and a large number of the components of a feature vector is 
zero.
Indeed the running times we observed experimentally in Section~\ref{sec:evaluation} 
can only be explained taking the sparsity into account.

\subsection{Dirac and binary kernels}\label{sec:maps:dirac}
We discuss feature maps for basic kernels often used for the construction of 
kernels on structured objects.
The Dirac kernel $k_\delta$ on $\X$ is defined by $k_\delta(x,y) = 1$, 
if $x=y$ and $0$ otherwise. For \X a finite set, it is well-known that $\phi \colon \X \to \{0,1\}^{|\X|}$
with components indexed by $i \in \X$ and defined as $\phi(x)_i = 1$ if $i=x$, 
and $0$ otherwise, is a feature map of the Dirac kernel.

The requirement that two objects are equal is often too strict.
When considering two subgraphs, for example, the kernel should take the value $1$
if the graphs are isomorphic and $0$ otherwise. 
Likewise, two vertex sequences corresponding to walks in graphs should be 
regarded as identical if their vertices have the same labels.
We discuss this more general concept of kernels and their properties in the 
following.
We say a kernel $k$ on $\X$ is \emph{binary} if $k(x,y)$ is either $0$ or $1$ 
for all $x,y \in \X$.
Given a binary kernel, we refer to
\begin{equation*}
  \sim_k = \left\{ (x,y) \in \X \times \X \setST k(x,y) = 1 \right\} 
\end{equation*}
as the relation on $\X$ \emph{induced by $k$}. Next we will establish
several properties of this relation, which will turn out to be useful for the
construction of a feature map.

\begin{lemma}\label{lm:bin-sym-psd-zero}
 Let $k$ be a binary kernel on $\X$, then
 $x \sim_k y \Longrightarrow x \sim_k x$ holds for all $x,y \in \X$. 
\end{lemma}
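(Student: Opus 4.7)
The plan is to derive the contrapositive from positive semidefiniteness of the $2 \times 2$ Gram matrix on the pair $\{x,y\}$. Recall that for any kernel $k$ and any $x,y \in \X$, the matrix
\begin{equation*}
  M = \begin{pmatrix} k(x,x) & k(x,y) \\ k(y,x) & k(y,y) \end{pmatrix}
\end{equation*}
is symmetric and positive semidefinite. In particular, $\det(M) \geq 0$ and $k(x,y) = k(y,x)$.

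My approach is a short case analysis using the binary assumption. Suppose $x \sim_k y$, i.e., $k(x,y) = 1$; I want to rule out $k(x,x) = 0$. Assume for contradiction that $k(x,x) = 0$. Since $k$ is binary, $k(y,y) \in \{0,1\}$, so
\begin{equation*}
  \det(M) = k(x,x)\,k(y,y) - k(x,y)\,k(y,x) = 0 \cdot k(y,y) - 1 \cdot 1 = -1 < 0,
\end{equation*}
contradicting the positive semidefiniteness of $M$. Since $k$ is binary, the only remaining option is $k(x,x) = 1$, which is exactly $x \sim_k x$.

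I expect no real obstacle here; the whole content is that the $2 \times 2$ principal minor of the Gram matrix must be nonnegative, and the binary range $\{0,1\}$ leaves only one consistent value for $k(x,x)$ once $k(x,y) = 1$ is fixed. If one prefers, the same conclusion follows directly from the Cauchy--Schwarz inequality $k(x,y)^2 \leq k(x,x)\,k(y,y)$, which forces $k(x,x) \geq 1$ and hence $k(x,x) = 1$ under the binary assumption.
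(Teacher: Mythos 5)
Your proof is correct and follows essentially the same route as the paper: both argue by contradiction from the failure of positive semidefiniteness of the $2 \times 2$ Gram matrix on $\{x,y\}$ when $k(x,x)=0$ and $k(x,y)=1$. The only cosmetic difference is that you compute the determinant (or invoke Cauchy--Schwarz) where the paper enumerates the two possible matrices and notes neither is p.s.d.
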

\begin{proof}
 Assume there are $x,y \in \X$ such that $x \not\sim_k x$ and 
 $x \sim_k y$. By the definition of $\sim_k$ we obtain $k(x,x)=0$ and $k(x,y)=1$.
 The symmetric kernel matrix obtained by $k$ for $X=\{x,y\}$ thus is either
$\bigl(\begin{smallmatrix}
 0&1\\
 1&0
\end{smallmatrix} \bigr)$ or $\bigl(\begin{smallmatrix}
 0&1\\
 1&1
\end{smallmatrix} \bigr)$, where we assume that the first row and column is 
associated with $x$. Both matrices are not p.s.d.\ and, thus, $k$ is not a kernel
contradicting the assumption.
\end{proof}

\begin{lemma}\label{lm:bin-sym-psd-relation}
 Let $k$ be a binary kernel on $\X$, then $\sim_k$ is a partial 
 equivalence relation meaning that the relation $\sim_k$ is
 \begin{inparaenum}[(i)]
 \item symmetric, and \label{lm:bin-sym-psd-relation:sym}
 \item transitive. \label{lm:bin-sym-psd-relation:trans}
 \end{inparaenum}
\end{lemma}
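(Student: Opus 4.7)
The plan is to treat the two claims separately, since symmetry follows for free while transitivity requires real work with positive semidefiniteness.

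For (i), I would simply invoke the defining symmetry of every kernel: $k(x,y) = k(y,x)$ for all $x,y \in \X$. Hence $x \sim_k y$ immediately yields $k(y,x) = k(x,y) = 1$, i.e., $y \sim_k x$. No deeper property of $k$ beyond being a kernel is needed here.

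For (ii), suppose $x \sim_k y$ and $y \sim_k z$, so $k(x,y) = k(y,z) = 1$. I would first apply Lemma~\ref{lm:bin-sym-psd-zero} to each hypothesis to obtain $k(x,x) = k(y,y) = k(z,z) = 1$, together with $k(x,y)=k(y,x)=k(y,z)=k(z,y)=1$ by symmetry. Writing $a := k(x,z) \in \{0,1\}$, the principal $3 \times 3$ submatrix of the kernel restricted to $\{x,y,z\}$ is
\[
M \;=\; \begin{pmatrix} 1 & 1 & a \\ 1 & 1 & 1 \\ a & 1 & 1 \end{pmatrix}.
\]
Since $k$ is positive semidefinite, so is $M$, and in particular $\det M \geq 0$. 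A short cofactor expansion along the first row gives $\det M = -(1-a)^2$, so non-negativity forces $a = 1$, i.e., $x \sim_k z$.

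The main obstacle is the transitivity step: Lemma~\ref{lm:bin-sym-psd-zero} only exploits the $2 \times 2$ p.s.d.\ condition, whereas here one genuinely needs the $3 \times 3$ condition to rule out the ``triangle'' with $k(x,z)=0$. The cleanest route I see is the determinant computation above, but a conceptually nicer alternative would be to pick any feature map $\phi$ for $k$: then $k(x,x)=k(y,y)=1$ together with $k(x,y)=1$ means $\phi(x),\phi(y)$ are unit vectors attaining equality in Cauchy--Schwarz, forcing $\phi(x)=\phi(y)$; likewise $\phi(y)=\phi(z)$, so $\phi(x)=\phi(z)$ and hence $k(x,z)=1$. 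Either formulation closes the argument.
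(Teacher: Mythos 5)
Your proof is correct and follows essentially the same route as the paper: both arguments pin down the diagonal of the $3\times 3$ kernel matrix on $\{x,y,z\}$ via Lemma~\ref{lm:bin-sym-psd-zero} and then exploit positive semidefiniteness of that matrix, the only difference being that you certify the conclusion through the principal-minor condition $\det M=-(1-a)^2\geq 0$, whereas the paper argues by contradiction with the explicit witness vector $\vec{c}=(1,-1,1)$ giving $\vec{c}^\tp\vec{K}\vec{c}=-1$. Your Cauchy--Schwarz alternative via a feature map is a clean equivalent, but the main argument matches the paper's.
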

\begin{proof}
 Property~\eqref{lm:bin-sym-psd-relation:sym} follows from the fact that $k$ 
 must be symmetric according to definition. Assume property 
 \eqref{lm:bin-sym-psd-relation:trans} does not hold. Then there are 
 $x,y,z \in \X$ with $x \sim_k y \wedge y \sim_k z$ and  $x \not\sim_k z$.
 Since $x \neq z$ must hold according to Lemma~\ref{lm:bin-sym-psd-zero} we can 
 conclude that $X=\{x,y,z\}$ are pairwise distinct.
 We consider a kernel matrix $\vec{K}$ obtained by $k$ for $X$ and assume that 
 the first, second and third row as well as column is associated with $x$, $y$ and
 $z$, respectively.
 There must be entries $k_{12}=k_{21}=k_{23}=k_{32}=1$ and $k_{13}=k_{31}=0$. 
 According to Lemma~\ref{lm:bin-sym-psd-zero} the entries of the main diagonal $k_{11}=k_{22}=k_{33}=1$ follow.
 Consider the coefficient vector $\vec{c}$ with $c_1=c_3=1$ and $c_2=-1$, we 
 obtain $\vec{c}^\tp \vec{K} \vec{c} = -1.$
 Hence, $\vec{K}$ is not p.s.d.\ and $k$ is not a kernel contradicting the assumption.
\end{proof}

We use these results to construct a feature map for a binary kernel. 
We restrict our consideration to the set $\X_{\REF} = \{ x \in \X \mid x \sim_k x \}$, on 
which $\sim_k$ is an equivalence relation. The quotient set $\mathcal{Q}_k=\X_{\REF}/\!\!\sim_k$
is the set of equivalence classes induced by $\sim_k$.
Let $[x]_k$ denote the equivalence class of $x \in \X_{\REF}$ under the relation 
$\sim_k$.
Let $k_\delta$ be the Dirac kernel on the equivalence classes $\mathcal{Q}_k$, 
then $k(x,y) = k_\delta([x]_k,[y]_k)$ and we obtain the following result.
\begin{proposition}\label{prop:bin-equiv-map}
 Let $k$ be a binary kernel with $\mathcal{Q}_k = \{Q_1,\dots,Q_d\}$, then 
 $\phi \colon \X \to \{0,1\}^{d}$ with $\phi(x)_i = 1$ if 
 $Q_i=[x]_k$, and $0$ otherwise, is a feature map of $k$.
\end{proposition}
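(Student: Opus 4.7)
The plan is to verify directly that the stated map $\phi$ is a feature map, i.e.\ $\langle \phi(x),\phi(y) \rangle = k(x,y)$ for all $x,y \in \X$. Since each component of $\phi(x)$ is $0$ or $1$, and since $\phi(x)$ has exactly one nonzero entry when $x \in \X_{\REF}$ (namely the one indexed by the equivalence class $[x]_k$) and is the zero vector when $x \notin \X_{\REF}$, the dot product $\langle \phi(x),\phi(y) \rangle$ is either $0$ or $1$, and it equals $1$ precisely when $x,y \in \X_{\REF}$ and $[x]_k = [y]_k$. So the goal reduces to showing that $k(x,y)=1$ is equivalent to this latter condition.

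I would then split into two cases based on whether both arguments lie in $\X_{\REF}$. First, suppose $x,y \in \X_{\REF}$. Since $\sim_k$ is an equivalence relation on $\X_{\REF}$ by \cref{lm:bin-sym-psd-relation} (together with \cref{lm:bin-sym-psd-zero}, which supplies reflexivity on $\X_{\REF}$), the equality $[x]_k = [y]_k$ holds iff $x \sim_k y$, which by definition of $\sim_k$ is equivalent to $k(x,y)=1$. Thus in this case $\langle \phi(x),\phi(y) \rangle = k(x,y)$. Second, suppose without loss of generality $x \notin \X_{\REF}$, so that $\phi(x)$ is the zero vector and the dot product is $0$. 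By \cref{lm:bin-sym-psd-zero}, $x \sim_k y$ would force $x \sim_k x$, contradicting $x \notin \X_{\REF}$; hence $x \not\sim_k y$ and $k(x,y)=0$, matching the dot product.

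Combining both cases gives $\langle \phi(x),\phi(y) \rangle = k(x,y)$ for all $x,y \in \X$, establishing that $\phi$ is a feature map of $k$ into $\{0,1\}^d \subseteq \bbR^d$ with the standard inner product.

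The proof is essentially routine given the two preceding lemmas; the only mildly delicate point is the treatment of arguments outside $\X_{\REF}$, which is exactly where \cref{lm:bin-sym-psd-zero} is needed to guarantee that the feature map, although it ignores such arguments by sending them to $\mathbf{0}$, still reproduces the correct kernel value $k(x,y)=0$. No further calculation or construction is required.
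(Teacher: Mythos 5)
Your proof is correct and follows essentially the same route as the paper, which simply observes that $k(x,y)=k_\delta([x]_k,[y]_k)$ for the Dirac kernel $k_\delta$ on the quotient set $\mathcal{Q}_k$ and invokes the standard indicator feature map of $k_\delta$. Your case analysis is just a fully written-out verification of that identity, and your explicit treatment of arguments outside $\X_{\REF}$ (using \cref{lm:bin-sym-psd-zero} to conclude $k(x,y)=0$ there) makes precise a point the paper leaves implicit.
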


\begin{table*}
\setlength{\tabcolsep}{4.6pt}
\begin{center}
  \caption{Composed kernels, their feature map, dimension and sparsity. 
  We assume $k=k_1,\dots, k_D$ to be kernels with feature maps $\phi=\phi_1,\dots,\phi_D$ of dimension $d=d_1,\dots,d_D$.}
  \label{tab:closure-map}
  \begin{tabular}{@{}llll@{}}\toprule
    \textbf{Kernel}  & \textbf{Feature Map} & \textbf{Dimension} & \textbf{Sparsity} \\\midrule\vspace{.5em}
    $k^{\alpha}(x,y) = \alpha k(x,y)$ & $\phi^{\alpha}(x) = \sqrt{\alpha}\phi(x)$ & $d$ & $\mathsf{nnz}(\phi(x))$ \\\vspace{.5em}
    $k^{+}(x,y) = \sum_{i =1}^{D}k_i(x,y)$ & $\phi^{+}(x) =  \bigoplus_{i=1}^D \phi_i(x)$ & $\sum^D_{i=1} d_i$ & $\sum_{i=1}^{D} \mathsf{nnz}(\phi_i(x))$ \\\vspace{.5em}
    $k^{\bullet}(x,y) = \prod_{i =1}^{D}k_i(x,y)$ & $\phi^{\bullet}(x) = \bigotimes_{i=1}^D \phi_i(x)$ & $\prod^D_{i=1} d_i$ & $\prod_{i=1}^{D} \mathsf{nnz}(\phi_i(x))$ \\\vspace{.5em}
    $k^\times(X,Y) = \sum_{x \in X} \sum_{y \in Y} k(x,y)$ & $\phi^{\times}(X) = \sum_{x \in X} \phi(x)$ & $d$ & $\left|\bigcup_{x \in X} \mathsf{nz}(\phi(x))\right|$\\
    \bottomrule
  \end{tabular}
\end{center}
\end{table*}

\subsection{Closure properties}\label{sec:maps:closure}

For a kernel $k$ on a non-empty set $\X$ the function $k^{\alpha}(x,y) = \alpha k(x,y)$ with 
$\alpha$ in $\bbR_{\geq 0}$ is again a kernel on $\X$. 
Let $\phi$ be a feature map of $k$, then $\phi^{\alpha}(x) = \sqrt{\alpha}\phi(x)$ 
is a feature map of $k^{\alpha}$. For addition and multiplication, we get the following result.

\begin{proposition}[{\citealt[pp.\,75\,sqq.]{Shawe-Taylor2004}}]\label{prop:add-mult-maps}
Let $k_1, \dots, k_D$ for $D>0$ be kernels on $\X$ with feature maps 
$\phi_1, \dots, \phi_D$ of dimension $d_1, \dots, d_D$, respectively. Then 
\begin{eqnarray*}
 k^{+}(x,y) = \sum_{i =1}^{D}k_i(x,y) 
 \quad\text{and}\quad 
 k^{\bullet}(x,y) = \prod_{i =1}^{D}k_i(x,y)
\end{eqnarray*}
are again kernels on $\X$. Moreover,
\begin{eqnarray*}
 \phi^{+}(x) =  \bigoplus_{i=1}^D \phi_i(x) 
 \quad\text{and}\quad 
 \phi^{\bullet}(x) = \bigotimes_{i=1}^D \phi_i(x)
\end{eqnarray*} 
are feature maps for $k^{+}$ and $k^{\bullet}$ of dimension  
$\sum^D_{i=1} d_i$ and $\prod^D_{i=1} d_i$, respectively. Here
$\oplus$ denotes the concatenation of vectors and $\otimes$ the Kronecker product.
\end{proposition}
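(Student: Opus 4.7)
The plan is to verify both claims directly from the inner-product definition of a kernel: it suffices to exhibit a Hilbert space and a map whose inner product reproduces $k^{+}$ (respectively $k^{\bullet}$), since this simultaneously certifies that the function is a kernel, exhibits a feature map, and makes the dimension immediately readable.

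For the sum, I would take the direct sum space $\hilb_1 \oplus \cdots \oplus \hilb_D$ equipped with the standard inner product $\langle (u_1,\ldots,u_D), (v_1,\ldots,v_D) \rangle = \sum_{i=1}^D \langle u_i, v_i \rangle$. Then $\phi^{+}(x) = \bigoplus_{i=1}^D \phi_i(x)$ lives in this space, and
\begin{equation*}
  \langle \phi^{+}(x), \phi^{+}(y) \rangle = \sum_{i=1}^D \langle \phi_i(x), \phi_i(y) \rangle = \sum_{i=1}^D k_i(x,y) = k^{+}(x,y).
\end{equation*}
The dimension is $\sum_{i=1}^D d_i$ by construction of the concatenation, which establishes the first half of the proposition.

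For the product, the key fact I would invoke is the mixed-product identity for the Kronecker product, $\langle a \otimes b, c \otimes d \rangle = \langle a, c \rangle \cdot \langle b, d \rangle$, which follows from writing the Kronecker product componentwise and reordering the double sum. Applying this inductively to $\phi^{\bullet}(x) = \phi_1(x) \otimes \cdots \otimes \phi_D(x)$ yields
\begin{equation*}
  \langle \phi^{\bullet}(x), \phi^{\bullet}(y) \rangle = \prod_{i=1}^D \langle \phi_i(x), \phi_i(y) \rangle = \prod_{i=1}^D k_i(x,y) = k^{\bullet}(x,y),
\end{equation*}
so $k^{\bullet}$ is a kernel with feature map $\phi^{\bullet}$. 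The dimension of the ambient space is $\prod_{i=1}^D d_i$ by the definition of the Kronecker product of vectors.

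There is no real obstacle here; the only delicate point is being careful with the inductive step for $D > 2$ in the product case, since $\bigotimes_{i=1}^D \phi_i(x)$ must be interpreted as iterated Kronecker products and the mixed-product identity has to be invoked at each stage. An alternative, equally clean route would be to prove the $D=2$ cases and then iterate by associativity of $\oplus$ and $\otimes$ together with associativity of $+$ and $\cdot$ on the kernel side; both routes avoid any calculation beyond unfolding definitions.
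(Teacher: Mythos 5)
Your proof is correct and is essentially the same argument the paper relies on (it states this proposition without proof, citing Shawe-Taylor and Cristianini, whose argument is exactly this construction): exhibit the concatenation in the direct-sum space for $k^{+}$ and the Kronecker product with the mixed-product identity for $k^{\bullet}$, from which both the kernel property and the stated dimensions follow immediately.
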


\begin{remark}
In case of $k_1 = k_2 = \dots = k_D$, we have $k^{+}(x,y)=D k_1(x,y)$ and a $d_1$-dimensional feature map can be 
obtained. For $k^{\bullet}$ we have $k_1(x,y)^D$, which yet does not allow for a feature space of
dimension smaller than $d_1^D$ in general.
\end{remark}

We state an immediate consequence of Proposition~\ref{prop:add-mult-maps} regarding
the sparsity of the obtained feature vectors explicitly.

\begin{corollary}
 Let $k_1, \dots, k_D$ and  $\phi_1, \dots, \phi_D$ be defined as above, then
\begin{eqnarray*}
 \mathsf{nnz}(\phi^{+}(x)) = \sum_{i=1}^{D} \mathsf{nnz}(\phi_i(x))
 \quad\text{and}\quad 
 \mathsf{nnz}(\phi^{\bullet}(x)) = \prod_{i=1}^{D} \mathsf{nnz}(\phi_i(x)).
\end{eqnarray*}
\end{corollary}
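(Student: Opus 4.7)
The plan is to read the two identities directly off the structural definitions of $\phi^{+}$ and $\phi^{\bullet}$ given in Proposition~\ref{prop:add-mult-maps}; nothing kernel-theoretic is left to do, so the argument is really just about how concatenation and Kronecker products interact with support. I would organize it as two short independent arguments, one for each identity, and in the product case I would likely proceed by induction on $D$.

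For the sum case, the feature map $\phi^{+}(x) = \bigoplus_{i=1}^{D} \phi_i(x)$ is the concatenation of the $\phi_i(x)$. The indexing set of $\phi^{+}(x)$ is the disjoint union of the indexing sets of the $\phi_i(x)$, and a coordinate in the $i$-th block is nonzero iff the corresponding coordinate of $\phi_i(x)$ is nonzero. Hence $\mathsf{nz}(\phi^{+}(x))$ is the disjoint union of the $\mathsf{nz}(\phi_i(x))$ (viewed inside the appropriate blocks), from which $\mathsf{nnz}(\phi^{+}(x)) = \sum_{i=1}^{D} \mathsf{nnz}(\phi_i(x))$ is immediate.

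For the product case, I would first handle $D=2$. The Kronecker product $\phi_1(x) \otimes \phi_2(x)$ has coordinates indexed by pairs $(i,j)$ with entry $\phi_1(x)_i \cdot \phi_2(x)_j$. Since $\bbR$ has no zero divisors, this entry is nonzero iff $i \in \mathsf{nz}(\phi_1(x))$ and $j \in \mathsf{nz}(\phi_2(x))$. Therefore $\mathsf{nz}(\phi_1(x) \otimes \phi_2(x)) = \mathsf{nz}(\phi_1(x)) \times \mathsf{nz}(\phi_2(x))$, and counting gives the product formula. The general case follows by a straightforward induction using the associativity of $\otimes$: writing $\phi^{\bullet}(x) = \left( \bigotimes_{i=1}^{D-1} \phi_i(x) \right) \otimes \phi_D(x)$ and applying the inductive hypothesis together with the $D=2$ step yields $\mathsf{nnz}(\phi^{\bullet}(x)) = \prod_{i=1}^{D} \mathsf{nnz}(\phi_i(x))$.

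There is essentially no hard step here; the only point worth being careful about is the tacit use of the fact that the underlying field has no zero divisors, which guarantees that a product of coordinates is zero only when at least one factor is zero. Once this is acknowledged, both equalities are combinatorial counts over disjoint unions and Cartesian products.
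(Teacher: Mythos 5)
Your argument is correct and is exactly the intended one: the paper states this corollary without proof as an ``immediate consequence'' of Proposition~\ref{prop:add-mult-maps}, and your observations---that concatenation makes the support a disjoint union and that the Kronecker product makes it a Cartesian product (using that a product of reals vanishes only if some factor does)---are precisely the details being left implicit. Nothing to add.
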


\subsection{Kernels on sets}\label{sec:maps:sets}
In the following we derive an explicit mapping for kernels on finite sets. This 
result will be needed in the succeeding section for constructing an explicit 
feature map for the $R$-convolution kernel. Let $\kappa$ be a base kernel on a set $U$, 
and let $X$ and $Y$ be finite subsets of $U$. Then 
the \emph{cross product kernel} or \emph{derived subset kernel} on $\mathcal{P}(U)$ is defined as
\begin{equation}\label{eq:kernel:crossproduct}
k^\times(X,Y) = \sum_{x \in Y} \sum_{y \in Y} \kappa(x,y)\,.
\end{equation}
Let $\phi$ be a feature map of $\kappa$, then the function
\begin{equation}\label{eq:maps:crossproduct}
\phi^{\times}(X) = \sum_{x \in X} \phi(x)
\end{equation}
is a feature map of the cross product kernel~\citep[Proposition 9.42]{Shawe-Taylor2004}.
In particular, the feature space of the cross product kernel corresponds to the 
feature space of the base kernel; both have the same dimension.
For $\kappa=k_\delta$ the Dirac kernel $\phi^\times(X)$ maps the set $X$ to its 
characteristic vector, which has $|U|$ components and $|X|$ non-zero elements.
When $\kappa$ is a binary kernel as discussed in Section~\ref{sec:maps} the number of 
components reduces to the number of equivalence classes of $\sim_\kappa$ and
the number of non-zero elements becomes the number of cells in the quotient set
$X/\!\!\sim_\kappa$. In general, we obtain the following result as an immediate 
consequence of Equation~\eqref{eq:maps:crossproduct}.
\begin{corollary}
 Let $\phi^{\times}$ be the feature map of the cross product kernel and $\phi$ 
 the feature map of its base kernel, then 
 \begin{equation*}
 \mathsf{nnz}(\phi^{\times}(X)) = \left|\bigcup_{x \in X} \mathsf{nz}(\phi(x))\right|\,.
 \end{equation*}
\end{corollary}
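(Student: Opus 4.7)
The plan is to establish the set equality $\mathsf{nz}(\phi^{\times}(X)) = \bigcup_{x \in X} \mathsf{nz}(\phi(x))$ and then take cardinalities. I would separate the two inclusions.

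For the direction $\mathsf{nz}(\phi^{\times}(X)) \subseteq \bigcup_{x \in X} \mathsf{nz}(\phi(x))$, the argument is immediate from Equation~\eqref{eq:maps:crossproduct}: if a coordinate $i$ satisfies $\phi^{\times}(X)_i \neq 0$, then
$\sum_{x \in X} \phi(x)_i \neq 0$, so at least one summand $\phi(x)_i$ must be nonzero, meaning $i \in \mathsf{nz}(\phi(x))$ for some $x \in X$. This direction holds for arbitrary feature maps, requiring nothing beyond the definition of the cross product kernel's feature map.

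The reverse inclusion is where the actual content lies, and this is the step I expect to be the obstacle: \emph{a priori}, cancellation within the sum could remove coordinates from the support, so the opposite inclusion can fail for arbitrary real-valued feature maps. The way to rescue it is to observe that every feature map constructed in this section has \emph{non-negative} components: the Dirac map in Section~\ref{sec:maps:dirac} takes values in $\{0,1\}$, the feature map of Proposition~\ref{prop:bin-equiv-map} for binary kernels is $\{0,1\}$-valued, and the closure operations listed in Table~\ref{tab:closure-map} (scaling by $\alpha \geq 0$, concatenation $\oplus$, Kronecker product $\otimes$) all preserve non-negativity when applied to non-negative feature maps. Hence it is consistent to assume $\phi(x) \in \bbRnn^{d}$ component-wise, which is the setting of interest for sparsity analysis.

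Under this non-negativity assumption, no cancellation can occur: if $i \in \mathsf{nz}(\phi(x))$ for some $x \in X$, then $\phi(x)_i > 0$ and all other summands $\phi(x')_i$ are $\geq 0$, so $\phi^{\times}(X)_i = \sum_{x' \in X} \phi(x')_i > 0$, giving $i \in \mathsf{nz}(\phi^{\times}(X))$. Combining both inclusions yields the set equality, and taking cardinalities produces the stated formula $\mathsf{nnz}(\phi^{\times}(X)) = \left|\bigcup_{x \in X} \mathsf{nz}(\phi(x))\right|$. As a sanity check, specializing to $\kappa = k_\delta$ recovers the claim from the preceding paragraph that the characteristic vector of $X$ has exactly $|X|$ non-zero entries (since the sets $\mathsf{nz}(\phi(x))$ are then singletons indexed by distinct elements of $X$), confirming that the formula behaves correctly on the base case.
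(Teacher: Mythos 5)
Your proposal is correct, and it is in fact more careful than the paper itself: the paper presents this corollary with no proof at all, describing it as an ``immediate consequence'' of the identity $\phi^{\times}(X) = \sum_{x \in X} \phi(x)$. The inclusion $\mathsf{nz}(\phi^{\times}(X)) \subseteq \bigcup_{x \in X} \mathsf{nz}(\phi(x))$ is indeed immediate, but you are right that the reverse inclusion---and hence the stated \emph{equality}---silently relies on the absence of cancellation, and can fail for a feature map with mixed-sign components (take $\phi(x)_i = 1$ and $\phi(x')_i = -1$). Your observation that every feature map actually constructed in this section is componentwise non-negative (the Dirac map, the maps for binary kernels from Proposition~\ref{prop:bin-equiv-map}, and anything built from these via non-negative scaling, concatenation, or Kronecker products) is exactly the hidden hypothesis that makes the equality true in the paper's setting, and your sanity check against the characteristic-vector special case for $\kappa = k_\delta$ matches the paper's own preceding discussion. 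So you have not taken a different route; you have supplied the elided argument and, usefully, made explicit an assumption the paper leaves implicit. The only caveat is that, read as a standalone statement about arbitrary feature maps, the corollary should either carry the non-negativity hypothesis or be weakened to $\mathsf{nnz}(\phi^{\times}(X)) \leq \left|\bigcup_{x \in X} \mathsf{nz}(\phi(x))\right|$; your proof makes clear which of the two repairs the paper intends.
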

A crucial observation is that the number of non-zero components of a feature 
vector depends on both, the cardinality and structure of the set $X$ and the 
feature map $\phi$ acting on the elements of $X$. It is as large as possible when
each element of $X$ is mapped by $\phi$ to a feature vector with distinct non-zero components.

\subsection{Convolution kernels}\label{sec:maps:r-conv-explicit}
\citet{Haussler1999} proposed $R$-convolution kernels as a generic framework to 
define kernels between composite objects.
In the following we derive feature maps for such kernels by using the basic 
closure properties introduced in the previous sections. Thereby, we 
generalize the result presented in \citep{Kri+2014}.

\begin{definition}\label{def:r-conv}
Suppose 
$x \in \mathcal{R} = \mathcal{R}_1 \times \cdots \times \mathcal{R}_n$ 
are the parts of $X \in \X$ according to some decomposition.
Let $R \subseteq \X \times \mathcal{R}$ be a relation such that 
$(X,x) \in R$ if and only if $X$ can be decomposed into the parts $x$.
Let $R(X) = \{ x \mid (X, x) \in R \}$ and assume $R(X)$ is finite for all 
$X \in \X$.
The \emph{$R$-convolution kernel} is
\begin{equation}\label{eq:kernel:r-conv}
  k^\star(X,Y) = 
  \sum_{x \in R(X)} 
  \sum_{y \in R(Y)}
  \underbrace{\prod_{i=1}^n \kappa_i(x_i,y_i)}_{\kappa(x,y)},
\end{equation}
where $\kappa_i$ is a kernel on $\mathcal{R}_i$ for all $i \in \{1,\dots,n\}$.
\end{definition}

Assume that we have explicit feature maps for the kernels $\kappa_i$.
We first note that a feature map for $\kappa$ can be obtained from the feature 
maps for $\kappa_i$ by Proposition~\ref{prop:add-mult-maps}.%
\footnote{Note that we may consider every kernel $\kappa_i$ on $\mathcal{R}_i$ as 
kernel $\kappa'_i$ on $\mathcal{R}$ by defining
$\kappa'_i(x,y)=\kappa_i(x_i,y_i)$.}
In fact, Equation~\eqref{eq:kernel:r-conv} for arbitrary $n$ can be obtained 
from the case $n=1$ for an appropriate choice of $\mathcal{R}_1$ and $k_1$ as 
noted by \citet{Shin2010}.
If we assume $\mathcal{R} = \mathcal{R}_1 = U$, the $R$-convolution kernel 
boils down to the crossproduct kernel and we have $k^\star(X,Y) = k^\times(R(X),R(Y))$,
where both employ the same base kernel $\kappa$.
We use this approach to develop explicit mapping schemes for graph kernels in 
the following.
Let $\phi$ be a feature map for $\kappa$ of dimension $d$, then
from Equation~\eqref{eq:maps:crossproduct}, we obtain an explicit mapping of 
dimension $d$ for the $R$-convolution kernel according to
\begin{equation}\label{eq:maps:r-conv}
\phi^{\star}(X) = \sum_{x \in R(X)} \phi(x)\,.
\end{equation}
As discussed in Section~\ref{sec:maps:sets} the sparsity of $\phi^{\star}(X)$ 
simultaneous depends on the number of parts and their relation in the feature
space of $\kappa$.

\citet{Kri+2014} considered the special case that $\kappa$ is a binary kernel, 
cf.\@ Section~\ref{sec:maps:dirac}.
From Proposition~\ref{prop:bin-equiv-map} and Equation~\eqref{eq:maps:r-conv} we 
directly obtain their result as special case.

\begin{corollary}[\citet{Kri+2014}, Theorem 3]
 Let $k^\star$ be an $R$-convolution kernel with binary kernel $\kappa$ and
 $\mathcal{Q}_\kappa = \{Q_1,\dots,Q_d\}$, then 
 $\phi^\star \colon \X \to \bbN^d$ with $\phi(x)_i = |Q_i \cap X|$ is a feature map 
 of $k^\star$.
\end{corollary}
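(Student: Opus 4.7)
The plan is to combine the two ingredients already developed in this section: the explicit feature map for an $R$-convolution kernel from Equation~\eqref{eq:maps:r-conv}, and the explicit feature map for a binary kernel from Proposition~\ref{prop:bin-equiv-map}. Everything else should be bookkeeping.

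First I would instantiate Equation~\eqref{eq:maps:r-conv} for the setting at hand. Since $\kappa$ is assumed to be binary, Proposition~\ref{prop:bin-equiv-map} gives an explicit feature map $\phi \colon \mathcal{R} \to \{0,1\}^d$ whose $i$-th component is $\phi(x)_i = 1$ if $Q_i = [x]_\kappa$ and $0$ otherwise. By construction $\phi$ is a feature map of $\kappa$, and hence, by the discussion leading to Equation~\eqref{eq:maps:r-conv}, the map $\phi^\star(X) = \sum_{x \in R(X)} \phi(x)$ is a feature map of $k^\star$.

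Next I would read off the individual components. For a fixed $i \in \{1,\dots,d\}$,
\begin{equation*}
  \phi^\star(X)_i \;=\; \sum_{x \in R(X)} \phi(x)_i \;=\; \bigl|\{ x \in R(X) \setST [x]_\kappa = Q_i \}\bigr| \;=\; |Q_i \cap R(X)|,
\end{equation*}
which matches the claimed formula (modulo the harmless abuse of notation writing $X$ in place of $R(X)$ in the statement).

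The only subtlety to address is the case of parts $x \in R(X)$ that lie outside $\X_{\REF}$, i.e.\ satisfy $x \not\sim_\kappa x$. Proposition~\ref{prop:bin-equiv-map} assigns such parts the zero vector, which is consistent with the fact that they are not contained in any equivalence class $Q_i$ and therefore contribute $0$ to each $|Q_i \cap R(X)|$. By Lemma~\ref{lm:bin-sym-psd-zero}, such a part also satisfies $\kappa(x,y) = 0$ for every $y$, so it contributes $0$ to the double sum in Definition~\ref{def:r-conv} as well; hence both sides of the kernel identity $k^\star(X,Y) = \langle \phi^\star(X), \phi^\star(Y) \rangle$ are unaffected. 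This is the one place where I would have to be careful, but it follows directly from the lemmas already proved, so I expect no real obstacle.
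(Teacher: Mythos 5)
Your proposal is correct and follows the same route as the paper, which obtains this corollary directly by plugging the binary-kernel feature map of Proposition~\ref{prop:bin-equiv-map} into the $R$-convolution feature map of Equation~\eqref{eq:maps:r-conv} and reading off the components. Your extra remark on parts $x$ with $x \not\sim_\kappa x$ (handled via Lemma~\ref{lm:bin-sym-psd-zero}) is a detail the paper leaves implicit, but it does not change the argument.
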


\section{Computing graph kernels by explicit and implicit feature maps}\label{sec:graph_maps}
Building on the systematic construction of feature maps of kernels, we discuss 
explicit and implicit computation schemes of graph kernels.
We first introduce weighted vertex kernels. This family of kernels generalizes the 
GraphHopper kernel~\citep{Feragen2013} and graph invariant kernels~\citep{Ors+2015}
for attributed graphs, which were recently proposed with an implicit method of
computation. We derive (approximative) explicit feature maps for weighted vertex 
kernels.
Then, we develop explicit and implicit methods of computation for fixed length
walk kernels, which both exploit sparsity for efficient computation.
Finally, we discuss shortest-path and subgraph kernels for which both computation 
schemes have been considered previously and put them in the context of our 
systematic study.
We empirically study both computation schemes for graph kernels confirming our 
theoretical results experimentally in Section~\ref{sec:evaluation}.

\subsection{Weighted vertex kernels}
Kernels suitable for attributed graphs typically use user-defined kernels for 
the comparison of vertex and edge annotations such as real-valued vectors. The 
graph kernel is then obtained by combining these kernels according to closure 
properties. 
Recently proposed kernels for attributed graphs such as 
GraphHopper~\citep{Feragen2013} and graph invariant kernels~\citep{Ors+2015}
use separate kernel functions for the graph structure and vertex annotations. 
They can be expressed as
\begin{equation}\label{eq:kernel:wv}
K^\text{WV}(G,H) =\! \sum_{v \in V(G)} \sum_{v' \in V(H)} k_W(v,v') \cdot k_V(v,v'),
\end{equation}
where $k_V$ is a user-specified kernel comparing vertex attributes and $k_W$ is 
a kernel that determines a weight for a vertex pair based on the individual graph
structures. 
Hence, in the following we refer to Equation~\eqref{eq:kernel:wv} as 
\emph{weighted vertex kernel}.
Kernels belonging to this family are easily identifiable as instances of 
$R$-convolution kernels, cf.~Definition~\ref{def:r-conv}.

\subsubsection{Weight kernels}\label{sec:wv}
We discuss two kernels for attributed graphs, which have been proposed recently
and can bee seen as instances of weighted vertex kernels.

\paragraph{Graph invariant kernels.}
One approach to obtain weights for pairs of vertices is to compare their 
neighborhood by the classical Weisfeiler-Lehman label refinement~\citep{Shervashidze2011,Ors+2015}.
For a parameter $h$ and a graph $G$ with uniform initial labels $\lab_0$, a sequence 
$(\lab_1,\dots,\lab_h)$ of refined labels referred to as \emph{colors} is computed,
where $\lab_i$ is obtained from $\lab_{i-1}$ by the following procedure.
Sort the multiset of colors 
$\multiset{\lab_{i-1}(u) \mid vu \in E(G)}$ 
for every vertex $v$ to obtain a unique sequence of colors and 
add $\lab_{i-1}(v)$ as first element. Assign a new color $\lab_i(v)$ to every 
vertex $v$ by employing an injective mapping from color sequences to new colors. 
A reasonable implementation of $k_W$ motivated along the lines of 
graph invariant kernels~\citep{Ors+2015} is
\begin{equation}
k_W(v,v') = \sum_{i=0}^h k_\delta(\tau_i(v),\tau_i(v')), 
\end{equation}
where $\tau_i(v)$ denotes the discrete label of the vertex $v$ after the $i$th 
iteration of Weisfeiler-Lehman label refinement of the underlying unlabeled 
graph. Intuitively, this kernel reflects to what extent the two vertices have a 
structurally similar neighborhood.

\paragraph{GraphHopper kernel.}
Another graph kernel, which fits into the framework of weighted vertex kernels, 
is the GraphHopper kernel~\citep{Feragen2013} with
\begin{equation}\label{gh}
k_W(v,v') = \langle \vec{M}(v), \vec{M}(v') \rangle_F\,.
\end{equation}
Here $\vec{M}(v)$ and $\vec{M}(v')$ are $\delta \times \delta$ matrices, where 
the entry $\vec{M}(v)_{ij}$ for $v$ in $V(G)$ counts the number of times the 
vertex $v$ appears as the $i$th vertex on a shortest path of discrete length $j$ 
in $G$, where $\delta$ denotes the maximum diameter over all graphs, and 
$\langle \cdot, \cdot \rangle_F$ is the Frobenius inner product.

\subsubsection{Vertex kernels}
For graphs with multi-dimensional real-valued vertex attributes in $\bbR^d$ one 
could set $k_V$ to the Gaussian RBF kernel $k_{\text{RBF}}$ or the dimension-wise 
product of the hat kernel $k_{\Delta}$, respectively, i.e.,
\begin{equation}\label{eq:rbf_hat}
k_{\text{RBF}}(x,y) = \exp\mleft(-\frac{\norm{x-y}_2^2}{2 \sigma^2}\mright) \quad\text{and}\quad
k_{\Delta}(x,y) = \prod_{i=1}^d \max\left\{0, 1-\frac{|x_i-y_i|}{\delta}\right\}.
\end{equation}
Here, $\sigma$ and $\delta$ are parameters controlling the decrease of the kernel 
value with increasing discrepancy between the two input data points.

\subsubsection{Computing explicit feature maps}
In the following we derive an explicit mapping for weighted vertex kernels. 
Notice that Equation~\eqref{eq:kernel:wv} is an instance of Definition~\ref{def:r-conv}.
Hence, by Proposition~\ref{prop:add-mult-maps} and Equation~\eqref{eq:maps:r-conv},
we obtain an explicit mapping $\phi^\text{WV}$ of weighted vertex kernels.

\begin{proposition}\label{prop:convexp}
 Let $K^\text{WV}$ be a weighted vertex kernel according to Equation~\eqref{eq:kernel:wv} 
 with $\phi^W$ and $\phi^V$ feature maps for $k_W$ and $k_V$, respectively. Then
 \begin{equation}
  \phi^\text{WV}(G) = \sum_{v \in V(G)} \phi^W(v) \otimes \phi^V(v)
 \end{equation}
 is a feature map for $K^\text{WV}$.
\end{proposition}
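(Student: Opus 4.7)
The plan is to recognize $k^\text{WV}$ as a two-stage composition of already-analyzed building blocks and then simply chain the corresponding feature-map constructions. Concretely, define the vertex-pair base kernel
\begin{equation*}
\kappa(v,v') = k_W(v,v') \cdot k_V(v,v').
\end{equation*}
Then \cref{eq:kernel:wv} takes the form $k^\text{WV}(G,H) = \sum_{v \in V(G)} \sum_{v' \in V(H)} \kappa(v,v')$, which is exactly the crossproduct kernel from \cref{eq:kernel:crossproduct} applied to the sets $V(G)$ and $V(H)$ with base kernel $\kappa$.

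The first step is to invoke \cref{prop:add-mult-maps} on the product $k_W \cdot k_V$. Since $\phi^W$ and $\phi^V$ are feature maps of the factor kernels, the Kronecker product $\phi^\kappa(v) = \phi^W(v) \otimes \phi^V(v)$ is a feature map of $\kappa$. The second step is to apply \cref{eq:maps:crossproduct}, which gives the feature map of a crossproduct kernel as the sum of the base feature map over the elements of the set. Plugging $\phi^\kappa$ into this expression immediately yields
\begin{equation*}
\phi^\text{WV}(G) = \sum_{v \in V(G)} \phi^\kappa(v) = \sum_{v \in V(G)} \phi^W(v) \otimes \phi^V(v),
\end{equation*}
which is the claimed feature map.

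If a more self-contained verification is desired rather than a citation chain, one can check the inner product directly. Using bilinearity of the inner product and the standard identity $\langle a \otimes b, c \otimes d \rangle = \langle a, c \rangle \cdot \langle b, d \rangle$ on tensor products of Hilbert spaces, one computes
\begin{equation*}
\langle \phi^\text{WV}(G), \phi^\text{WV}(H) \rangle
= \sum_{v \in V(G)} \sum_{v' \in V(H)} \langle \phi^W(v), \phi^W(v') \rangle \cdot \langle \phi^V(v), \phi^V(v') \rangle,
\end{equation*}
and the right-hand side equals $\sum_{v,v'} k_W(v,v') \cdot k_V(v,v') = k^\text{WV}(G,H)$ by the definitions of $\phi^W$ and $\phi^V$ as feature maps of $k_W$ and $k_V$.

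There is no real obstacle here: the proof is purely a bookkeeping exercise that recycles \cref{prop:add-mult-maps} and the crossproduct construction of \cref{sec:maps:sets}. The only conceptual point worth flagging is that $k_W$ and $k_V$ are both kernels on the common domain $V(G) \cup V(H)$ (rather than on disjoint part types), which is precisely the setting in which the product closure property of \cref{prop:add-mult-maps} applies; the Kronecker product in the stated feature map is thus indexed by pairs (structural feature, attribute feature), mirroring the multiplicative combination of structure and attribute information in \cref{eq:kernel:wv}.
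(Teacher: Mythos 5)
Your proposal is correct and follows essentially the same route as the paper: the paper likewise identifies the weighted vertex kernel as a convolution/crossproduct kernel with base kernel $k_W \cdot k_V$, obtains the Kronecker-product feature map for that base kernel via \cref{prop:add-mult-maps}, and then sums over vertices via \cref{eq:maps:r-conv}. Your additional direct verification of the inner product is a harmless (and correct) supplement to the citation chain the paper uses.
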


Widely used kernels for the comparison of attributes, such as the Gaussian RBF kernel, do 
not have feature maps of finite dimension. However, \citet{Rahimi2008} 
obtained finite-dimensional feature maps approximating the kernels $k_{\text{RBF}}$ 
and $k_{\Delta}$ of Equation~\eqref{eq:rbf_hat}.
Similar results are known for other popular kernels for vectorial data like the 
Jaccard~\citep{Vedaldi2012} and the Laplacian kernel~\citep{Andoni2008}.

In the following we approximate $k_V(v,w)$ in Equation~\eqref{eq:kernel:wv} by $\langle \widetilde{\phi}^\text{V}(v), \widetilde{\phi}^\text{V}(w) \rangle$, where $\widetilde{\phi}^\text{V}$ is a finite-dimensional, approximative mapping, such that with probability $(1-\delta)$ for $\delta \in (0,1)$
\begin{equation}\label{eq:approx}
\sup_{v,w \in V(G)} \mleft|\mleft\langle \widetilde{\phi}^\text{V}(v), \widetilde{\phi}^\text{V}(w) \mright\rangle - k_V(v,w) \mright| \leq \varepsilon,
\end{equation}
for any $\varepsilon > 0$,
and derive a finite-dimensional, approximative feature map for weighted vertex kernels.
\begin{proposition}\label{prop:excgk}
Let $K^\text{WV}$ be a weighted vertex kernel and let $\mathcal{D}$ be a non-empty finite set of graphs. Further, let $\phi^W$ be a feature map for $k_W$ and let $\widetilde{\phi}^{\text{V}}$ be an approximative mapping for $k_V$ according to Equation~\eqref{eq:approx}. Then we can compute an approximative feature map $\widetilde{\phi}^\text{WV}$ for $K^\text{WV}$ such that with any constant probability
\begin{equation}
\sup_{G,H \in \mathcal{D}} \mleft|\mleft\langle \widetilde{\phi}^\text{WV}(G), \widetilde{\phi}^\text{WV}(H) \mright\rangle - K^\text{WV}(G,H) \mright| \leq \lambda,
\end{equation}
for any $\lambda > 0$.
\end{proposition}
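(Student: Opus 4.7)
The plan is to mimic the construction in Proposition~\ref{prop:convexp}, replacing the exact feature map $\phi^V$ with the approximate one $\widetilde{\phi}^V$, and then to bound how the per-pair approximation error propagates when summed over all vertex pairs. Concretely, I would define
\begin{equation*}
 \widetilde{\phi}^\text{WV}(G) \;=\; \sum_{v \in V(G)} \phi^W(v) \otimes \widetilde{\phi}^V(v),
\end{equation*}
exactly as in Proposition~\ref{prop:convexp} but with $\widetilde{\phi}^V$ in place of $\phi^V$. Using bilinearity of the tensor product and the identity $\langle a\otimes b, a'\otimes b'\rangle = \langle a,a'\rangle\langle b,b'\rangle$, the inner product expands to
\begin{equation*}
 \langle \widetilde{\phi}^\text{WV}(G), \widetilde{\phi}^\text{WV}(H)\rangle
 = \sum_{v \in V(G)}\sum_{v' \in V(H)} k_W(v,v')\cdot \langle \widetilde{\phi}^V(v),\widetilde{\phi}^V(v')\rangle.
\end{equation*}

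Next I would subtract $k^\text{WV}(G,H)$ from both sides and factor out $k_W(v,v')$, yielding
\begin{equation*}
 \langle \widetilde{\phi}^\text{WV}(G), \widetilde{\phi}^\text{WV}(H)\rangle - k^\text{WV}(G,H)
 = \sum_{v,v'} k_W(v,v')\left[\langle\widetilde{\phi}^V(v),\widetilde{\phi}^V(v')\rangle - k_V(v,v')\right].
\end{equation*}
By the Cauchy--Schwarz inequality applied to $k_W$, one has $|k_W(v,v')|\le B$ where $B = \sup_{u\in V(G')}k_W(u,u)$ is finite (this holds for the instances in Section~\ref{sec:wv}, and more generally under mild boundedness of $\phi^W$). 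Writing $N = \max_{G\in\cG}|V(G)|$ and invoking~\cref{approx} uniformly over the $N^2$ vertex pairs in $G$ and $H$, the absolute value of the right-hand side is at most $B\, N^2\, \varepsilon$ on the event (of probability at least $1-\delta$) on which the vertex-level approximation holds.

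To obtain the claimed bound $\lambda$ uniformly over $G,H\in\cG$ with constant probability, I would apply a union bound over the finitely many graphs in $\cG$: run the randomized construction once so that~\cref{approx} holds simultaneously for every vertex of every graph in $\cG$ with failure probability at most $|\cG|\cdot\delta$, which can be made at most $1/2$ by choosing $\delta$ sufficiently small (this only affects the number of random features). Then choosing $\varepsilon \le \lambda/(B N^2)$ delivers the uniform bound $\lambda$. The main obstacle is the step controlling the propagation of error: one needs both a uniform bound on $k_W$ and a union bound that is simultaneous over all vertex pairs and all graphs, so that a single draw of $\widetilde{\phi}^V$ suffices; once these quantities are identified as constants determined by $\cG$ and $k_W$, the rest is a choice of $\varepsilon$ and $\delta$ matching the desired $\lambda$ and constant probability.
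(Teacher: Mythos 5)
Your proposal is correct and follows essentially the same route as the paper: instantiate Proposition~\ref{prop:convexp} with $\widetilde{\phi}^V$ in place of $\phi^V$, take a union bound over the data set so a single draw of the random features works for all vertex pairs, and absorb the accumulated error $k^{W}_{\max}\cdot|V_{\max}|^2\cdot\varepsilon$ into $\lambda$ by choosing $\varepsilon$ small enough. Your version is in fact slightly more careful than the paper's, which writes the final relation as $\lambda=\varepsilon/(k^{W}_{\max}\cdot|V_{\max}|^2)$ where it should read $\varepsilon=\lambda/(k^{W}_{\max}\cdot|V_{\max}|^2)$, exactly as you have it.
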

\begin{proof}
By Inequality~\eqref{eq:approx} we get that for every pair of vertices in the data set $\mathcal{D}$ with any constant probability
\begin{equation*}
\mleft| \mleft\langle \widetilde{\phi}^{\text{V}}(v), \widetilde{\phi}^{\text{V}}(w) \mright\rangle - k_V(v,w) \mright| \leq  \varepsilon\,.
\end{equation*}
By the above, the accumulated error is
 \begin{align*}
&\Big| \sum_{v \in V(G)} \sum_{v' \in V(H)} k_V(v,v') \cdot k_W(v,v') -\!\!\! \sum_{v \in V(G)} \sum_{v' \in V(H)} \mleft\langle \widetilde{\phi}^{\text{V}}(v), \widetilde{\phi}^{\text{V}}(w) \mright\rangle  \cdot k_W(v,v')\Big|\\
&\leq \sum_{v \in V(G)} \sum_{v' \in V(H)} k_W (v,v') \cdot \varepsilon\,.
 \end{align*}
Hence, the result follows by setting $\varepsilon = \lambda/(k^{\text{W}}_{\max} \cdot |V_{\max}|^2)$, where $k^{\text{W}}_{\max}$ is the maximum value attained by the kernel $k^{\text{W}}$ and $|V_{\max}|$ is the maximum number of vertices over the whole data set.
\end{proof}

\subsection{Fixed length walk kernels}\label{sec:flrw}
In contrast to the classical walk based graph kernels, \emph{fixed length walk kernels} 
take only walks up to a certain length into account.
Such kernels have been successfully used in practice~\citep{Borgwardt2005a,Harchaoui2007}
and are not susceptible to the phenomenon of halting~\citep{Sug+2015}.
We propose an explicit and implicit computation scheme for fixed length walk kernels
supporting arbitrary vertex and edge kernels.
Our implicit computation scheme is based on product graphs and benefits from sparse
vertex and edge kernels. 
Previously no algorithms based on explicit mapping for computation of walk-based 
kernels have been proposed.
For graphs with discrete labels, we identify the label diversity and walk lengths as 
key parameters affecting the running time. This is confirmed experimentally in
Section~\ref{sec:evaluation}.

\subsubsection{Basic definitions}
A fixed length walk kernel measures the similarity between graphs based on the 
similarity between all pairs of walks of length $\wlength$ contained in the two 
graphs.
A walk of length $\wlength$ in a graph $G$ is a sequence of vertices and edges 
$(v_0,e_1,v_1,\dots,e_\wlength,v_\wlength)$ such that $e_i= v_{i-1}v_i \in E(G)$ 
for $i \in \{1, \dots, \wlength\}$.
We denote the set of walks of length $\wlength$ in a graph $G$ by $\mathcal{W}_\wlength(G)$.

\begin{definition}[$\wlength$-walk kernel]\label{def:walk-kernel}
  The \emph{$\wlength$-walk kernel} between two attributed graphs $G$ and $H$ in $\cG$ 
  is defined as
  \begin{equation}\label{eq:walk-kernel}
      K^=_\wlength(G, H) =
      \sum_{w \in \mathcal{W}_\wlength(G)} 
      \sum_{w' \in \mathcal{W}_\wlength(H)}
      k_W(w,w'),
  \end{equation}
  where $k_W$ is a kernel between walks.
\end{definition}

Definition~\ref{def:walk-kernel} is very general and does not specify how to 
compare walks. An obvious choice is to decompose walks and define $k_W$ in 
terms of vertex and edge kernel functions, denoted by $k_V$ and 
$k_E$, respectively. We consider 
\begin{equation}\label{eq:two-walk-kernel}
 k_W(w, w') = \prod_{i=0}^\wlength k_V(v_i, v'_i) \prod_{i=1}^\wlength k_E(e_i, e'_i),
\end{equation}
where $w = (v_0,e_1,\dots,v_\wlength)$ and $w' = (v'_0,e'_1,\dots,v'_\wlength)$ are two 
walks.\footnote{%
The same idea to compare walks was proposed by \citet{Kashima2003} as part of 
the marginalized kernel between labeled graphs.}
Assume the graphs in a data set have simple vertex and edge labels 
$\lab \colon V \uplus E \to \mathcal{L}$.
An appropriate choice then is to use the Dirac kernel for both, vertex and edge 
kernels, between the associated labels. In this case two walks are considered
equal if and only if the labels of all corresponding vertices and edges are equal. 
We refer to this kernel by
\begin{equation}\label{eq:two-walk-kernel-dirac}
 k^\delta_W(w, w') = 
  \prod_{i=0}^\wlength k_\delta(\lab(v_i), \lab(v'_i)) \prod_{i=1}^\wlength k_\delta(\lab(e_i), \lab(e'_i)),
\end{equation}
where $k_\delta$ is the Dirac kernel.
For graphs with continuous or multi-dimensional annotations this choice is not 
appropriate and $k_V$ and $k_E$ should be selected depending on the
application-specific vertex and edge attributes.

A variant of the $\wlength$-walk kernel can be obtained by considering all walks
up to length $\wlength$.
\begin{definition}[Max-$\wlength$-walk kernel]
  The \emph{Max-$\wlength$-walk kernel} between two attributed graphs 
  $G$ and $H$ in $\cG$ is defined as
  \begin{equation}
      K^\leq_\wlength (G,H) = \sum_{i=0}^\wlength \lambda_i K^=_i(G, H),
  \end{equation}
  where $\lambda_0, \dots, \lambda_\wlength \in \bbRnn$ are weights.
\end{definition}
This kernel is referred to as \emph{$k$-step random walk kernel} by \citet{Sug+2015}.
In the following we primary focus on the $\wlength$-walk kernel, although our 
algorithms and results can be easily transferred to the Max-$\wlength$-walk kernel.

\subsubsection{Walk and convolution kernels}\label{sec:walk_r-convolution_kernel}
We show that the $\wlength$-walk kernel is p.s.d.\ if $k_W$ is a 
valid kernel by seeing it as an instance of an $R$-convolution kernel. 
We use this fact to develop an algorithm for explicit mapping based on the ideas 
presented in Section~\ref{sec:maps:r-conv-explicit}. 

\begin{proposition}
 The $\wlength$-walk kernel is positive semidefinite if $k_W$ is defined according
 to Equation~\eqref{eq:two-walk-kernel} and $k_V$ and $k_E$ are valid kernels.
\end{proposition}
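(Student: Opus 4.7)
The plan is to identify the $\wlength$-walk kernel as an instance of the $R$-convolution framework of Definition~\ref{def:r-conv} and then invoke the fact that such kernels inherit positive semidefiniteness from their base kernels, which itself follows from the closure properties reviewed in Section~\ref{sec:maps:closure} together with the observation made after Definition~\ref{def:r-conv} that an $R$-convolution kernel reduces to a crossproduct kernel on the appropriate space of parts.

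First, I would verify that $k_W$ as given in Equation~\eqref{eq:two-walk-kernel} is itself a valid kernel on the space of walks. A walk of length $\wlength$ is naturally viewed as a tuple in $\mathcal{R} = V \times E \times V \times \cdots \times E \times V$, with $\wlength+1$ vertex coordinates and $\wlength$ edge coordinates. As noted in the footnote to Definition~\ref{def:r-conv}, each of the given kernels $k_V$ and $k_E$ lifts to a kernel on $\mathcal{R}$ by projection onto the corresponding coordinate. Since $k_V$ and $k_E$ are valid kernels by assumption, Proposition~\ref{prop:add-mult-maps} (closure of the class of kernels under products) implies that $k_W$, being a product of such lifted factors, is a valid kernel on $\mathcal{R}$.

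Second, I would set $R \subseteq \graphs \times \mathcal{R}$ so that $R(G) = \mathcal{W}_\wlength(G)$, i.e., a graph is in relation with exactly those tuples that form a walk of length $\wlength$ in it. For any finite graph this set is finite, so the assumption of Definition~\ref{def:r-conv} is met. Comparing Equation~\eqref{eq:walk-kernel} with Equation~\eqref{eq:kernel:r-conv}, the $\wlength$-walk kernel is precisely the $R$-convolution kernel with base kernel $\kappa = k_W$ on the set of parts $\mathcal{W}_\wlength(G) \subseteq \mathcal{R}$; equivalently, it is the crossproduct kernel $k^\times(R(G), R(H))$ from Equation~\eqref{eq:kernel:crossproduct}.

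Finally, positive semidefiniteness follows: the crossproduct kernel of a p.s.d.\ base kernel is p.s.d., as its feature map $\phi^\times(X) = \sum_{x \in X} \phi(x)$ exhibited in Equation~\eqref{eq:maps:crossproduct} realizes it as an inner product in a Hilbert space. Hence $K^=_\wlength$ is p.s.d.\ whenever $k_W$ is, and we have already established the latter. I do not anticipate a genuine obstacle here; the only point requiring a brief comment is the identification of walks with tuples in a product space so that the product form of $k_W$ literally falls under Proposition~\ref{prop:add-mult-maps}.
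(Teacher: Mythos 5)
Your proposal is correct and follows essentially the same route as the paper's own proof: decompose each graph into its length-$\wlength$ walks viewed as tuples $(x_0,\dots,x_{2\wlength})$, observe that $k_W$ is the product of the lifted coordinate kernels (alternating $k_V$ and $k_E$), and conclude via the $R$-convolution/crossproduct structure and the closure properties of Proposition~\ref{prop:add-mult-maps}. The only difference is that you spell out the reduction to the crossproduct kernel and its feature map a bit more explicitly than the paper does, which is harmless.
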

\begin{proof}
The result follows from the fact that the $\wlength$-walk kernel can be seen 
as an instance of an $R$-convolution kernel, cf.\@  Definition~\ref{def:r-conv},
where graphs are decomposed into walks.
Let $w = (v_0, e_1, v_1, \dots, e_\wlength, v_\wlength) = (x_0, \dots, x_{2\wlength})$
and $w' = (x'_0, \dots, x'_{2\wlength})$
be two walks and $\kappa(w, w')=\prod_{i=0}^{2\wlength} \kappa_i(x_i,x'_i)$ with
\begin{equation*}
    \kappa_i=
        \begin{cases}
        k_V &\text{if $i$ is even,}\\
        k_E &\text{otherwise}
        \end{cases}
\end{equation*}
for $i \in \{0,\dots,2\wlength\}$, then $k_W = \kappa$.
This implies that the $\wlength$-walk kernel is a valid kernel if $k_V$ and 
$k_E$ are valid kernels.
\end{proof}

Since kernels are closed under taking linear combinations with non-negative
coefficients, see Section~\ref{sec:maps}, we obtain the following corollary.
\begin{corollary}
 The Max-$\wlength$-walk kernel is positive semidefinite.
\end{corollary}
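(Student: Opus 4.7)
The plan is to derive the corollary directly from the preceding proposition together with the standard closure properties of kernels reviewed in Section~\ref{sec:maps:closure}. The statement carries with it the implicit hypothesis inherited from the preceding proposition, namely that the walk kernel $k_W$ is defined according to Equation~\eqref{eq:two-walk-kernel} with valid vertex and edge kernels $k_V$ and $k_E$. Under this hypothesis, the preceding proposition asserts that each $K^=_i$ for $i \in \{0,\dots,\wlength\}$ is positive semidefinite.

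From here, I would assemble the result in two elementary steps. First, I invoke the fact that multiplication by a non-negative scalar preserves positive semidefiniteness (the case $k^\alpha(x,y) = \alpha k(x,y)$ with $\alpha \geq 0$ reviewed at the beginning of Section~\ref{sec:maps:closure}); thus each $\lambda_i K^=_i$ is a valid kernel since $\lambda_i \in \bbRnn$. Second, I apply the addition closure property stated in Proposition~\ref{prop:add-mult-maps}, which guarantees that a finite sum of kernels over the same domain is itself a kernel. Applying this to the finite collection $\{\lambda_i K^=_i\}_{i=0}^{\wlength}$ yields that $K^\leq_\wlength = \sum_{i=0}^\wlength \lambda_i K^=_i$ is positive semidefinite.

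There is no genuine obstacle here: the whole argument is a one-line reduction to closure properties already in hand, which is precisely why it is stated as a corollary rather than a proposition. The only point worth being explicit about in the write-up is that the hypotheses of the preceding proposition are inherited, so that each summand is indeed a valid kernel before the non-negative linear combination is formed.
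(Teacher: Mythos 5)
Your argument is correct and matches the paper's own reasoning, which likewise justifies the corollary by observing that kernels are closed under non-negative linear combinations (citing Proposition~\ref{prop:add-mult-maps}) applied to the summands $\lambda_i K^=_i$ with each $K^=_i$ p.s.d.\ by the preceding proposition. Your explicit note that the hypotheses on $k_W$, $k_V$, and $k_E$ are inherited is a reasonable clarification but does not change the route.
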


\subsubsection{Implicit kernel computation}\label{sec:flrw:kernel-computation:implicit}
An essential part of the implicit computation scheme is the generation of 
the product graph that is then used to compute the $\wlength$-walk kernel.

\paragraph{Computing direct product graphs.}\label{sec:product-graph}
In order to support graphs with arbitrary attributes, vertex and edge kernels 
$k_V$ and $k_E$ are considered as part of the input.
Product graphs can be used to represent these kernel values between pairs of 
vertices and edges of the input graphs in a compact manner. We avoid to create
vertices and edges that would represent incompatible pairs with kernel value 
zero. The following definition can be considered a weighted version of the 
direct product graph introduced by~\citet{Gaertner2003} for kernel 
computation.\footnote{Note that we consider undirected graphs while 
\citet{Gaertner2003} refers to directed graphs.}

\begin{definition}[Weighted Direct Product Graph] \label{def:weighted-direct-productgraph}
 For two attributed graphs $G=(V,E)$, $H=(V',E')$ and given vertex and edge kernels 
 $k_V$ and $k_E$, the \emph{weighted direct product graph} (WDPG) is 
 denoted by $G \wdpg H = (\mathcal{V}, \mathcal{E},w)$ and defined as
 \begin{align*}
  \mathcal{V}\ =\ & \left\{ (v,v') \in V \times V' \setST k_V(v,v') > 0 \right\} \hspace{-23em}&\hspace{6em} \\
  \mathcal{E}\ =\ & \left\{ (u,u')(v,v') \in \EdgeSet{\mathcal{V}} \setST uv \in E 
                 \wedge u'v' \in E' \wedge k_E(uv,u'v') > 0 \right\} \hspace{-23em}&\hspace{6em}\\
  w(v)\ =\ & k_V(u,u') && \forall v=(u,u') \in \mathcal{V} \\
  w(e)\ =\ & k_E(uv,u'v') && \forall e \in\mathcal{E}, \text{where $e=(u,u')(v,v')$.}
 \end{align*}
Here $\EdgeSet{\mathcal{V}}$ denotes the set of all $2$-element subsets of $\mathcal{V}$.
\end{definition}

\begin{figure}
   \centering
   \null\hfill
   \subfigure[$G$] {
      \includegraphics{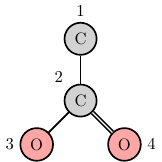}
      \label{fig:directproduct:input1}
   }
   \hfill
   \subfigure[$H$] {
      \includegraphics{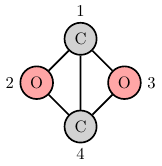}
      \label{fig:directproduct:input2}
   }
   \hfill
   \subfigure[$G \wdpg H$] {
      \includegraphics{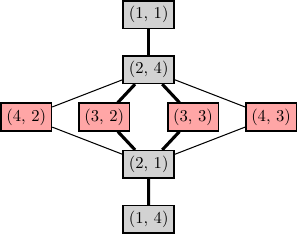}
      \label{fig:directproduct:example}
   }
   \hfill\null
   \caption{Two attributed graphs $G$~\subref{fig:directproduct:input1} and 
     $H$~\subref{fig:directproduct:input2} and their weighted direct product 
     graph $G \wdpg H$~\subref{fig:directproduct:example}. We assume the vertex
     kernel to be the Dirac kernel and $k_E$ to be $1$ if edge labels are 
     equal and $\frac{1}{2}$ if one edge label is ``\texttt{=}'' and the other is ``\texttt{-}''.
     Thin edges in $G \wdpg H$ represent edges with weight $\frac{1}{2}$, while 
     all other edges and vertices have weight $1$.
   }
   \label{fig:directproduct}
\end{figure}

An example with two graphs and their weighted direct product graph obtained for 
specific vertex and edge kernels is shown in Figure~\ref{fig:directproduct}.
Algorithm~\ref{alg:wdpg} computes a weighted direct product graph and does not 
consider edges between pairs of vertices $(v,v')$ that have been identified as
incompatible, i.e., $k_V(v,v')=0$. 

\SetKwFunction{WDPG}{Wdpg}%
\begin{algorithm}
  \caption{Weighted Direct Product Graph}
  \label{alg:wdpg}
  \Input{Graphs $G$ and $H$, vertex and edge kernels $k_V$ and $k_E$.}
  \Output{Graph $G \wdpg H = (\mathcal{V},\mathcal{E},w)$.}

  \Procedure{\WDPG{$G,H,k_V,k_E$}}{
    \ForAll{$v \in V(G)$, $v' \in V(H)$}{
      $w \gets k_V(v,v')$ \;
      \If{$w > 0$}{
        create vertex $z = (v,v')$ \;
        $\mathcal{V} \gets \mathcal{V} \uplus \{z\}$ \;
        $w(z) = w$ \;
      }
    }

    \ForAll{$(u,s) \in \mathcal{V}$}{ \label{alg:wdpg:vp-ckeck1}\label{alg:wdpg:start-edge}
      \ForAll{$v \in \N(u), t \in \N(s)$ with $(v,t) \in \mathcal{V}$, $(u,s) \prec (v,t)$}{  \label{alg:wdpg:vp-ckeck2}
        $w \gets k_E(uv,st)$ \label{alg:wdpg:edge-comp} \;
        \If{$w > 0$}{
          create edge $e=(u,s)(v,t)$ \;
          $\mathcal{E} \gets \mathcal{E} \uplus \{e\}$ \;
          $w(e) = w$ \;
        }
      }\label{alg:wdpg:end-edge}
    }
  }
\end{algorithm}

Since the weighted direct product graph is undirected, we must avoid that the 
same pair of edges is processed twice. Therefore, we suppose that there is an 
arbitrary total order $\prec$ on the vertices $\mathcal{V}$, such that for every 
pair $(u,s),(v,t)\in \mathcal{V}$ either $(u,s) \prec (v,t)$ or 
$(v,t) \prec (u,s)$ holds.
In line~\ref{alg:wdpg:vp-ckeck2} we restrict the edge pairs that are compared to 
one of these cases.

\begin{proposition}\label{prop:runtime:wdpg}
 Let $n=\sizeV{G}$, $n'=\sizeV{H}$ and $m=\sizeE{G}$, $m'=\sizeE{H}$. 
 Algorithm~\ref{alg:wdpg} computes the weighted direct product graph in time
 $\bigO(n n' \mathsf{T}_{V} + m m' \mathsf{T}_{E})$,
 where $\mathsf{T}_{V}$ and $\mathsf{T}_{E}$ is the running time to compute vertex and edge
 kernels, respectively.
\end{proposition}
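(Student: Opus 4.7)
The plan is to bound the two phases of Algorithm~\ref{alg:wdpg} separately and add the two bounds.

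First I would analyze the vertex construction loop. It iterates over all $nn'$ pairs in $V(G) \times V(H)$ and, for each, calls $k_V$ once at cost $\mathsf{T}_{V}$ and performs $O(1)$ bookkeeping (creating a vertex, storing its weight). This phase costs $O(nn' \mathsf{T}_{V})$ and produces $|\mathcal{V}| \leq nn'$.

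Next I would bound the edge construction loop (lines~\ref{alg:wdpg:start-edge}--\ref{alg:wdpg:end-edge}) by a handshake-style double-counting argument. For each $(u,s) \in \mathcal{V}$ the inner iteration ranges over $\N(u) \times \N(s)$, where $\N(u) \subseteq V(G)$ and $\N(s) \subseteq V(H)$. Even ignoring the filter $(u,s) \prec (v,t)$, which merely halves the count, the total number of inner iterations is at most
\begin{equation*}
\sum_{(u,s) \in V(G) \times V(H)} |\N(u)| \cdot |\N(s)| \;=\; \Bigl(\sum_{u \in V(G)} |\N(u)|\Bigr)\Bigl(\sum_{s \in V(H)} |\N(s)|\Bigr) \;=\; 4 m m'
\end{equation*}
by the handshake lemma applied to $G$ and $H$. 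Each inner iteration performs at most one evaluation of $k_E$ at cost $\mathsf{T}_{E}$ plus $O(1)$ work, so this phase runs in $O(m m' \mathsf{T}_{E})$.

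The only delicate point, and the place where a careless argument might slip in an extra factor, is that the membership test $(v,t) \in \mathcal{V}$ and the order comparison $(u,s) \prec (v,t)$ on line~\ref{alg:wdpg:vp-ckeck2} must be executable in $O(1)$. I would handle this by maintaining $\mathcal{V}$ alongside an indexed two-dimensional table of size $nn'$ populated during the vertex phase (which fits within the $O(nn' \mathsf{T}_{V})$ budget already accounted for) and by taking $\prec$ to be the lexicographic order on the vertex-index pairs, so that both tests are constant time. Adding the two phases then yields the claimed bound $O(nn' \mathsf{T}_{V} + m m' \mathsf{T}_{E})$.
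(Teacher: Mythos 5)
Your proof is correct. The paper states Proposition~\ref{prop:runtime:wdpg} without any proof, treating it as immediate from the structure of Algorithm~\ref{alg:wdpg}; your two-phase accounting, with the handshake-lemma bound $\sum_{(u,s)}|\N(u)|\,|\N(s)| \le (2m)(2m') = 4mm'$ on the edge loop and the explicit remark that the membership test $(v,t)\in\mathcal{V}$ and the comparison $(u,s)\prec(v,t)$ must be made $O(1)$ via an $nn'$-sized lookup table, is exactly the argument the authors leave implicit and fills it in cleanly.
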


Note that in case of a sparse vertex kernel, which yields zero for most of the
vertex pairs of the input graph, $\sizeV{G \wdpg H} \ll \sizeV{G} \cdot \sizeV{H}$ 
holds. 
Algorithm~\ref{alg:wdpg} compares two edges by $k_E$ only in case of matching
endpoints (cf.\@ lines~\ref{alg:wdpg:vp-ckeck1}, \ref{alg:wdpg:vp-ckeck2}),
therefore in practice the running time to compare edges 
(line~\ref{alg:wdpg:start-edge}--\ref{alg:wdpg:end-edge}) might be considerably 
less than suggested by Proposition~\ref{prop:runtime:wdpg}. 
We show this empirically in Section~\ref{sec:evaluation:flrw}. In case of sparse graphs, i.e., 
$|E|=\bigO(|V|)$, and vertex and edge kernels which can be computed in time
$\bigO(1)$ the running time of Algorithm~\ref{alg:wdpg} is $\bigO(n^2)$, 
where $n=\max\{\sizeV{G},\sizeV{H}\}$.

\paragraph{Counting Weighted Walks.}\label{sec:walk_counting}
Given an undirected graph $G$ with adjacency matrix $\vec{A}$, let $a^\wlength_{ij}$ denote
the element at $(i,j)$ of the matrix $\vec{A}^\wlength$. It is well-known that $a^\wlength_{ij}$
is the number of walks from vertex $i$ to $j$ of length $\wlength$.
The number of $\wlength$-walks of $G$ consequently is 
$\sum_{i,j} a^\wlength_{i,j} = \mathbf{1}^\tp \vec{A}^\wlength \mathbf{1} = \mathbf{1}^\tp \vec{r}_\wlength$,
where $\vec{r}_\wlength = \vec{A} \vec{r}_{\wlength-1}$ with $\vec{r}_0 = \mathbf{1}$.
The $i$th element of the recursively defined vector $\vec{r}_\wlength$ is the number 
of walks of length $\wlength$ starting at vertex $i$.
Hence, we can compute the number of $\wlength$-walks by computing either matrix 
powers or matrix-vector products.
Note that even for sparse (connected) graphs $\vec{A}^\wlength$ quickly becomes 
dense with increasing walk length $\wlength$. 
The $\wlength$th power of an $n \times n$ matrix $\vec{A}$ can 
be computed na\"{\i}vely in time $\bigO(n^\omega \wlength)$ and $\bigO(n^\omega \log \wlength)$ 
using exponentiation by squaring, where $\omega$ is the exponent of matrix 
multiplication.
The vector $\vec{r}_\wlength$ can be computed by means of matrix-vector multiplications, 
where the matrix $\vec{A}$ remains unchanged over all iterations.
Since direct product graphs tend to be sparse in practice, we propose a method 
to compute the $\wlength$-walk kernel that is inspired by matrix-vector 
multiplication.

In order to compute the $\wlength$-walk kernel we do not want to count the walks, but
sum up the weights of each walk, which in turn are the product of vertex and edge 
weights.
Let $k_W$ be defined according to Equation~\eqref{eq:two-walk-kernel}, then we can 
formulate the $\wlength$-walk kernel as
\begin{equation}\label{eq:walk-kernel:recursive}
  K^=_\wlength(G, H) =
  \sum_{w \in \mathcal{W}_\wlength(G)} 
  \sum_{w' \in \mathcal{W}_\wlength(H)}
  k_W(w,w') = 
  \sum_{v \in V(G \wdpg H)} r_\wlength(v),
\end{equation}
where $r_\wlength$ is determined recursively according to
\begin{align}
r_i(u) \ = \ & \sum_{uv \in E(G \wdpg H)} w(u) \cdot w(uv) \cdot r_{i-1}(v) &&\forall u \in V(G \wdpg H) \label{eq:walk-kernel:recursive:i}\\
r_0(u) \ = \ & w(u) &&\forall u \in V(G \wdpg H). \label{eq:walk-kernel:recursive:0}
\end{align}
Note that $r_i$ can as well be formulated as matrix-vector product. 
We present a graph-based approach for computation akin to sparse matrix-vector 
multiplication, see Algorithm~\ref{alg:flrw}.

\begin{algorithm}
  \caption{Implicit computation of $\wlength$-walk kernel}
  \label{alg:flrw}
  \Input{Graphs $G$, $H$, kernels $k_V$, $k_E$ and length parameter $\wlength$.}
  \Output{Value $K^=_\wlength(G,H)$ of the $\wlength$-walk kernel.}
  \BlankLine

  $(\mathcal{V}, \mathcal{E}, w) \gets \WDPG(G,H,k_V,k_E)$ \note*[r]{Compute $G \wdpg H$}
  \ForAll{$v \in \mathcal{V}$}{
    $r_0(v) \gets w(v)$ \note*[r]{Initialization}
  }
  \For{$i \gets 1$ \KwTo $\wlength$}{
    \ForAll{$u \in \mathcal{V}$}{
      $r_i(u) \gets 0$\;
      \ForAll(\note*[f]{Neighbors of $u$ in $G \wdpg H$}){$v \in \N(u)$}{ 
        $r_i(u) \gets r_i(u) + w(u) \cdot w(uv) \cdot r_{i-1}(v)$ \;
      }
    } 
  }
  \Return $\sum_{v \in \mathcal{V}} r_\wlength(v)$ \label{alg:flrw:return} \;
\end{algorithm}

\begin{theorem}
 Let $n=|\mathcal{V}|$, $m=|\mathcal{E}|$. Algorithm~\ref{alg:flrw} computes the 
 $\wlength$-walk kernel in time $\bigO(n+\wlength(n+m) + \mathsf{T}_{\WDPG})$, where $\mathsf{T}_{\WDPG}$ is the
 time to compute the weighted direct product graph.
\end{theorem}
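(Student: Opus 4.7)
The plan is to establish both correctness and the claimed running time bound. Correctness follows by a straightforward induction on $i$ matching the algorithm's updates to the recursive definition underlying Equation~\eqref{eq:walk-kernel:recursive}. For the base case, the algorithm sets $r_0(u) = w(u)$, which agrees with the definition. For the inductive step, assuming the array $r_{i-1}$ stores the correct values, the inner loop iterates over all neighbors $v$ of $u$ in $G \wdpg H$ and accumulates $w(u) \cdot w(uv) \cdot r_{i-1}(v)$, which is exactly the sum defining $r_i(u)$. Hence $\sum_{v \in \mathcal{V}} r_\ell(v)$ returned in line~\ref{alg:flrw:return} equals $K^=_\ell(G,H)$ by Equation~\eqref{eq:walk-kernel:recursive}.

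For the running time I would account for the work in blocks. Computing $G \wdpg H$ costs $\mathsf{T}_{\WDPG}$ by assumption. The initialization loop over $\mathcal{V}$ setting $r_0(v) \gets w(v)$ costs $\mathcal{O}(n)$. The outer loop runs $\ell$ times; within each iteration, setting $r_i(u) \gets 0$ for every $u \in \mathcal{V}$ contributes $\mathcal{O}(n)$, and the inner update over neighbors contributes $\sum_{u \in \mathcal{V}} \deg(u) = 2m$ elementary operations, since in an undirected graph each edge is visited from each of its endpoints exactly once. Thus each of the $\ell$ iterations costs $\mathcal{O}(n+m)$, yielding $\mathcal{O}(\ell(n+m))$ total for the outer loop. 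The final summation in line~\ref{alg:flrw:return} costs $\mathcal{O}(n)$. Summing these bounds gives $\mathcal{O}(n + \ell(n+m) + \mathsf{T}_{\WDPG})$.

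There is no real obstacle here; the only point that deserves care is the handshake-lemma style argument that the inner neighbor loops over all $u \in \mathcal{V}$ together contribute $\mathcal{O}(m)$ rather than $\mathcal{O}(nm)$ per outer iteration, and that the data structures used for $\mathcal{V}$, $\mathcal{E}$ and the vectors $r_i$ permit constant-time access (which is the standard assumption when adjacency lists are used for $G \wdpg H$). Once this is noted, the bound follows by elementary bookkeeping.
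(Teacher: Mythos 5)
Your proof is correct and follows exactly the reasoning the paper intends (the paper states this theorem without writing out the proof): correctness by induction on the recursion from Equation~\eqref{eq:walk-kernel:recursive}, and the time bound by charging $\mathsf{T}_{\WDPG}$ for the product graph, $\bigO(n)$ for initialization and the final sum, and $\bigO(n+m)$ per iteration via the handshake lemma. Nothing is missing.
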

Note that the running time depends on the size of the product graph and 
$n \ll \sizeV{G} \cdot \sizeV{H}$ and $m \ll \sizeE{G} \cdot \sizeE{H}$ is possible 
as discussed in Section~\ref{sec:product-graph}.

The Max-$\wlength$-walk kernel is the sum of the $j$-walk kernels with $j \leq \wlength$ and,
hence, with Equation~\eqref{eq:walk-kernel:recursive} we can also formulate it 
recursively as
\begin{equation}
 K^\leq_\wlength (G,H) = \sum_{i=0}^\wlength \lambda_i K^=_i(G, H) = 
 \sum_{i=0}^\wlength \lambda_i \sum_{v \in V(G \wdpg H)} r_i(v).
\end{equation}
This value can be obtained from Algorithm~\ref{alg:flrw} by simply changing the 
return statement in line~\ref{alg:flrw:return} according to the right-hand side
of the equation without affecting the asymptotic running time.

\subsubsection{Explicit kernel computation}\label{sec:flrw:kernel-computation:explicit}
We have shown in Section~\ref{sec:walk_r-convolution_kernel} that $\wlength$-walk kernels 
are $R$-convolution kernels. Therefore, we can derive explicit feature maps with the 
techniques introduced in Section~\ref{sec:maps}.
Provided that we know explicit feature maps for the vertex and edge kernel, we can derive
explicit feature maps for the kernel on walks and obtain an explicit computation scheme 
by enumerating all walks.
We propose a more elaborated approach that avoids enumeration and exploits the simple 
composition of walks.

To this end, we consider Equations~\eqref{eq:walk-kernel:recursive}, 
\eqref{eq:walk-kernel:recursive:i}, \eqref{eq:walk-kernel:recursive:0} of the recursive 
product graph based implicit computation. Let us assume that $\phi^V$ and $\phi^E$ are 
feature maps of the kernels $k_V$ and $k_E$, respectively. Then we can derive a feature 
map $\Phi_i$, such that $\langle \Phi_i(u), \Phi_i(u') \rangle = r_i(u,u')$ for all
$(u,u') \in V(G) \times V(H)$\footnote{We assume $r_i(u,u')=0$ for 
$(u,u') \notin V(G \wdpg H)$ .} as follows using the results of Section~\ref{sec:maps}.

\begin{align*}
 \Phi_0(u) \ = \ & \phi^V(u) &&\forall u \in V(G) \\
 \Phi_i(u) \ = \ & \sum_{uv \in E(G)} \phi^V(u) \otimes \phi^E(uv) \otimes \Phi_{i-1}(v) &&\forall u \in V(G)
\end{align*}
From these, the feature map $\phi^=_\wlength$ of the $\wlength$-walk kernel is obtained 
according to
\begin{equation*}
 \phi^=_\wlength(G) = \sum_{v \in V(G)} \Phi_\wlength(v).
\end{equation*}
Algorithm~\ref{alg:flrw_explicit} provides the pseudo code of this computation.
We can easily derive a feature map of the Max-$\wlength$-walk kernel from the 
feature maps of all $i$-walk kernels with $i \leq \wlength$, cf.\@ Proposition~\ref{prop:add-mult-maps}.

\begin{algorithm}
  \caption{Generating feature vectors of the $\wlength$-walk kernel}
  \label{alg:flrw_explicit}
  \Input{Graph $G$, length parameter $\wlength$, feature map $\phi^V$ of $k_V$ and $\phi^E$ of $k_E$.}
  \Output{Feature vector $\phi^=_\wlength(G)$ of the $\wlength$-walk kernel.}
  \Data{Feature vectors $\Phi_i(v)$ encoding the contribution of $i$-walks starting at $v$.}
  \BlankLine
  
  \ForAll{$v \in V(G)$}{
    $\Phi_0(v) \gets \phi^V(v)$ \note*[r]{Initialization, length $0$ walks}
  }
  \For{$i \gets 1$ \KwTo $\wlength$}{
    \ForAll{$u \in V(G)$}{
      $\Phi_i(u) \gets \vec{0}$ \;
      \ForAll{$v \in \N(u)$}{
        $\Phi_i(u) \gets \Phi_i(u) + \phi^V(u) \otimes \phi^E(uv) \otimes \Phi_{i-1}(v)$ \;
      }
    } 
  }
  \Return $\sum_{v \in V(G)} \Phi_\wlength(v)$ \note*[r]{Combine vectors}
\end{algorithm}

The dimension of the feature space and the density of feature vectors depends 
multiplicative on the same properties of the feature vectors of $k_V$ and $k_E$. 
For non-trivial vertex and edge kernels explicit computation of the $\wlength$-walk 
kernel is likely to be infeasible in practice.
Therefore, we now consider graphs with simple labels from the alphabet $\mathcal{L}$ and 
the kernel $k^\delta_W$ given by Equation~\eqref{eq:two-walk-kernel-dirac}. 
Following \citet{Gaertner2003} we can construct a feature map in this case, where the the
features are sequences of labels associated with walks. As we will see later, this 
feature space is indeed obtained with Algorithm~\ref{alg:flrw_explicit}.
A walk $w$ of length $\wlength$ is associated with a label sequence 
$\lab(w)=(\lab(v_0),\lab(e_1),\dots,\lab(v_\wlength)) \in \mathcal{L}^{2\wlength+1}$.
Moreover, graphs are decomposed into walks and two walks $w$ and $w'$ are 
considered equivalent if and only if $\lab(w) = \lab(w')$.
This gives rise to the feature map $\phi^=_\wlength$, where each component is
associated with a label sequence $s \in \mathcal{L}^{2\wlength+1}$ and counts
the number of walks $w \in \mathcal{W}_\wlength(G)$ with $\lab(w) = s$.
Note that the obtained feature vectors have $|\mathcal{L}|^{2\wlength+1}$ 
components, but are typically sparse.
In fact, Algorithm~\ref{alg:flrw_explicit} constructs this feature map.
We assume that $\phi^V(v)$ and $\phi^E(uv)$ have exactly one non-zero component 
associated with the label $\lab(v)$ and $\lab(uv)$, respectively.
Then the single non-zero component of $\phi^V(u) \otimes \phi^E(uv) \otimes \phi^V(v)$
is associated with the label sequence $\lab(w)$ of the walk $w=(u, uv, v)$.
A walk of length $\wlength$ can be decomposed into a walk of length $\wlength-1$
with an additional edge and vertex added at the front. This allows to obtain the 
number of walks of length $\wlength$ with a given label sequence starting at a 
fixed vertex $v$ by concatenating $(\lab(v),\lab(vu))$ with all label sequences 
for walks starting from a neighbor $u$ of the vertex $v$.
This construction is applied in every iteration of the outer for-loop in Algorithm~\ref{alg:flrw_explicit} and the feature vectors $\Phi_i$ are easy to 
interpret. 
Each component of $\Phi_i(v)$ is associated with a label sequence $s \in \mathcal{L}^{2i+1}$
and counts the walks $w$ of length $i$ starting at $v$ with $\lab(w)=s$.

We consider the running time for the case, where graphs have discrete labels and
the kernel $k^\delta_W$ is given by Equation~\eqref{eq:two-walk-kernel-dirac}. 

\begin{theorem}
 Given a graph $G$ with $n=\sizeV{G}$ vertices and $m=\sizeE{G}$ edges, 
 Algorithm~\ref{alg:flrw_explicit} computes the $\wlength$-walk kernel feature 
 vector $\phi^=_\wlength(G)$ in time $\bigO(n+\wlength(n+m)s)$, where $s$ is the 
 maximum number of different label sequences of $(\wlength-1)$-walks staring at 
 a vertex of $G$.
\end{theorem}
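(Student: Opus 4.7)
The plan is to first argue correctness of Algorithm~\ref{alg:flrw_explicit} by induction on the iteration index $i$, and then bound the running time by an amortized analysis of the nested loops, assuming sparse data structures for the feature vectors.

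For correctness, I would establish the invariant: after iteration $i$, for every vertex $v \in V(G)$ and every label sequence $w \in \mathcal{L}^{2i+1}$, the value $\Phi^v_i(w)$ equals the number of walks of length $i$ starting at $v$ whose label sequence is $w$. The base case $i=0$ is immediate: each vertex $v$ gives rise to a unique length-$0$ walk with label sequence $(\tau(v))$, and the initialization sets $\Phi^v_0(\tau(v))=1$. For the inductive step, I would use the bijective decomposition of a walk $(v_0,e_1,v_1,\dots,e_i,v_i)$ of length $i$ starting at $u=v_0$ into the prefix $(\tau(u),\tau(uv_1))$ followed by the label sequence of the length-$(i-1)$ walk $(v_1,e_2,\dots,v_i)$ starting at the neighbor $v_1$ of $u$. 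The inner loop of the algorithm enumerates precisely these decompositions: for each $v \in \N(u)$ and each nonzero entry $w$ of $\Phi^v_{i-1}$, it adds $\Phi^v_{i-1}(w)$ to $\Phi^u_i((\tau(u),\tau(uv))+w)$, and the inductive hypothesis yields the claim. Since every length-$\ell$ walk has a unique starting vertex, the returned sum $\sum_v \Phi^v_\ell$ is the desired feature vector $\phi^=_\ell(G)$.

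For the running time, I would treat each $\Phi^v_i$ as a sparse map (for example a hash table keyed on label sequences with amortized $\bigO(1)$ insert/lookup). The initialization costs $\bigO(n)$. Each of the $\ell$ main iterations visits every ordered pair $(u,v)$ with $v \in \N(u)$: iterating over all vertices incurs $\bigO(n)$ overhead and the incidence traversal contributes $\sum_u \deg(u)=2m$, for $\bigO(n+m)$ adjacency traversals per iteration. For each pair $(u,v)$ the innermost loop processes $\mathsf{nnz}(\Phi^v_{i-1})$ entries, which for $i-1 \leq \ell-1$ is bounded by $s$. Hence the work per iteration is $\bigO((n+m)s)$, and summing over the $\ell$ iterations, together with the $\bigO(n)$ initialization, gives the claimed $\bigO(n+\ell(n+m)s)$ bound. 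The final aggregation $\sum_v \Phi^v_\ell$ touches at most $\bigO(ns)$ entries and is absorbed into this bound.

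The main technical subtlety is the implicit assumption that concatenating a fixed-length prefix onto an existing label sequence and using the result as a key in $\Phi^u_i$ costs amortized $\bigO(1)$. A naive copy-based implementation would incur $\Theta(\ell)$ per operation and inflate the bound by a factor of $\ell$. I would sidestep this by representing label sequences as pointers into a shared prefix trie (or, equivalently, by incremental hashing), so that extending a sequence by one vertex label and one edge label amounts to following or creating at most a constant number of trie edges. A minor point is that $s$ is defined only for $(\ell-1)$-walks; the analysis above implicitly uses $\mathsf{nnz}(\Phi^v_{i-1}) \le s$ for every $i \le \ell$, which we can either justify by taking $s$ to be the maximum over $i \le \ell-1$ (which is dominated by the stated quantity, since the number of distinct label sequences of $j$-walks at a fixed starting vertex is monotone in $j$ up to the final iteration we actually need) or by redefining $s$ accordingly without altering the statement.
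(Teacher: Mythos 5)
Your proof is correct and follows exactly the route the paper intends: the paper actually states this theorem without a formal proof, relying on the prose immediately preceding Algorithm~\ref{alg:flrw_explicit} (prepending $(\lab(u),\lab(uv))$ to the label sequences of $(\wlength-1)$-walks starting at a neighbor), and your induction plus the per-iteration $\bigO((n+m)s)$ accounting formalizes precisely that sketch. The two subtleties you flag---that naive key concatenation/hashing would cost $\Theta(\wlength)$ per entry unless sequences are represented via a shared trie or incremental hashes, and that $s$ must bound $\mathsf{nnz}(\Phi^v_{i-1})$ for all $i\leq\wlength$ (justified by your surjectivity-of-truncation monotonicity argument)---are genuine gaps the paper glosses over, and your handling of both is sound.
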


Assume Algorithm~\ref{alg:flrw_explicit} is applied to unlabeled sparse graphs,
i.e., $\sizeE{G} = \bigO(\sizeV{G})$, then $s = 1$ and the feature mapping can be 
performed in time $\bigO(n+\wlength n)$.
This yields a total running time of $\bigO(d \wlength n + d^2)$ to compute a 
kernel matrix for $d$ graphs of order $n$ for $\wlength>0$.

\subsection{Shortest-path kernel}\label{sec:shortes-path:exp-impl}
A classical kernel applicable to attributed graphs is the shortest-path kernel 
\citep{Borgwardt2005}. This kernel compares all shortest paths in two graphs 
according to their lengths and the vertex annotation of their endpoints. 
The kernel was proposed with an implicit computation scheme, but explicit 
methods of computation have been reported to be used for graphs with discrete 
labels.

The shortest-path kernel is defined as
\begin{equation}\label{eq:sp}
  k^{\text{SP}}(G,H) = \hspace{-.5em} \sum_{\substack{u,v \in V(G),\\ u \neq v}} \sum_{\substack{w,z \in V(H),\\ w \neq z}} \hspace{-.5em} k_V(u,w) \cdot k_E(d_{uv},d_{wz})\cdot k_V(v,z),
\end{equation}
where $k_V$ is a kernel comparing vertex labels of the respective starting and 
end vertices of the paths. Here, $d_{uv}$ denotes the length of a shortest path from
$u$ to $v$ and $k_E$ is a kernel comparing path lengths with
$k_E(d_{uv}, d_{wz}) = 0$ if  $d_{uv} = \infty$ or $d_{wz} = \infty$.

Its computation is performed in two steps~\citep{Borgwardt2005}: 
for each graph $G$ of the data set the complete graph $G'$ on the vertex set $V(G)$
is generated, where an edge $uv$ is annotated with the length of a shortest path 
from $u$ to $v$. The shortest-path kernel then is equivalent to the walk kernel 
with fixed length $\wlength=1$ between these transformed graphs, where the kernel 
essentially compares all pairs of edges.
The kernel $k_E$ used to compare path lengths may, for example, be realized by the 
Brownian Bridge kernel~\citep{Borgwardt2005}.

For the application to graphs with discrete labels a more efficient method of 
computation by explicit mapping has been reported by~\citet[Section 3.4.1]{Shervashidze2011}.
When $k_V$ and $k_E$ both are Dirac kernels, each component of the feature vector
corresponds to a triple consisting of two vertex labels and a path length. This 
method of computation has been applied in several experimental comparisons, e.g., 
\citep{Kriege2012,Morris2016}. This feature map is directly obtained from our 
results in Section~\ref{sec:maps}. It is as well rediscovered from our explicit 
computation schemes for fixed length walk kernels reported in Section~\ref{sec:flrw}.
However, we can also derive explicit feature maps for non-trivial kernels 
$k_V$ and $k_E$. Then the dimension of the feature map increases due to the 
product of kernels, cf.\@ Equation~\ref{eq:sp}. We will study this and the effect 
on running time experimentally in Section~\ref{sec:evaluation}.

\subsection{Graphlet, subgraph and subgraph matching kernels}\label{sec:subgraph:exp-impl}
Subgraph or graphlet kernels have been proposed for unlabeled graphs or graphs
with discrete labels~\citep{Gaertner2003,Wale2008a,Shervashidze2009}. 
The subgraph matching kernel has been developed as an extension for attributed 
graphs~\citep{Kriege2012}.

Given two graphs $G$ and $H$ in $\mathcal{G}$, the \emph{subgraph kernel} is 
defined as
\begin{equation}\label{eq:kernel:subgraph}
k^\subseteq(G, H) = \sum_{G' \subseteq G} \sum_{H' \subseteq H} k_\simeq(G',H'),
\end{equation}
where $k_\simeq \colon \mathcal{G} \times \mathcal{G} \rightarrow \{0,1\}$ is the
isomorphism kernel, i.e., $k_\simeq(G',H')=1$ if and only if $G'$ and $H'$ are isomorphic.
A similar kernel was defined by~\citet{Gaertner2003} and its computation was shown
to be $\mathsf{NP}$-hard. However, it is polynomial time computable when 
considering only subgraphs up to a fixed size. 
The subgraph kernel, cf.\@ Equation~\eqref{eq:kernel:subgraph}, is easily 
identified as an instance of the crossproduct kernel, cf.\@ Equation~\eqref{eq:kernel:crossproduct}. The base kernel $k_\simeq$ is not the 
trivial Dirac kernel, but binary, cf.\@ Section~\ref{sec:maps:dirac}.
The equivalence classes induced by $k_\simeq$ are referred to as isomorphism classes
and distinguish subgraphs up to isomorphism. The feature map $\phi_\subseteq$
of $k^\subseteq$ maps a graph to a vector, where each component counts the number 
of occurrences of a specific graph as subgraph in $G$.
Determining the isomorphism class of a graph is known as \emph{graph canonization 
problem} and well-studied. By solving the graph canonization problem instead of the 
graph isomorphism problem we obtain an explicit feature map for the subgraph kernel.
Although graph canonization clearly is at least as hard as graph isomorphism, 
the number of canonizations required is linear in the number of
subgraphs, while a quadratic number of isomorphism tests would be required for
a single na\"{\i}ve computation of the kernel. The gap in terms of runtime even 
increases when computing a whole kernel matrix, cf.\@ Section~\ref{sec:preliminary}.

Indeed, the observations above are key to several graph kernels.
The graphlet kernel~\citep{Shervashidze2009}, also see Section~\ref{sec:related-work}, 
is an instance of the subgraph kernel and computed by explicit feature maps. 
However, only unlabeled graphs of small size are considered by the graphlet kernel, 
such that the canonizing function can be computed easily. The same approach was 
taken by~\citet{Wale2008a} considering larger connected subgraphs of labeled 
graphs derived from chemical compounds. On the contrary, for attributed graphs 
with continuous vertex labels, the function $k_\simeq$ is not sufficient to 
compare subgraphs adequately.
Therefore, subgraph matching kernels were proposed by \citet{Kriege2012}, which 
allow to specify arbitrary kernel functions to compare vertex and edge 
attributes.
Essentially, this kernel considers all mappings between subgraphs and scores
each mapping by the product of vertex and edge kernel values of the vertex
and edge pairs involved in the mapping. When the specified vertex and edge 
kernels are Dirac kernels, the subgraph matching kernel is equal to the subgraph
kernel up to a factor taking the number of automorphisms between subgraphs into 
account~\citep{Kriege2012}. 
Based on the above observations explicit mapping of subgraph matching kernels 
is likely to be more efficient when subgraphs can be adequately compared by a
binary kernel.

\subsection{Discussion}
A crucial observation of our study of feature maps for composed kernels in 
Section~\ref{sec:maps} is that the number of components of the feature vectors 
increases multiplicative under taking products of kernels; this also holds in 
terms of non-zero components. 
Unless feature vectors have few non-zero components, this operation is likely
to be prohibitive in practice. However, if feature vectors have exactly one 
non-zero component like those associated with binary kernels, taking products of 
kernels is manageable by sparse data structures.

In this section we have systematically constructed and discussed feature maps
of several graph kernels and the observation mentioned above is expected to 
affect the kernels to varying extents.
While weighted vertex kernels do not take products of vertex and edge kernels,
the shortest-path kernel and, in particular, the subgraph matching and fixed 
length walk kernels heavily rely on multiplicative combinations.
Considering the relevant special case of a Dirac kernel, which leads to feature
vectors with only one non-zero component, the rapid growth due to multiplication 
is tamed. In this case the number of substructures considered as different 
according to the vertex and edge kernels determines the number of non-zero 
components of the feature vectors associated with the graph kernel.
The basic characteristics of the considered graph kernels are summarized in 
Table~\ref{tab:kernel-map}.
The sparsity of the feature vectors of the vertex and edge kernels is an important 
intervening factor, which is difficult to assess theoretically and we proceed
by an experimental study.

\begin{table*}
\small
\setlength{\tabcolsep}{10pt}
\def\arraystretch{1.3}
\begin{center}
  \caption{Graph kernels and their properties. We consider graphs on $n$ vertices
   and $m$ edges with maximum degree $\Delta$. 
   Let $\delta$ be the maximum diameter of any graph $G$ in the data sets and $C$ 
   the total number of colors appearing in $h$ iterations of Weisfeiler-Lehman 
   refinement.
   The dimension of the feature space associated with $k_V$ and $k_E$ is denoted 
   by $d_V$ and $d_E$, respectively, while $\mathsf{T}_{V}$ and $\mathsf{T}_{E}$
   is the time to evaluate the vertex and edge kernel once. The parameters $\wlength$
   and $s$ denote the walk length and subgraph size, respectively.}
  \label{tab:kernel-map}
  \begin{tabular}{@{}lccc@{}}\toprule
    \textbf{Graph kernel}          & \textbf{Parts}           & \textbf{Dimension}       & \textbf{Running time (implicit)}                                                 \\\midrule
    \textsc{GraphHopper}           & $\bigO(n)$               & $\delta^2 d_V$           & $\bigO\left(n^2(m + \log n + \mathsf{T}_{V} + \delta^2 )\right)$ \\
    \textsc{GraphInvariant}        & $\bigO(n)$               & $C d_V$                  & $\bigO\left(hm + n^2 \mathsf{T}_{V}\right)$                      \\
    \textsc{FixedLengthWalk} & $\bigO(\Delta^\wlength)$ & $d_V+(d_V d_E)^\wlength$ & $\bigO\left(\wlength(n^2+m^2) + n^2\mathsf{T}_{V}+m^2\mathsf{T}_{E}\right)$         \\
    \textsc{ShortestPath}          & $\bigO(n^2)$             & $d_V^2 d_E$              & $\bigO\left(n^2\mathsf{T}_{V}+n^4\mathsf{T}_{E}\right)$          \\
    \textsc{SubgraphMatching}      & $\bigO(n^s)$             & $(d_V d_E^2)^s s!$       & $\bigO\left(sn^{2s+2} + n^2\mathsf{T}_{V}+n^4\mathsf{T}_{E} \right)$                                   \\
    \bottomrule
  \end{tabular}
\end{center}
\end{table*}

\section{Experimental evaluation} \label{sec:evaluation}
Our goal in this section is to answer the following questions experimentally.
\begin{enumerate}[leftmargin=*,labelindent=0pt,label=\bfseries Q\arabic*]
  \item Are approximative explicit feature maps of kernels for attributed graphs 
        competitive in terms of running time and classification accuracy compared
        to exact implicit computation?
  \item Are exact explicit feature maps competitive for kernels relying on
        multiplication when the Dirac kernel is used to compare discrete labels?
        How do the graph properties such as label diversity affect the running 
        time?
        \begin{enumerate}[label=\bfseries \alph*)]
          \item How does the fixed length walk kernel behave with regard to these 
                questions and what influence does the walk length have?
          \item Can the same behavior regarding running time be observed for the 
                graphlet and subgraph matching kernel?
        \end{enumerate}
\end{enumerate}

\subsection{Experimental setup}
All algorithms were implemented in Java and the default Java \texttt{HashMap} class
was used to store feature vectors.
Due to the varied memory requirements of the individual series of experiments, 
different hardware platforms were used in Sections~\ref{sec:evaluation:attr},~\ref{sec:evaluation:flrw}
and~\ref{sec:evaluation:sm}.
In order to compare the running time of both computational strategies 
systematically without the dependence on one specific kernel method, we report
the running time to compute the quadratic kernel matrices, unless stated otherwise.
We performed classification experiments using the $C$-SVM implementation 
LIBSVM \citep{Chang2011}.
We report mean prediction accuracies obtained by $10$-fold cross-validation 
repeated $10$ times with random fold assignments.
Within each fold all necessary parameters were selected by cross-validation 
based on the training set. This includes the regularization parameter $C$ 
selected from $\{10^{-3}, 10^{-2}, \dots, 10^3\}$, all kernel parameters, where 
applicable, and whether to normalize the kernel matrix.

\subsubsection{Data sets}
We performed experiments on synthetic and real-world data sets from different 
domains, see Table~\ref{tab:datasets} for an overview on their characteristics. 
All data sets can be obtained from our publicly available collection~\citep{KKMMN2016}
unless the source is explicitly stated.

\paragraph{Small molecules.}
Molecules can naturally be represented by graphs, where vertices represent atoms 
and edges represent chemical bonds. \textsc{Mutag} is a data set of chemical 
compounds divided into two classes according to their mutagenic effect on a 
bacterium. This small data set is commonly used in the graph kernel literature.
In addition we considered the larger data set \textsc{U251}, which stems from 
the NCI Open Database provided by the National Cancer Institute (NCI). 
In this data set the class labels indicate the ability of a compound to inhibit 
the growth of the tumor cell line U251. We used the data set processed by 
\citet{Swamidass2005}, which is publicly available from the ChemDB 
website.\footnote{\url{http://cdb.ics.uci.edu}}

\paragraph{Macromolecules.}
\textsc{Enzymes} and \textsc{Proteins} both represent macromolecular structures
and were obtained from \citep{Borgwardt2005a, Feragen2013}. The following graph
model has been employed.
Vertices represent secondary structure elements (SSE) and are annotated
by their type, i.e., helix, sheet or turn, and a rich set of physical and 
chemical attributes. Two vertices are connected by an edge if they are neighbors 
along the amino acid sequence or one of three nearest neighbors in space. 
Edges are annotated with their type, i.e., structural or sequential.
In \textsc{Enzymes} each graph is annotated by an EC top level class, which 
reflects the chemical reaction the enzyme catalyzes, \textsc{Proteins} is divided 
into enzymes and non-enzymes.

\paragraph{Synthetic graphs.}
The data sets \textsc{SyntheticNew} and \textsc{Synthie} were synthetically 
generated to obtain classification benchmarks for graph kernels with attributes.
We refer the reader to the publications~\citep{Feragen2013}\footnote{We used the 
updated version of the data set \textsc{Synthetic} published together with the 
Erratum to \citep{Feragen2013}.} and~\citep{Morris2016}, respectively, for the 
details of the generation process. Additionally, we generated new synthetic graphs
in order to systematically vary graph properties of interest like the label 
diversity, which we expect to have an effect on the running time according to 
our theoretical analysis.

\begin{table*}
\setlength{\tabcolsep}{4.5pt}
		\begin{center}
		\ra{1.0}
		\caption{Data set statistics and properties.}
			\begin{tabular}{@{}lrcrrcc@{}}\toprule
			\multirow{3}{*}{\vspace*{8pt}\textbf{Data set}}&\multicolumn{6}{c}{\textbf{Properties}}\\\cmidrule{2-7}
			&  Graphs& Classes  & Avg. $|V|$ & Avg. $|E|$ & Vertex/edge labels & Attributes  \\\midrule
			\textsc{Mutag}  & 188 & 2 & 17.9 & 19.8 & $+$/$+$ & $-$ \\
			\textsc{U251}  & 3\,755 & 2 & 23.1 & 24.8 & $+$/$+$ & $-$ \\
			\textsc{Enzymes} & 600 & 6 & 32.6 & 62.1 & $+$/$+$ & 18 \\
			\textsc{Proteins} & 1\,113 & 2 & 39.1 & 72.8 & $+$/$-$ & 1 \\
			\textsc{SyntheticNew}  & 300 & 2 & 100.0& 196.3& $-$/$-$ & 1 \\
			\textsc{Synthie}   & 400 & 4 & 95.0 & 172.9 & $-$/$-$ & 15 \\
			\bottomrule
		\end{tabular}
		\label{tab:datasets}
	\end{center}
\end{table*}

\subsection{Approximative explicit feature maps of kernels for attributed graphs (Q1)}\label{sec:evaluation:attr}
We have derived explicit computation schemes of kernels for attributed
graphs, which have been proposed with an implicit method of computation.
Approximative explicit computation is possible under the assumption that the 
kernel for the attributes can be approximated by explicit feature maps.
We compare both methods of computation w.r.t.\@ their running time and the 
obtained classification accuracy on the four attributed graph data sets
\textsc{Enzymes}, \textsc{Proteins}, \textsc{SyntheticNew} and \textsc{Synthie}.
Since the discrete labels alone are often highly informative, we ignored discrete 
labels if present and considered the real-valued vertex annotations only in order 
to obtain challenging classification problems.
All attributes where dimension-wise linearly scaled to the range $[0,1]$ in a 
preprocessing step.

\subsubsection{Method}
We employed three kernels for attributed graphs: the shortest-path kernel, cf.
Section~\ref{sec:shortes-path:exp-impl}, the \textsc{GraphHopper} and 
\textsc{GraphInvariant} kernel as described in Section~\ref{sec:wv}.
Preliminary experiments with the subgraph matching kernel showed that it cannot 
be computed by explicit feature maps for non-trivial subgraph sizes due to 
its high memory consumption. 
The same holds for fixed length walk kernels with walk length $\wlength \geq 3$.
Therefore, we did not consider these kernels any further regarding Q1, but 
investigate them for graphs with discrete labels in Sections~\ref{sec:evaluation:flrw}
and~\ref{sec:evaluation:sm} to answer Q2a and Q2b.

For the shortest-path kernel we used the Dirac kernel to compare path lengths 
and selected the number of Weisfeiler-Lehman refinement steps for the 
\textsc{GraphInvariant} kernel from $h \in \{0, \dots, 7\}$.
For the comparison of attributes we employed the dimension-wise product of the hat kernel $k_\Delta$ as defined 
in Equation~\eqref{eq:rbf_hat} choosing $\delta$ from $\{0.2, 0.4, 0.6, 0.8, 1.0, 1.5, 2.0\}$.
The three kernels were computed functionally employing this kernel as a base line.
We obtained approximate explicit feature maps for the attribute kernel by the
method of~\citet{Rahimi2008} and used these to derive approximate explicit feature maps 
for the graph kernels. We varied the number of random binning features from 
$\{1, 2, 4, 8, 16, 32, 64\}$, which corresponds to the number of non-zero 
components of the feature vectors for the attribute kernel and controls its
approximation quality.
Please note that the running time is effected by the kernel parameters, i.e., 
$\delta$ of Equation~\eqref{eq:rbf_hat} and the number $h$ of Weisfeiler-Lehman 
refinement steps for \textsc{GraphInvariant}.
Therefore, in the following we report the running times for fixed values 
$\delta=1$ and $h=3$, which were selected frequently by cross-validation.

All experiments were conducted using Oracle Java v1.8.0 on an Intel Xeon 
E5-2640 CPU at 2.5GHz with 64GB of RAM using a single processor only.

\begin{figure}
   \centering
   \subfigure[\textsc{Enzymes}, \textsc{GH}]{\label{fig:im_ex:enzymes:gh}
      \includegraphics[scale=.87]{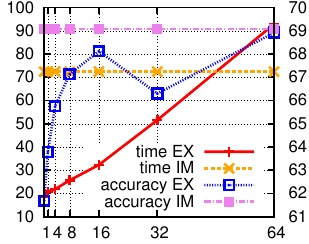}
   } \hfill
   \subfigure[\textsc{Enzymes}, \textsc{GI}]{\label{fig:im_ex:enzymes:wvwl}
      \includegraphics[scale=.87]{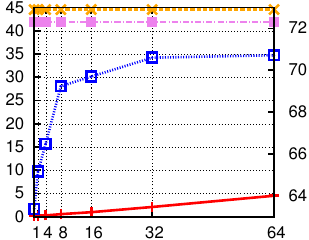}
   } \hfill
   \subfigure[\textsc{Enzymes}, \textsc{SP}]{\label{fig:im_ex:enzymes:sp}
      \includegraphics[scale=.87]{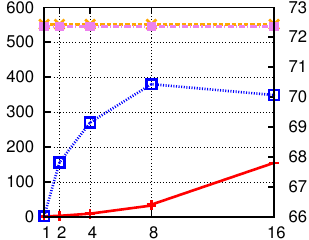}
   } \hfill
   \subfigure[\textsc{Synthie}, \textsc{GH}]{\label{fig:im_ex:synthie:gh}
      \includegraphics[scale=.87]{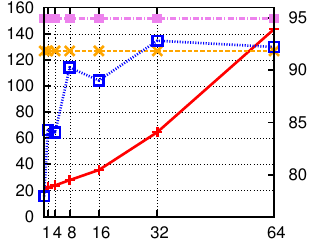}
   } \hfill
   \subfigure[\textsc{Synthie}, \textsc{GI}]{\label{fig:im_ex:synthie:wvwl}
      \includegraphics[scale=.87]{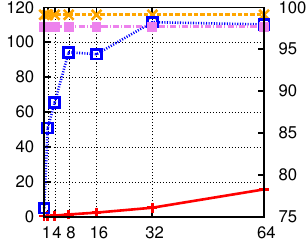}
   } \hfill
   \subfigure[\textsc{Synthie}, \textsc{SP}]{\label{fig:im_ex:synthie:sp}
      \includegraphics[scale=.87]{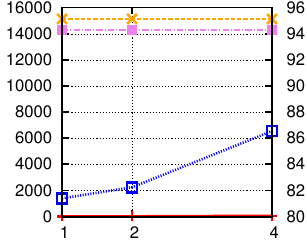}
   } \hfill
   \subfigure[\textsc{Proteins}, \textsc{GH}]{\label{fig:im_ex:proteins:gh}
      \includegraphics[scale=.87]{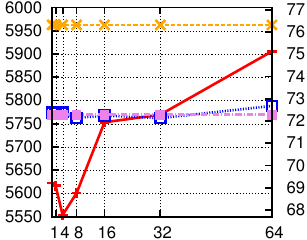}
   } \hfill
   \subfigure[\textsc{Proteins}, \textsc{GI}]{\label{fig:im_ex:proteins:wvwl}
      \includegraphics[scale=.87]{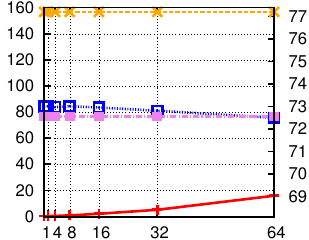}
   } \hfill
   \subfigure[\textsc{Proteins}, \textsc{SP}]{\label{fig:im_ex:proteins:sp}
      \includegraphics[scale=.87]{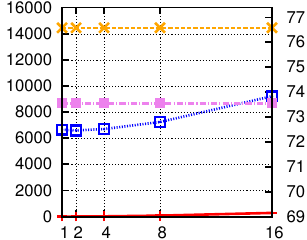}
   } \hfill
   \subfigure[\textsc{SyntheticNew}, \textsc{GH}]{\label{fig:im_ex:synthetic:gh}
      \includegraphics[scale=.87]{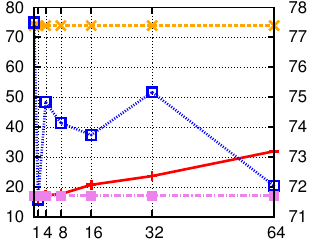}
   } \hfill
   \subfigure[\textsc{SyntheticNew}, \textsc{GI}]{\label{fig:im_ex:synthetic:wvwl}
      \includegraphics[scale=.87]{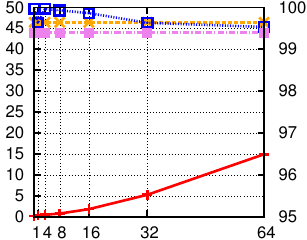}
   } \hfill
   \subfigure[\textsc{SyntheticNew}, \textsc{SP}]{\label{fig:im_ex:synthetic:sp}
      \includegraphics[scale=.87]{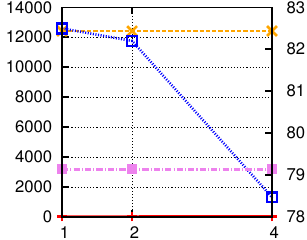}
   }
   \caption{Running times and classification accuracies of graph kernels 
   approximated by explicit feature maps with $2^i$, $i \in \{0,\dots,4\}$,
   iterations of random binning. The results of exact implicit computation are
   shown as a base line   
   (left $y$-axes shows the running time in seconds, right $y$-axes the accuracy 
   in percent; GH --- \textsc{GraphHopper}, GI --- \textsc{GraphInvariant}, 
   SP --- \textsc{ShortestPath}).
   } 
   \label{fig:im_ex}
\end{figure}

\subsubsection{Results and discussion}
We were not able to compute the shortest-path kernel by explicit feature maps 
with more than 16 iterations of binning for the base kernel on \textsc{Enzymes}
and \textsc{Proteins} and no more than 4 iterations on \textsc{Synthie} and
\textsc{SyntheticNew} with 64 GB of main memory. The high memory consumption
of this kernel is in accordance with our theoretical analysis, since 
the multiplication of vertex and edge kernels drastically increases the number
of non-zero components of the feature vectors. This problem does not effect the 
two weighted vertex kernels to the same extent.
We observed the general trend that the memory consumption and running time
increases with small values of $\delta$. This is explained by the fact that 
the number of components of the feature vectors of the vertex kernels increases 
in this case. Although the number of non-zero components does not increase for 
these feature vectors, it does for the graph kernel feature vectors, since the number 
of vertices with attributes falling into different bins increases.

The results on running time and accuracy are summarized in Figure~\ref{fig:im_ex}. 
For the two data sets \textsc{Enzymes} and \textsc{Synthie} we observe that the 
classification accuracy obtained by the approximate explicit feature maps 
approaches the accuracy obtained by the exact method with increasing number of 
binning iterations. For the other two data sets the accuracy does not improve
with the number of iterations. For \textsc{Proteins} the kernels obtained with a 
single iteration of binning, i.e., essentially applying a Dirac kernel, achieve
an accuracy at the same level as the exact kernel obtained by implicit computation. 
This suggests that for this data set a trivial comparison of attributes is sufficient 
or that the attributes are not essential for classification at all.
For \textsc{SyntheticNew} the kernels using a single iteration of binning are even 
better than the exact kernel, but get worse as the number of iterations increases.
One possible explanation for this is that the vertex kernel used is not a good 
choice for this data set.

With few iteration of binning the explicit computation scheme is always faster
than the implicit computation. The growth in running time with increasing number 
of binning iterations for the vertex kernel varies between the graph 
kernels. Approximating the \textsc{GraphHopper} kernel by explicit feature maps
with 64 binning iteration for the vertex kernel leads to a running time similar
to the one required for its exact implicit computation on all data sets with 
exception of \textsc{SyntheticNew}. On this data set explicit computation remains
faster.
For \textsc{GraphInvariant} explicit feature maps lead to a running time which is orders
of magnitude lower than implicit computation. Although both, \textsc{GraphHopper}
and \textsc{GraphInvariant} are weighted vertex kernels, this difference can be explained
by the number of non-zero components in the feature vectors of the weight kernel.
We observe that \textsc{GraphInvariant} clearly provides the best classification accuracy 
for two of the four data sets and is competitive for the other two. At the same
time \textsc{GraphInvariant} can be approximated very efficiently by explicit feature maps.
Therefore, even for attributed graphs effective and efficient graph kernels can 
be obtained from explicit feature maps by our approach.

\subsection{Kernels for graphs with discrete labels (Q2)}\label{sec:evaluation:overall}
In order to get a first impression of the runtime behavior of explicit and implicit
computation schemes on graphs with discrete labels, we computed the kernel matrix
for the standard data sets ignoring the attributes, if present.
The experiments were conducted using Java OpenJDK v1.7.0 on an Intel Core i7-3770 CPU 
at 3.4GHz (Turbo Boost disabled) with 16GB of RAM using a single processor only.
The reported running times are average values over $5$ runs.

The results are summarized in Table~\ref{tab:times_discrete}.
For the shortest-path kernel explicit mapping clearly outperforms implicit 
computation by several orders of magnitude with respect to running time.
This is in accordance with our theoretical analysis and our results suggest to 
always use explicit computation schemes for this kernel whenever a Dirac kernel
is adequate for label and path length comparison. In this case memory consumption
is unproblematic, in contrast to the setting discussed in 
Section~\ref{sec:evaluation:attr}.

Note that the explicit computation of the fixed length random walk kernel is
extremely efficient on \textsc{SyntheticNew} and \textsc{Synthie}, which have 
uniform discrete labels, but strongly depends on the walk length for the other 
data sets. This obsevation is investigated in detail in Section~\ref{sec:evaluation:flrw}.
The running times of the connected subgraph matching kernel and the graphlet 
kernel are studied exhaustively in Section~\ref{sec:evaluation:sm}.

\newcommand*{\thour}{\text{h\,}}
\newcommand*{\tmin}{\ensuremath{^{\prime}\mkern-1.2mu}}
\newcommand*{\tsec}{\ensuremath{^{\prime\prime}\mkern-1.2mu}}
\begin{table*}
\setlength{\tabcolsep}{4.5pt}
		\begin{center}
		\ra{1.0}
		\caption{Running times of the fixed length walk kernel (FLRW$_\wlength$) with walk lengths $\wlength$, the shortest-path kernel (SP), connected subgraph matching kernel (CSM) and the graphlet kernel (GL) on graphs with discrete labels in seconds unless stated otherwise. OOM --- out of memory.}
			\begin{tabular}{@{}lrcccccc@{}}\toprule
			\multicolumn{2}{c}{\textbf{Kernel}} & \textsc{Mutag} & \textsc{U251} & \textsc{Enzymes} & \textsc{Proteins} & \textsc{SyntheticNew} & \textsc{Synthie} \\\midrule
			\multirow{11}{*}{\rotatebox{90}{\textbf{Implicit}}}
         & FLRW$_0$  & 0.618 & 250.3 & 20.67 & 100.8 & 159.5 & 224.2 \\
         & FLRW$_1$  & 0.606 & 281.6 & 23.45 & 116.4 & 202.3 & 284.1 \\
         & FLRW$_2$  & 0.652 & 303.4 & 26.03 & 132.0 & 236.2 & 330.7 \\
         & FLRW$_3$  & 0.617 & 323.8 & 28.18 & 143.0 & 270.4 & 377.1 \\
         & FLRW$_4$  & 0.653 & 343.8 & 30.63 & 156.2 & 304.3 & 424.2 \\
         & FLRW$_5$  & 0.693 & 363.7 & 32.65 & 169.5 & 336.9 & 468.6 \\
         & FLRW$_6$  & 0.733 & 383.5 & 34.86 & 182.5 & 371.2 & 513.6 \\
         & FLRW$_7$  & 0.779 & 404.4 & 36.94 & 195.8 & 404.4 & 558.9 \\
         & FLRW$_8$  & 0.870 & 425.3 & 38.16 & 208.8 & 438.0 & 603.9 \\
         & FLRW$_9$  & 0.877 & 447.7 & 39.97 & 221.3 & 470.1 & 648.1 \\
         & SP        & 5.272 & 2\thour6\tmin55\tsec & 9\tmin8\tsec & 4\thour58\tmin40\tsec & 2\thour14\tmin23\tsec & 2\thour39\tmin30\tsec \\
         & CSM        & 15.45 & 4\thour0\tmin16\tsec & 34\tmin15\tsec & OOM & $>$24\thour & $>$24\thour \\\midrule
			\multirow{11}{*}{\rotatebox{90}{\textbf{Explicit}}}
         & FLRW$_0$  & 0.004 & 0.868 & 0.029 & 0.081 & 0.008 & 0.016 \\
         & FLRW$_1$  & 0.014 & 2.827 & 0.080 & 0.141 & 0.024 & 0.032 \\
         & FLRW$_2$  & 0.019 & 7.844 & 0.170 & 0.251 & 0.040 & 0.051 \\
         & FLRW$_3$  & 0.035 & 14.96 & 0.466 & 0.545 & 0.056 & 0.070 \\
         & FLRW$_4$  & 0.058 & 31.73 & 1.518 & 1.207 & 0.072 & 0.092 \\
         & FLRW$_5$  & 0.147 & 64.57 & 4.629 & 2.991 & 0.089 & 0.110 \\
         & FLRW$_6$  & 0.461 & 107.8 & 13.58 & 6.476 & 0.104 & 0.128 \\
         & FLRW$_7$  & 1.127 & 170.9 & 37.72 & 12.07 & 0.124 & 0.150 \\
         & FLRW$_8$  & 2.491 & 346.0 & 95.03 & 24.48 & 0.141 & 0.172 \\
         & FLRW$_9$  & 4.809 & 646.8 & 278.4 & 56.62 & 0.161 & 0.192 \\
         & SP        & 0.120 & 27.82 & 0.907 & 3.121 & 1.332 & 1.459 \\
         & GL        & 0.011 & 3.512 & 0.205 & 0.354 & 0.186 & 0.310 \\
			\bottomrule
		\end{tabular}
		\label{tab:times_discrete}
	\end{center}
\end{table*}

\subsection{Fixed length walk kernels for graphs with discrete labels (Q2a)}\label{sec:evaluation:flrw}
Our comparison in Section~\ref{sec:evaluation:attr} showed that computation by explicit 
feature maps becomes prohibitive when vertex and edge kernels with feature 
vectors having multiple non-zero components are multiplied. This is even observed
for the shortest-path kernel, which applies a walk kernel of fixed length one.
Therefore, we study the implicit and explicit computation schemes of the 
fixed length walk kernel on graphs with discrete labels, which are compared by
the Dirac kernel, cf.\@ Equation~\eqref{eq:two-walk-kernel-dirac}.
Since both computation schemes produce the same kernel matrices, our main focus 
in this section is on running times.

The discussion of running times for walk kernels in 
Sections~\ref{sec:flrw:kernel-computation:implicit} 
and~\ref{sec:flrw:kernel-computation:explicit}
suggested that
\begin{enumerate}[label=(\roman*)]
 \item implicit computation benefits from sparse vertex and edge kernels,
 \item explicit computation is promising for graphs with a uniform label 
       structure, which exhibit few different features, and then scales to 
       large data sets. 
\end{enumerate}
We experimentally analyze this trade-off between label diversity and running
time for synthetic and real-world data sets ignoring any attributes, if present.
Finally, we use our walk kernels to compare graphs after applying different levels 
of label refinement using the Weisfeiler-Lehman method.
We used the same experimental method as reported in Section~\ref{sec:evaluation:overall}.

\subsubsection{Synthetic data sets}

In order to systematically vary the label diversity we generated synthetic graphs
by the following procedure: The number of vertices was determined by a Poisson 
distribution with mean $20$. Edges were inserted between a pair of vertices with
probability $0.1$. The label diversity depends on the parameter $p_V$. Edges were
uniformly labeled; a vertex obtained the label $0$ with probability $1-p_V$. 
Otherwise the labels $1$ or $2$ were assigned with equal probability. In addition,
we vary the data set size $d$ between $100$ and $300$ adding $20$ randomly 
generated graphs in each step. 

\begin{figure}
   \centering
   \subfigure[Implicit computation]{\label{fig:rw_implicit_explicit:implicit}
      \includegraphics[scale=.87]{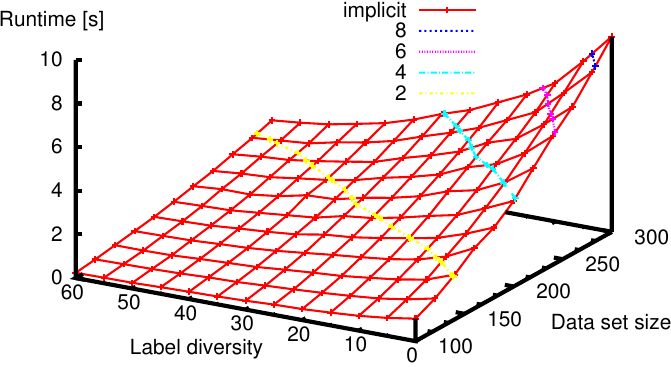}
   } \hfill
   \subfigure[Explicit computation]{\label{fig:rw_implicit_explicit:explicit}
      \includegraphics[scale=.87]{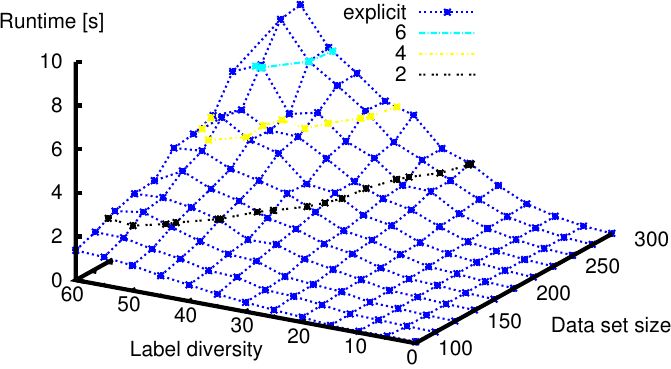}
   } \hfill
   \subfigure[Implicit and explicit computation]{\label{fig:rw_implicit_explicit:both}
      \includegraphics[scale=.87]{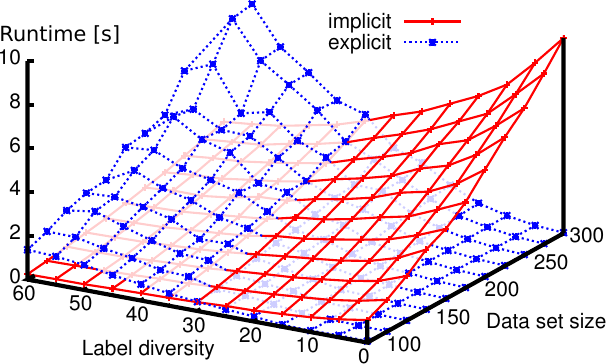}
   }
   \caption{Running time to generate the kernel matrix by implicit and explicit 
     computation of walk kernels with fixed length $7$ for synthetic data sets
     with varying label diversity.
     Figures~\subref{fig:rw_implicit_explicit:implicit} and~\subref{fig:rw_implicit_explicit:explicit}
     show contour lines obtained by linear interpolation, \subref{fig:rw_implicit_explicit:both} 
     shows the two approaches in direct comparison.
   } 
   \label{fig:rw_implicit_explicit}
\end{figure}

The results are depicted in Figure~\ref{fig:rw_implicit_explicit}, where a label
diversity of $50$ means that $p_V=0.5$.
Figure~\ref{fig:rw_implicit_explicit:implicit}
shows that the running time for implicit computation increases with the data set 
size and decreases with the label diversity. 
This observation is in accordance with our hypotheses. When the label diversity
increases, there are less compatible pairs of vertices and the weighted direct 
product graph becomes smaller. Consequently, its computation and the counting of 
weighted walks require less running time.
For explicit computation we observe a different trend: While the running time
increases with the size of the data set, the approach is extremely
efficient for graphs with uniform labels ($p_V=0$) and becomes slower when the
label diversity increases, cf.\@ Figure~\ref{fig:rw_implicit_explicit:explicit}.
Combining both results, cf.\@ Figure~\ref{fig:rw_implicit_explicit:both}, shows that 
both approaches yield the same running time for a label diversity of 
$p_V \approx 0.3$, while for higher values of $p_V$ implicit computation is preferable
and explicit otherwise.

\subsubsection{Molecular data sets}
In the previous section we have observed how both approaches behave when the label 
diversity is varied. We use the data set \textsc{U251} of graphs derived from 
small molecules to analyze the running time on a real-world data set with a 
predetermined label diversity.
Vertex labels correspond to the atom types and edge labels represent single, double,
triple and aromatic bonds, respectively.
This time we vary the walk length and the data set size by starting with a random
subset and adding additional graphs that were selected randomly from the remaining
graphs of the data set.

\begin{figure}
   \centering
   \subfigure[Implicit computation]{\label{fig:rw_implicit_explicit:length:implicit}
      \includegraphics[scale=.87]{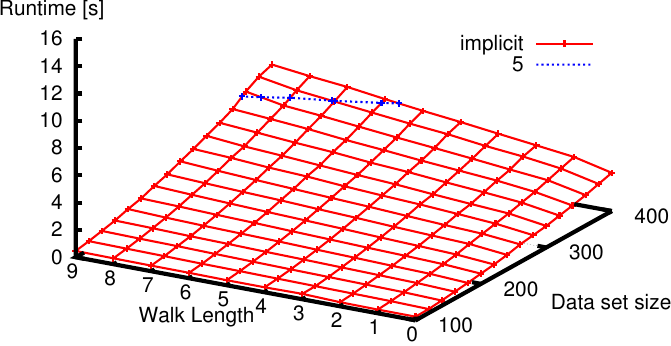}
   } \hfill
   \subfigure[Explicit computation]{\label{fig:rw_implicit_explicit:length:explicit}
      \includegraphics[scale=.87]{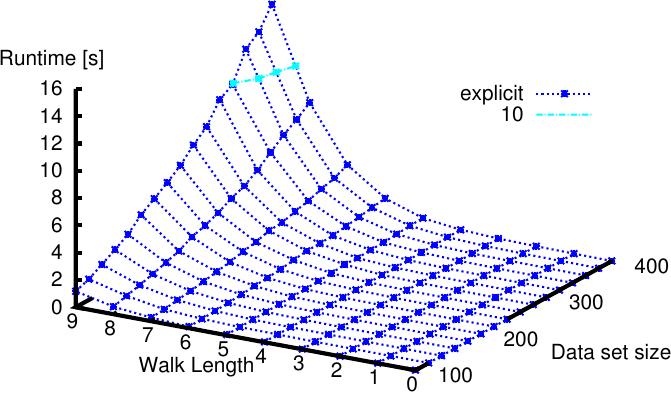}
   } \hfill
   \subfigure[Implicit and explicit computation]{\label{fig:rw_implicit_explicit:length:both}
      \includegraphics[scale=.87]{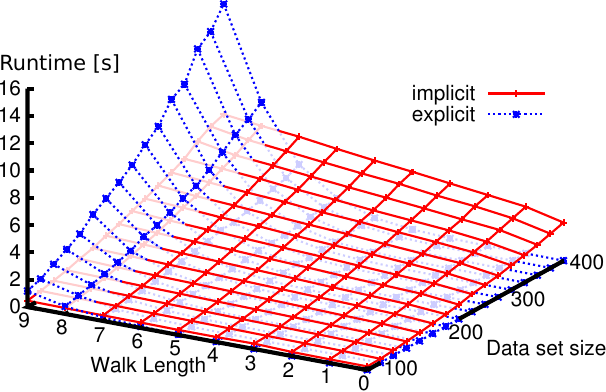}
   }
   \caption{Running time to generate the kernel matrix by implicit and explicit 
     computation of walk kernels with varying length for the molecular data set \textsc{U251}.
     Figures~\subref{fig:rw_implicit_explicit:length:implicit} and~\subref{fig:rw_implicit_explicit:length:explicit}
     show contour lines obtained by linear interpolation, \subref{fig:rw_implicit_explicit:length:both} 
     shows the two approaches in direct comparison.
   }
   \label{fig:rw_implicit_explicit:length}
\end{figure}

Figure~\ref{fig:rw_implicit_explicit:length:implicit} shows that the running time
of the implicit computation scheme heavily depends on the size of the data set.
The increase with the walk length is less considerable. This can be explained by
the time $\mathsf{T}_{\WDPG}$ required to compute the product graph, which is always needed 
independent of the walk length.
For short walks explicit computation is very efficient, even for larger data 
sets, cf.\@ Figure~\ref{fig:rw_implicit_explicit:length:explicit}.
However, when a certain walk length is reached the running time increases 
drastically. This can be explained by the growing number of different label 
sequences. Notably for walks of length $8$ and $9$ the running
time also largely increases with the data set size. This indicates that the time
$\mathsf{T}_{\text{dot}}$ has a considerable influence on the running time. In the following 
section we analyze the running time of the different procedures for the two
algorithms in more detail.
Figure~\ref{fig:rw_implicit_explicit:length:both} shows that for walk length up
to $7$ explicit computation beats implicit computation on the molecular data
set.

\subsubsection{\textsc{Enzymes} and \textsc{Mutag}}

We have shown that up to a certain walk length explicit computation is more
efficient than implicit computation. We want to clarify the relation between
the walk length and the prediction accuracy in a classification task. 
In addition, we analyze the ratio between the time $\mathsf{T}_\phi$ for computing the 
explicit mapping and $\mathsf{T}_{\text{dot}}$ for taking dot products.
For the implicit computation scheme we want to clarify the running time of 
$\mathsf{T}_{\WDPG}$ and the time required for counting weighted walks. 
We apply both algorithms to two widely used data sets, \textsc{Mutag} and 
\textsc{Enzymes}, and vary the walk length, see Table~\ref{tab:datasets} for 
details on these data sets.

\begin{figure}[tb]
\vspace{-0.12cm}
  \centering
  \subfigure[\textsc{Mutag}]{\label{fig:runtime_accuracy_detail:mutag}
    \includegraphics[scale=.71]{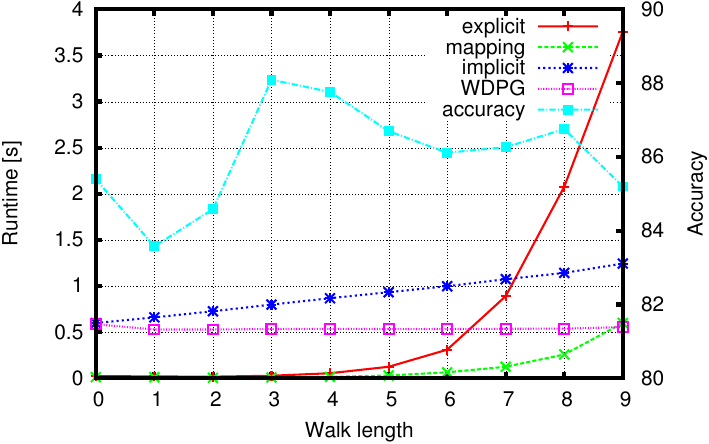}
  } \hfill
  \subfigure[\textsc{Enzymes}]{\label{fig:runtime_accuracy_detail:enzyme}
    \includegraphics[scale=.71]{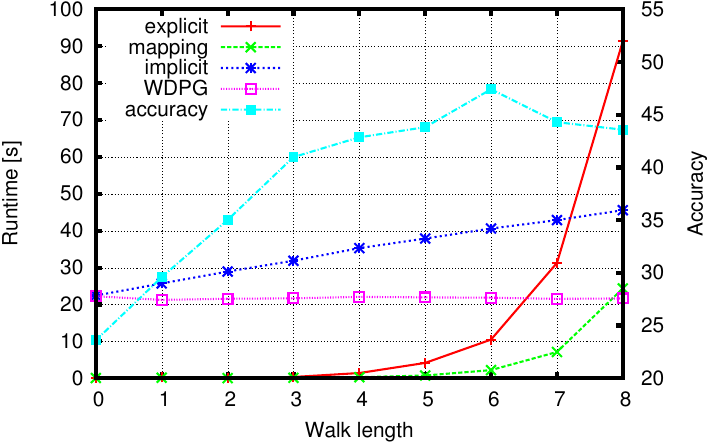}
  }
  \caption{Running time to generate the kernel matrix by explicit and implicit 
    computation on the \textsc{Enzymes} and \textsc{Mutag} data sets depending 
    on the walk length.
    Both approaches compute the same kernel matrix which leads to the accuracy 
    plotted on the right y-axis.}
   \label{fig:runtime_accuracy_detail}
\end{figure}

Figure~\ref{fig:runtime_accuracy_detail} shows the running time of both algorithms
depending on the walk length and gives the time for product graph computation
and explicit mapping, respectively. In addition, the prediction accuracy is 
presented.
For both data sets we observe that up to a walk length of $7$ explicit mapping is
more efficient. Notably a peak of the accuracy is reached for walk length smaller
than $7$ in both cases. For the \textsc{Mutag} data set walks of length $3$ provide the 
best results and walks of length $6$ for the \textsc{Enzymes} data set, i.e., in both cases
explicit mapping should be preferred when computing a walk kernel of fixed length.
The running time of the product graph computation is constant and does not 
depend on the walk length. For explicit mapping the time required to compute
the dot product becomes dominating when the walk length is increased. This can 
be explained by the fact that the generation of the kernel matrix involves a 
quadratic number of dot product computations.
Note that the given times include a quadratic number of product graph computations
while the times for generating the feature vectors include only a linear number of
operations.

As a side note, we also compared the accuracy of the fixed length walk kernels
to the accuracy reached by the geometric random walk kernel (GRW)
according to \citet{Gaertner2003}, which considers arbitrary 
walk lengths. 
The parameter $\gamma$ of the geometric random walk kernel was selected by 
cross-validation from $\{10^{-5}, 10^{-4},\dots,10^{-2}\}$.
We observed that the accuracy of the fixed length walk kernel is competitive on 
the \textsc{Mutag} data set 
(GRW $87.3$), 
and considerably better on the \textsc{Enzymes} data set 
(GRW $31.6$), 
cf.\@ Figure~\ref{fig:runtime_accuracy_detail}. 
This is remarkable, since the fixed length walk kernel yields best results with 
walk length $6$, for which it is efficiently computed by explicit mapping. 
However, this is not possible for the geometric random walk kernel.
For a more detailed discussion and comparison between fixed length walk kernels 
and the geometric random walk kernel we refer the reader to~\citep{Sug+2015}, 
which appeared after the conference publication~\citep{Kri+2014}.

\subsubsection{Weisfeiler-Lehman label refinement}
Walk kernels have been successfully combined with label refinement techniques~\citep{Mah'e2004}.
We employ the Weisfeiler-Lehman label refinement (WL) as described in
Section~\ref{sec:wv}.
To further analyze the sensitivity w.r.t.\ label diversity, we again use  the 
\textsc{Enzymes} data set, which consists of graphs with three vertex and two edge labels 
initially. We apply our algorithms for explicit and implicit computation after $0$ to $3$ iterations of WL, see 
Figure~\ref{fig:rw_implicit_explicit:length:wl}.

\begin{figure}
   \centering
      \includegraphics[scale=.87]{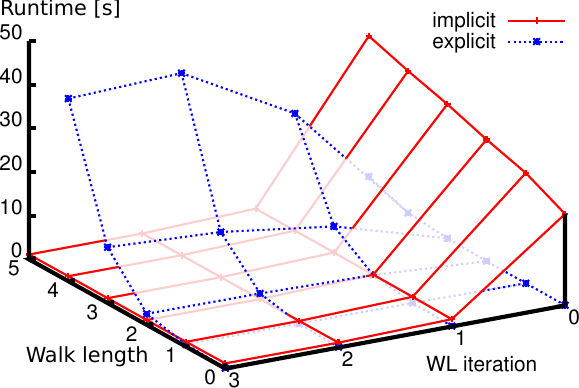}
   \caption{Running time to generate the kernel matrix by implicit and explicit 
     computation of walk kernels with varying walk length and iterations of
     Weisfeiler-Lehman refinement on the \textsc{Enzymes} data set.}
   \label{fig:rw_implicit_explicit:length:wl}
\end{figure}

If no refinement is applied, the explicit mapping approach beats the product
graph based algorithm for the used walk lengths. However, as soon as a single
iteration of label refinement is performed, the product graph based algorithm
becomes competitive for walk length $0$ and $1$ and outperforms the explicit
mapping approach for higher walk lengths. The running times do not change
substantially for more iterations of refinement. This indicates that a single
iteration of Weisfeiler-Lehman refinement results in a high label diversity
that does not increase considerably for more iterations on the \textsc{Enzymes} data set.
When using our walk-based kernel as base kernel of a Weisfeiler-Lehman graph 
kernel~\citep{Shervashidze2011}, our observation suggests to start with explicit 
computation and switch to the implicit computation scheme after few iterations 
of refinement.

\subsection{Subgraph kernels for graphs with discrete labels (Q2b)}\label{sec:evaluation:sm}
In this section we experimentally compare the running time of the subgraph matching
and the subgraph (or graphlet) kernel as discussed in Section~\ref{sec:subgraph:exp-impl}.
The explicit computation scheme, which is possible for graphs with discrete 
labels compared by the Dirac kernel, is expected to be favorable.

\subsubsection{Method}
We have reimplemented a variation of the graphlet kernel taking connected induced 
subgraphs with three vertices and discrete vertex and edge labels into account.
The only possible features are triangles and paths of length two. Graph 
canonization is realized by selecting the lexicographically smallest string 
obtained by traversing the graph and concatenating the observed labels. 
Our implementation is similar to 
the approach used by~\citet{Shervashidze2011} as extension of the original 
graphlet kernel~\citep{Shervashidze2009} to the domain of labeled graphs. We 
refer to this method as graphlet kernel in the following.
We compared the graphlet kernel to the connected subgraph matching kernel 
taking only connected subgraphs on three vertices into account.
In order not to penalize the running time of the connected subgraph matching 
kernel by additional automorphism computations, the weight function does not 
consider the number of automorphisms \citep[Theorem 2]{Kriege2012} and, 
consequently, not the same kernel values are computed.

The experiments were conducted using Sun Java JDK v1.6.0 on an Intel Xeon E5430 
machine at 2.66GHz with 8GB of RAM using a single processor only.
The reported running times are average values over $5$ runs.

\subsubsection{Results and discussion}
For real-world instances we observed that explicit computation outperforms
implicit computation by several orders of magnitude, cf.\@ Table~\ref{tab:times_discrete}.
This in accordance with our theoretical analysis. However, the practical 
considerations suggest that explicit and implicit computation behave 
complementary and subgraph matching kernels become competitive if a sufficient 
small and sparse weighted product graph is generated, which occurs for graphs
with increasing label diversity as for the walk-based kernels. Hence, we 
randomly generated graphs with the following procedure:
The number of vertices was determined by a Poisson distribution with mean $60$. 
Edges were inserted between a pair of vertices with probability $0.5$. Labels
for vertices and edges were assigned with equal probability, whereas the size
of the label alphabet $\mathcal{L}=\mathcal{L}_V=\mathcal{L}_E$ is varied from 
$1$, i.e., uniform labels, to $65$. Note that the graphs obtained by this 
procedure have different characteristics than those used to show the 
computational phase transition for walk-based kernels.
We vary the data set size $d$ between $100$ and $300$ adding $50$ randomly 
generated graphs in each step and analyze the running time to compute the 
$d \times d$ kernel matrix. For the subgraph matching kernel we used the Dirac 
kernel as vertex and edge kernel.

\begin{figure}
   \centering
   \subfigure[Subgraph matching kernel (implicit)]{\label{fig:sm_implicit_explicit:implicit}
      \includegraphics[scale=.87]{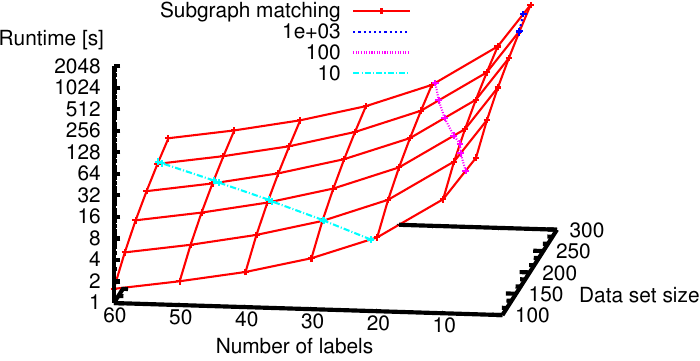}
   } \hfill
   \subfigure[Graphlets with three vertices (explicit)]{\label{fig:sm_implicit_explicit:explicit}
      \includegraphics[scale=.87]{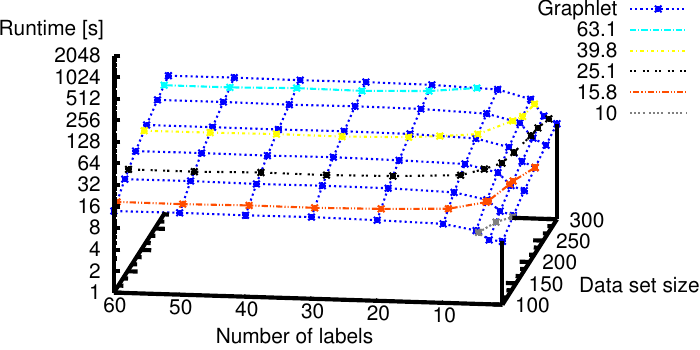}
   } \hfill
   \subfigure[Implicit and explicit computation]{\label{fig:sm_implicit_explicit:both}
      \includegraphics[scale=.87]{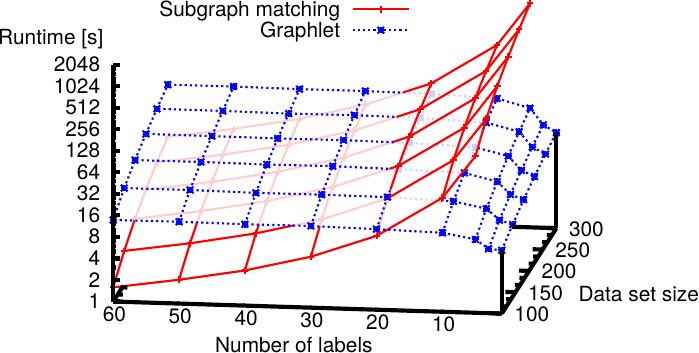}
   }
   \caption{Running time to generate the kernel matrix by implicit and explicit 
     computation for synthetic data sets with varying size of the label alphabet.
     Figures~\subref{fig:sm_implicit_explicit:implicit} and~\subref{fig:sm_implicit_explicit:explicit}
     show contour lines obtained by linear interpolation, \subref{fig:sm_implicit_explicit:both} 
     shows the two approaches in direct comparison.
   }
   \label{fig:sm_implicit_explicit}
\end{figure}

Figure~\ref{fig:sm_implicit_explicit} shows a computational phase transition:
For the synthetic data set the subgraph matching kernel is more efficient than
the graphlet kernel for instances with 20-30 different labels and its running 
time increases drastically when the number of labels decreases.
The graphlet kernel in contrast is more efficient for graphs with uniform or 
few labels. For more than 10 different labels, there is only a moderate increase 
in running time. This can be explained by the
fact that the number of features contained in the graphs does not increase 
considerably as soon as a certain number of different labels is reached. The
enumeration of triangles dominates the running time for this relatively dense
synthetic data set. The running time behavior of the subgraph matching kernel is 
as expected and is directly related to the size and number of edges in the 
weighted association graph.

Our synthetic data set differs from typical real-world instances, since we
generated dense graphs with many different labels, which are assigned uniformly
at random. For real-world data sets the graphlet kernel consistently outperforms
the subgraph matching kernel by orders of magnitude.
It would be interesting to further investigate where this computational phase 
transition occurs for larger subgraphs and to analyze if the implicit 
computation scheme then becomes competitive for instances of practical relevance. 
This requires the implementation of non-trivial graph canonization algorithms 
and remains future work. The results we obtained clearly suggest to prefer the 
explicit computation schemes when no flexible scoring by vertex and edge 
kernels is required.

\section{Conclusion}
The breadth of problems requiring to deal with graph data is growing rapidly and
graph kernels have become an efficient and widely used method for measuring 
similarity between graphs. 
Highly scalable graph kernels have recently been proposed for graphs with 
thousands and millions of vertices based on explicit graph feature maps. 
Implicit computation schemes are used for kernels with a large number of possible
features such as walks and when graphs are annotated by continuous attributes.

To set the stage for the experimental comparison, we actually made several
contributions to the theory and algorithmics of graph kernels. 
We presented a unified view on implicit and explicit graph features. More precisely, 
we derived explicit feature maps from the implicit feature space of convolution 
kernels and analyzed the circumstances rendering this approach feasible in practice. 
Using these results, we developed explicit computation schemes for random walk 
kernels \citep{Gaertner2003,Vishwanathan2010}, subgraph matching kernels 
\citep{Kriege2012}, and shortest-path kernels \citep{Borgwardt2005}. 
Moreover, we introduced weighted vertex kernels and derived explicit feature 
maps. As a result of this we obtained approximate feature maps for state-of-the-art 
kernels for graphs with continuous attributes such as the GraphHopper kernel~\citep{Feragen2013}.
For fixed length walk kernels we have developed implicit and explicit computation
schemes and analyzed their running time. Our theoretical results have been 
confirmed experimentally by observing a computational phase transition with 
respect to label diversity and walk lengths.

We have shown that kernels composed by multiplication of non-trivial base 
kernels may lead to a rapid growth of the number of non-zero components in 
the feature vectors, which renders explicit computation infeasible. One approach 
to alleviate this in future work is to introduce sampling or hashing to obtain compact feature 
representations in such cases, e.g., following the work by~\citet{Shi2009}.

\bibliographystyle{unsrtnat}
\bibliography{feature_maps}

\begin{thebibliography}{65}
\providecommand{\natexlab}[1]{#1}
\providecommand{\url}[1]{\texttt{#1}}
\expandafter\ifx\csname urlstyle\endcsname\relax
  \providecommand{\doi}[1]{doi: #1}\else
  \providecommand{\doi}{doi: \begingroup \urlstyle{rm}\Url}\fi

\bibitem[Kriege et~al.(2014)Kriege, Neumann, Kersting, and Mutzel]{Kri+2014}
N.~Kriege, M.~Neumann, K.~Kersting, and M.~Mutzel.
\newblock Explicit versus implicit graph feature maps: A computational phase
  transition for walk kernels.
\newblock In \emph{2014 {IEEE} International Conference on Data Mining}, pages
  881--886, 2014.

\bibitem[Haussler(1999)]{Haussler1999}
D.~Haussler.
\newblock Convolution kernels on discrete structures.
\newblock Technical Report UCSC-CRL-99-10, University of California, Santa
  Cruz, CA, USA, 1999.

\bibitem[Joachims(2006)]{Joachims2006}
T.~Joachims.
\newblock Training linear {SVM}s in linear time.
\newblock In \emph{Proceedings of the Twelfth ACM SIGKDD International
  Conference on Knowledge Discovery and Data Mining}, pages 217--226, 2006.

\bibitem[Rahimi and Recht(2008)]{Rahimi2008}
Ali Rahimi and Benjamin Recht.
\newblock Random features for large-scale kernel machines.
\newblock In \emph{Advances in Neural Information Processing Systems}, pages
  1177--1184, 2008.

\bibitem[Vedaldi and Zisserman(2012)]{Vedaldi2012}
A.~Vedaldi and A.~Zisserman.
\newblock Efficient additive kernels via explicit feature maps.
\newblock \emph{IEEE Transactions on Pattern Analysis and Machine
  Intelligence}, 34\penalty0 (3):\penalty0 480--492, 2012.

\bibitem[Feragen et~al.(2013)Feragen, Kasenburg, Petersen, Bruijne, and
  Borgwardt]{Feragen2013}
Aasa Feragen, Niklas Kasenburg, Jens Petersen, Marleen~D. Bruijne, and Karsten
  Borgwardt.
\newblock Scalable kernels for graphs with continuous attributes.
\newblock In C.j.c. Burges, L.~Bottou, M.~Welling, Z.~Ghahramani, and K.q.
  Weinberger, editors, \emph{Advances in Neural Information Processing
  Systems}, pages 216--224, 2013.
\newblock Erratum available at
  \url{http://image.diku.dk/aasa/papers/graphkernels_nips_erratum.pdf}.

\bibitem[Orsini et~al.(2015)Orsini, Frasconi, and De~Raedt]{Ors+2015}
F.~Orsini, P.~Frasconi, and L.~De~Raedt.
\newblock Graph invariant kernels.
\newblock In \emph{Proceedings of the Twenty-fourth International Joint
  Conference on Artificial Intelligence}, pages 3756--3762, 2015.

\bibitem[Borgwardt and Kriegel(2005)]{Borgwardt2005}
K.~M. Borgwardt and H.-P. Kriegel.
\newblock Shortest-path kernels on graphs.
\newblock In \emph{Proceedings of the Fifth IEEE International Conference on
  Data Mining}, pages 74--81, 2005.

\bibitem[Kriege and Mutzel(2012)]{Kriege2012}
N.~Kriege and Petra Mutzel.
\newblock Subgraph matching kernels for attributed graphs.
\newblock In \emph{Proceedings of the 29th International Conference on Machine
  Learning}. icml.cc / Omnipress, 2012.

\bibitem[Sugiyama and Borgwardt(2015)]{Sug+2015}
M.~Sugiyama and K.~M. Borgwardt.
\newblock Halting in random walk kernels.
\newblock In \emph{Advances in Neural Information Processing Systems}, pages
  1630--1638, 2015.

\bibitem[Vishwanathan et~al.(2010)Vishwanathan, Schraudolph, Kondor, and
  Borgwardt]{Vishwanathan2010}
S.~V.~N. Vishwanathan, N.~N. Schraudolph, R.~I. Kondor, and K.~M. Borgwardt.
\newblock Graph kernels.
\newblock \emph{Journal of Machine Learning Research}, 11:\penalty0 1201--1242,
  2010.

\bibitem[Ghosh et~al.(2018)Ghosh, Das, Gonçalves, Quaresma, and
  Kundu]{Gho+2018}
S.~Ghosh, N.~Das, T.~Gonçalves, P.~Quaresma, and M.~Kundu.
\newblock The journey of graph kernels through two decades.
\newblock \emph{Computer Science Review}, 27:\penalty0 88 -- 111, 2018.

\bibitem[Kriege et~al.(2019)Kriege, Johansson, and Morris]{Kriege2019}
Nils~M. Kriege, Fredrik~D. Johansson, and Christopher Morris.
\newblock A survey on graph kernels.
\newblock \emph{CoRR}, abs/1903.11835, 2019.
\newblock URL \url{https://arxiv.org/abs/1903.11835}.

\bibitem[Shervashidze et~al.(2009)Shervashidze, Vishwanathan, Petri, Mehlhorn,
  and Borgwardt]{Shervashidze2009}
N.~Shervashidze, S.V.N. Vishwanathan, T.~H. Petri, K.~Mehlhorn, and K.~M.
  Borgwardt.
\newblock Efficient graphlet kernels for large graph comparison.
\newblock In \emph{12th International Conference on Artificial Intelligence and
  Statistics}, 2009.

\bibitem[Horv\'{a}th et~al.(2004)Horv\'{a}th, G\"{a}rtner, and
  Wrobel]{Horv'ath2004}
T.~Horv\'{a}th, T.~G\"{a}rtner, and S.~Wrobel.
\newblock Cyclic pattern kernels for predictive graph mining.
\newblock In \emph{Proceedings of the Twelfth ACM SIGKDD International
  Conference on Knowledge Discovery and Data Mining}, pages 158--167, 2004.

\bibitem[Shervashidze and Borgwardt(2009)]{Shervashidze2009a}
Nino Shervashidze and Karsten Borgwardt.
\newblock Fast subtree kernels on graphs.
\newblock In Y.~Bengio, D.~Schuurmans, J.~Lafferty, C.~K.~I. Williams, and
  A.~Culotta, editors, \emph{Advances in Neural Information Processing Systems
  22}, pages 1660--1668, 2009.

\bibitem[Shervashidze et~al.(2011)Shervashidze, Schweitzer, van Leeuwen,
  Mehlhorn, and Borgwardt]{Shervashidze2011}
N.~Shervashidze, P.~Schweitzer, E.~J. van Leeuwen, K.~Mehlhorn, and K.~M.
  Borgwardt.
\newblock Weisfeiler-lehman graph kernels.
\newblock \emph{Journal of Machine Learning Research}, 12:\penalty0 2539--2561,
  2011.

\bibitem[Ramon and G{\"a}rtner(2003)]{Ramon2003}
J.~Ramon and T.~G{\"a}rtner.
\newblock Expressivity versus efficiency of graph kernels.
\newblock In \emph{First International Workshop on Mining Graphs, Trees and
  Sequences}, 2003.

\bibitem[Harchaoui and Bach(2007)]{Harchaoui2007}
Z.~Harchaoui and F.~Bach.
\newblock Image classification with segmentation graph kernels.
\newblock In \emph{IEEE Conference on Computer Vision and Pattern Recognition},
  2007.

\bibitem[Bai et~al.(2015)Bai, Rossi, Zhang, and Hancock]{Bai+2015}
L.~Bai, L.~Rossi, Z.~Zhang, and E.~R. Hancock.
\newblock An aligned subtree kernel for weighted graphs.
\newblock In \emph{Proceedings of the Thirty-second International Conference on
  Machine Learning}, pages 30--39, 2015.

\bibitem[Hido and Kashima(2009)]{Hido+2009}
S.~Hido and H.~Kashima.
\newblock A linear-time graph kernel.
\newblock In \emph{The Ninth IEEE International Conference on Data Mining},
  pages 179--188, 2009.

\bibitem[Neumann et~al.(2016)Neumann, Garnett, Bauckhage, and
  Kersting]{Neu+2016}
M.~Neumann, R.~Garnett, C.~Bauckhage, and K.~Kersting.
\newblock Propagation kernels: Efficient graph kernels from propagated
  information.
\newblock \emph{Machine Learning}, 102\penalty0 (2):\penalty0 209--245, 2016.

\bibitem[Martino et~al.(2012)Martino, Navarin, and Sperduti]{Martino2012}
Giovanni Da~San Martino, Nicolò Navarin, and Alessandro Sperduti.
\newblock A tree-based kernel for graphs.
\newblock In \emph{Proceedings of the 2012 SIAM International Conference on
  Data Mining}, pages 975--986. {SIAM} / Omnipress, 2012.

\bibitem[Fr\"{o}hlich et~al.(2005)Fr\"{o}hlich, Wegner, Sieker, and
  Zell]{Frohlich2005}
Holger Fr\"{o}hlich, J\"{o}rg~K. Wegner, Florian Sieker, and Andreas Zell.
\newblock Optimal assignment kernels for attributed molecular graphs.
\newblock In \emph{Proceedings of the 22nd international conference on Machine
  learning}, ICML '05, pages 225--232, New York, NY, USA, 2005. ACM.
\newblock ISBN 1-59593-180-5.

\bibitem[Vert(2008)]{Vert2008}
Jean-Philippe Vert.
\newblock The optimal assignment kernel is not positive definite.
\newblock \emph{CoRR}, abs/0801.4061, 2008.

\bibitem[Johansson and Dubhashi(2015)]{Johansson2015}
Fredrik~D. Johansson and Devdatt Dubhashi.
\newblock Learning with similarity functions on graphs using matchings of
  geometric embeddings.
\newblock In \emph{Proceedings of the 21th ACM SIGKDD International Conference
  on Knowledge Discovery and Data Mining}, KDD '15, pages 467--476, New York,
  NY, USA, 2015. ACM.
\newblock ISBN 978-1-4503-3664-2.

\bibitem[Schiavinato et~al.(2015)Schiavinato, Gasparetto, and
  Torsello]{Schiavinato2015}
Michele Schiavinato, Andrea Gasparetto, and Andrea Torsello.
\newblock Transitive assignment kernels for structural classification.
\newblock In Aasa Feragen, Marcello Pelillo, and Marco Loog, editors,
  \emph{Similarity-Based Pattern Recognition: Third International Workshop,
  SIMBAD 2015, Copenhagen, Denmark, October 12-14, 2015.}, pages 146--159,
  Cham, 2015. Springer International Publishing.
\newblock ISBN 978-3-319-24261-3.

\bibitem[Kriege et~al.(2016)Kriege, Giscard, and Wilson]{Kriege2016b}
Nils~M. Kriege, Pierre-Louis Giscard, and Richard Wilson.
\newblock On valid optimal assignment kernels and applications to graph
  classification.
\newblock In \emph{Advances in Neural Information Processing Systems}, pages
  1623--1631. Curran Associates, Inc., 2016.

\bibitem[Nikolentzos et~al.(2017)Nikolentzos, Meladianos, and
  Vazirgiannis]{Nikolentzos2017}
Giannis Nikolentzos, Polykarpos Meladianos, and Michalis Vazirgiannis.
\newblock Matching node embeddings for graph similarity.
\newblock In \emph{{AAAI}}, pages 2429--2435. {AAAI} Press, 2017.

\bibitem[Nikolentzos et~al.(2018)Nikolentzos, Meladianos, Limnios, and
  Vazirgiannis]{Nikolentzos2018}
Giannis Nikolentzos, Polykarpos Meladianos, Stratis Limnios, and Michalis
  Vazirgiannis.
\newblock A degeneracy framework for graph similarity.
\newblock In \emph{{IJCAI}}, pages 2595--2601. ijcai.org, 2018.

\bibitem[Kondor and Pan(2016)]{Kondor2016}
R.~Kondor and H.~Pan.
\newblock The multiscale laplacian graph kernel.
\newblock In \emph{Advances in Neural Information Processing Systems}, pages
  2982--2990, 2016.

\bibitem[Yanardag and Vishwanathan(2015)]{Yan+2015}
P.~Yanardag and S.~V.~N. Vishwanathan.
\newblock Deep graph kernels.
\newblock In \emph{21st ACM SIGKDD International Conference on Knowledge
  Discovery and Data Mining}, pages 1365--1374, 2015.

\bibitem[Narayanan et~al.(2016)Narayanan, Chandramohan, Chen, Liu, and
  Saminathan]{Narayanan2016}
Annamalai Narayanan, Mahinthan Chandramohan, Lihui Chen, Yang Liu, and
  Santhoshkumar Saminathan.
\newblock subgraph2vec: Learning distributed representations of rooted
  sub-graphs from large graphs.
\newblock In \emph{Workshop on Mining and Learning with Graphs}, 2016.
\newblock arXiv:1606.08928.

\bibitem[G\"{a}rtner et~al.(2003)G\"{a}rtner, Flach, and Wrobel]{Gaertner2003}
T.~G\"{a}rtner, P.~Flach, and S.~Wrobel.
\newblock On graph kernels: Hardness results and efficient alternatives.
\newblock In \emph{Learning Theory and Kernel Machines}, volume 2777 of
  \emph{Lecture Notes in Computer Science}, pages 129--143. Springer, 2003.

\bibitem[Kashima et~al.(2003)Kashima, Tsuda, and Inokuchi]{Kashima2003}
H.~Kashima, K.~Tsuda, and A.~Inokuchi.
\newblock Marginalized kernels between labeled graphs.
\newblock In \emph{Proceedings of the Twentieth International Conference on
  Machine Learning}, pages 321--328, 2003.

\bibitem[Mah\'{e} et~al.(2004)Mah\'{e}, Ueda, Akutsu, Perret, and
  Vert]{Mah'e2004}
P.~Mah\'{e}, N.~Ueda, T.~Akutsu, J.-L. Perret, and J.-P. Vert.
\newblock Extensions of marginalized graph kernels.
\newblock In \emph{Proceedings of the twenty-first international conference on
  Machine learning}, pages 70--, 2004.

\bibitem[Kang et~al.(2012)Kang, Tong, and Sun]{Kang2012}
U.~Kang, H.~Tong, and J.~Sun.
\newblock Fast random walk graph kernel.
\newblock In \emph{Proceedings of the 2012 SIAM International Conference on
  Data Mining}, pages 828--838, 2012.

\bibitem[Borgwardt et~al.(2005)Borgwardt, Ong, Schönauer, Vishwanathan, Smola,
  and Kriegel]{Borgwardt2005a}
K.~M. Borgwardt, C.~S. Ong, S.~Schönauer, S.~V.~N. Vishwanathan, A.~J. Smola,
  and H.-P. Kriegel.
\newblock Protein function prediction via graph kernels.
\newblock \emph{Bioinformatics}, 21 Suppl 1:\penalty0 i47--i56, 2005.

\bibitem[Su et~al.(2016)Su, Han, Harang, and Yan]{Su+2016}
Yu~Su, Fangqiu Han, Richard~E. Harang, and Xifeng Yan.
\newblock A fast kernel for attributed graphs.
\newblock In \emph{Proceedings of the 2016 SIAM International Conference on
  Data Mining}, 2016.

\bibitem[Martino et~al.(2018)Martino, Navarin, and Sperduti]{Martino2018}
Giovanni Da~San Martino, Nicol{\`{o}} Navarin, and Alessandro Sperduti.
\newblock Tree-based kernel for graphs with continuous attributes.
\newblock \emph{{IEEE} Trans. Neural Netw. Learning Syst.}, 29\penalty0
  (7):\penalty0 3270--3276, 2018.

\bibitem[Zhang et~al.(2018)Zhang, Cui, Neumann, and Chen]{Zhang2018}
Muhan Zhang, Zhicheng Cui, Marion Neumann, and Yixin Chen.
\newblock An end-to-end deep learning architecture for graph classification.
\newblock In \emph{AAAI Conference on Artificial Intelligence}, 2018.

\bibitem[Ying et~al.(2018{\natexlab{a}})Ying, You, Morris, Ren, Hamilton, and
  Leskovec]{Yin+2018}
R~Ying, J.~You, C.~Morris, X.~Ren, W.~L. Hamilton, and J.~Leskovec.
\newblock Hierarchical graph representation learning with differentiable
  pooling.
\newblock In \emph{Advances in Neural Information Processing Systems},
  2018{\natexlab{a}}.

\bibitem[Gilmer et~al.(2017)Gilmer, Schoenholz, Riley, Vinyals, and
  Dahl]{Gil+2017}
J.~Gilmer, S.~S. Schoenholz, P.~F. Riley, O.~Vinyals, and G.~E. Dahl.
\newblock Neural message passing for quantum chemistry.
\newblock In \emph{33rd International Conference on Machine Learning}, 2017.

\bibitem[Duvenaud et~al.(2015)Duvenaud, Maclaurin, Iparraguirre, Bombarell,
  Hirzel, Aspuru-Guzik, and Adams]{Duv+2015}
D.~K. Duvenaud, D.~Maclaurin, J.~Iparraguirre, R.~Bombarell, T.~Hirzel,
  A.~Aspuru-Guzik, and R.~P. Adams.
\newblock Convolutional networks on graphs for learning molecular fingerprints.
\newblock In \emph{Advances in Neural Information Processing Systems}, pages
  2224--2232, 2015.

\bibitem[Hamilton et~al.(2017{\natexlab{a}})Hamilton, Ying, and
  Leskovec]{Ham+2017}
W.~L. Hamilton, R.~Ying, and J.~Leskovec.
\newblock Inductive representation learning on large graphs.
\newblock In \emph{Advances in Neural Information Processing Systems}, pages
  1025--1035, 2017{\natexlab{a}}.

\bibitem[Bruna et~al.(2014)Bruna, Zaremba, Szlam, and LeCun]{Bru+2014}
J.~Bruna, W.~Zaremba, A.~Szlam, and Y.~LeCun.
\newblock Spectral networks and deep locally connected networks on graphs.
\newblock In \emph{International Conference on Learning Representations}, 2014.

\bibitem[Defferrard et~al.(2016)Defferrard, X., and Vandergheynst]{Def+2015}
M.~Defferrard, Bresson X., and P.~Vandergheynst.
\newblock Convolutional neural networks on graphs with fast localized spectral
  filtering.
\newblock In \emph{Advances in Neural Information Processing Systems}, pages
  3844--3852, 2016.

\bibitem[Kipf and Welling(2017)]{Kip+2017}
T.~N. Kipf and M.~Welling.
\newblock Semi-supervised classification with graph convolutional networks.
\newblock In \emph{International Conference on Learning Representations}, 2017.

\bibitem[Merkwirth and Lengauer(2005)]{Mer+2005}
C.~Merkwirth and T.~Lengauer.
\newblock Automatic generation of complementary descriptors with molecular
  graph networks.
\newblock \emph{Journal of Chemical Information and Modeling}, 45\penalty0
  (5):\penalty0 1159--1168, 2005.

\bibitem[Scarselli et~al.(2009)Scarselli, Gori, Tsoi, Hagenbuchner, and
  Monfardini]{Sca+2009}
F.~Scarselli, M.~Gori, A.~C. Tsoi, M.~Hagenbuchner, and G.~Monfardini.
\newblock The graph neural network model.
\newblock \emph{Transactions on Neural Networks}, 20\penalty0 (1):\penalty0
  61--80, 2009.

\bibitem[Fout et~al.(2017)Fout, Byrd, Shariat, and Ben-Hur]{Fou+2017}
A.~Fout, J.~Byrd, B.~Shariat, and A.~Ben-Hur.
\newblock Protein interface prediction using graph convolutional networks.
\newblock In \emph{Advances in Neural Information Processing Systems}, pages
  6533--6542, 2017.

\bibitem[Ying et~al.(2018{\natexlab{b}})Ying, He, Chen, Eksombatchai, Hamilton,
  and Leskovec]{Yin+2018a}
R.~Ying, R.~He, K.~Chen, P.~Eksombatchai, W.~L. Hamilton, and J.~Leskovec.
\newblock Graph convolutional neural networks for web-scale recommender
  systems.
\newblock \emph{{ACM} {SIGKDD} International Conference on Knowledge Discovery
  {\&} Data Mining}, 2018{\natexlab{b}}.

\bibitem[Sch{\"{u}}tt et~al.(2017)Sch{\"{u}}tt, Kindermans, Sauceda, Chmiela,
  Tkatchenko, and M{\"{u}}ller]{Sch+2017}
K.~Sch{\"{u}}tt, P.~J. Kindermans, H.~E. Sauceda, S.~Chmiela, A.~Tkatchenko,
  and K.~R. M{\"{u}}ller.
\newblock {SchNet}: A continuous-filter convolutional neural network for
  modeling quantum interactions.
\newblock In \emph{Advances in Neural Information Processing Systems}, pages
  992--1002, 2017.

\bibitem[Hamilton et~al.(2017{\natexlab{b}})Hamilton, Ying, and
  Leskovec]{Ham+2017a}
W.~L. Hamilton, R.~Ying, and J.~Leskovec.
\newblock Representation learning on graphs: Methods and applications.
\newblock \emph{{IEEE} Data Engineering Bulletin}, 40\penalty0 (3):\penalty0
  52--74, 2017{\natexlab{b}}.

\bibitem[Morris et~al.(2016)Morris, Kriege, Kersting, and Mutzel]{Morris2016}
Christopher Morris, Nils~M. Kriege, Kristian Kersting, and Petra Mutzel.
\newblock Faster kernels for graphs with continuous attributes via hashing.
\newblock In Francesco Bonchi and Josep {Domingo-Ferrer}, editors, \emph{IEEE
  International Conference on Data Mining (ICDM)}, 2016.

\bibitem[Kar and Karnick(2012)]{Kar2012}
Purushottam Kar and Harish Karnick.
\newblock Random feature maps for dot product kernels.
\newblock In \emph{Proceedings of the Fifteenth International Conference on
  Artificial Intelligence and Statistics, {AISTATS} 2012, La Palma, Canary
  Islands, April 21-23, 2012}, pages 583--591, 2012.

\bibitem[Pham and Pagh(2013)]{Pham2013}
Ninh Pham and Rasmus Pagh.
\newblock Fast and scalable polynomial kernels via explicit feature maps.
\newblock In \emph{Proceedings of the 19th ACM SIGKDD International Conference
  on Knowledge Discovery and Data Mining}, KDD '13, pages 239--247, New York,
  NY, USA, 2013. ACM.
\newblock ISBN 978-1-4503-2174-7.
\newblock \doi{10.1145/2487575.2487591}.

\bibitem[Shawe-Taylor and Cristianini(2004)]{Shawe-Taylor2004}
J.~Shawe-Taylor and N.~Cristianini.
\newblock \emph{Kernel Methods for Pattern Analysis}.
\newblock Cambridge University Press, New York, 2004.

\bibitem[Shin and Kuboyama(2010)]{Shin2010}
K.~Shin and T.~Kuboyama.
\newblock A generalization of {H}aussler's convolution kernel — mapping
  kernel and its application to tree kernels.
\newblock \emph{Journal of Computer Science and Technology}, 25:\penalty0
  1040--1054, 2010.

\bibitem[Andoni(2009)]{Andoni2008}
A.~Andoni.
\newblock \emph{Nearest Neighbor Search: the Old, the New, and the Impossible}.
\newblock Phd~thesis, MIT, 2009.

\bibitem[Wale et~al.(2008)Wale, Watson, and Karypis]{Wale2008a}
N.~Wale, I.~A. Watson, and G.~Karypis.
\newblock Comparison of descriptor spaces for chemical compound retrieval and
  classification.
\newblock \emph{Knowledge and Information Systems}, 14\penalty0 (3):\penalty0
  347--375, 2008.

\bibitem[Chang and Lin(2011)]{Chang2011}
C.-C. Chang and C.-J. Lin.
\newblock {LIBSVM}: A library for support vector machines.
\newblock \emph{ACM Transactions on Intelligent Systems and Technology},
  2:\penalty0 27:1--27:27, 2011.
\newblock Software available at \url{http://www.csie.ntu.edu.tw/~cjlin/libsvm}.

\bibitem[Kersting et~al.(2016)Kersting, Kriege, Morris, Mutzel, and
  Neumann]{KKMMN2016}
Kristian Kersting, Nils~M. Kriege, Christopher Morris, Petra Mutzel, and Marion
  Neumann.
\newblock Benchmark data sets for graph kernels, 2016.
\newblock URL \url{http://graphkernels.cs.tu-dortmund.de}.

\bibitem[Swamidass et~al.(2005)Swamidass, Chen, Bruand, Phung, Ralaivola, and
  Baldi]{Swamidass2005}
Sanjay~Joshua Swamidass, Jonathan Chen, Jocelyne Bruand, Peter Phung, Liva
  Ralaivola, and Pierre Baldi.
\newblock Kernels for small molecules and the prediction of mutagenicity,
  toxicity and anti-cancer activity.
\newblock \emph{Bioinformatics}, 21 Suppl 1:\penalty0 i359--i368, Jun 2005.
\newblock \doi{10.1093/bioinformatics/bti1055}.

\bibitem[Shi et~al.(2009)Shi, Petterson, Dror, Langford, Smola, and
  Vishwanathan]{Shi2009}
Qinfeng Shi, James Petterson, Gideon Dror, John Langford, Alex Smola, and
  S.V.N. Vishwanathan.
\newblock Hash kernels for structured data.
\newblock \emph{Journal of Machine Learning Research}, 10:\penalty0 2615--2637,
  December 2009.
\newblock ISSN 1532-4435.

\end{thebibliography}

\end{document}